\theoremstyle{plain}
\newtheorem{thm}{Theorem}[section]
\newtheorem{lem}[thm]{Lemma}
\theoremstyle{definition}
\newtheorem{defn}[thm]{Definition}
\theoremstyle{remark}
\newtheorem{remark}[thm]{Remark}
\DeclareMathOperator*{\argmin}{arg\,min}
\title{Learning the Pareto Front Using Bootstrapped Observation Samples}
\author{%
  Wonyoung Kim\thanks{Correspondence to Wonyoung Kim (wk2389@columbia.edu)} \\
  %Department of Industrial Engineering and Operations Research\\
  Columbia University\\
  %New York \\
  %\texttt{wk2389@columbia.edu} \\
  % examples of more authors
  \And
  Garud Iyengar \\
  %Department of Industrial Engineering and Operations Research\\
  Columbia University\\
  %New York \\
  %\texttt{garud@ieor.columbia.edu}
   \And
  Assaf Zeevi \\
  %Graduate School of Business Administration\\
  Columbia University\\
  %New York \\
  %\texttt{assaf@gsb.columbia.edu}
  % \And
  % Coauthor \\
  % Affiliation \\
  % Address \\
  % \texttt{email} \\
  % \And
  % Coauthor \\
  % Affiliation \\
  % Address \\
  % \texttt{email} \\
}
\begin{document}

%-------------------------------
% Macros
%------------------------------
\def\gi#1{\textcolor{purple}{#1}}
\def\wy#1{\textcolor{blue}{#1}}

\global\long\def\Real{\mathbb{R}}%
\global\long\def\Natural{\mathbb{N}}%
\global\long\def\Expectation{\mathbb{E}}%
\global\long\def\Probability{\mathbb{P}}%
\global\long\def\Var{\mathbb{V}}%
\global\long\def\CE#1#2{\Expectation\left[\left.#1\right|#2\right]}%
\global\long\def\CP#1#2{\Probability\left(\left.#1\right|#2\right)}%
\global\long\def\abs#1{\left|#1\right|}%
\global\long\def\norm#1{\left\Vert #1\right\Vert }%
\global\long\def\Indicator#1{\mathbb{I}\left(#1\right)}%
\global\long\def\mP{\mathcal{P}}%
\global\long\def\mA{\mathcal{A}}%
\global\long\def\mD{\mathcal{D}}%
\global\long\def\mE{\mathcal{E}}%
\global\long\def\mS{\mathcal{S}}%
\global\long\def\mC{\mathcal{C}}%
\global\long\def\mX{\mathcal{X}}%
\global\long\def\Parameter#1{\theta_{\star}^{\langle#1\rangle}}%
\global\long\def\Stackedparameter#1#2{\Theta_{#2}^{\star(#1)}}%
\global\long\def\Setofcontexts#1{\mathcal{X}_{#1}}%
\global\long\def\Reward#1#2#3{Y_{#2,#3}^{\langle #1\rangle}}%
\global\long\def\Vecreward#1#2{\mathbf{Y}_{#1,#2}}%
\global\long\def\Ereward#1#2{y_{#2}^{\langle #1\rangle}}%
\global\long\def\ErewardVec#1{y_{#1}}%
\global\long\def\Hreward#1#2#3{\widehat{y}_{#2,#3}^{(#1)}}%
\global\long\def\Context#1{x_{#1}}%
\global\long\def\Tx#1#2{\check{X}_{#1,#2}}%
\global\long\def\Ty#1#2#3{\check{Y}_{#2,#3}^{\langle #1 \rangle}}%
\global\long\def\Newy#1#2#3{\tilde{Y}_{#2,#3}^{\langle #1\rangle}}%
\global\long\def\Error#1#2{\eta_{#2}^{(#1)}}%
\global\long\def\Econtext#1{x_{#1}}%
\global\long\def\Estimator#1#2{\widehat{\theta}_{#2}^{\langle#1\rangle}}%
\global\long\def\Newcontext#1#2{\tilde{X}{}_{#1,#2}}%
\global\long\def\StackedEstimator#1{\widehat{\Theta}_{#1}}%
\global\long\def\DREstimator#1{\widehat{\Theta}_{#1}^{DR}}%
\global\long\def\IPW#1#2{\widehat{\theta}_{#2}^{IPW(#1)}}%
\global\long\def\PseudoY#1#2#3{\widehat{Y}_{#2,#3}^{\langle#1\rangle}}%
\global\long\def\TildeError#1#2#3{\tilde{\eta}_{#2,#3}^{(#1)}}%
\global\long\def\NewProb#1#2{\phi_{#1,#2}}%
\global\long\def\StackedContext#1#2#3{\mathbf{X}_{#1,#2,#3}}%
\global\long\def\Action#1{a_{#1}}%
\global\long\def\PseudoAction#1{\tilde{a}_{#1}}%
\global\long\def\Filtration#1{\mathcal{F}_{#1}}%
\global\long\def\History#1{\mathcal{H}_{#1}}%
\global\long\def\XX#1#2{\boldsymbol{X}_{#1,#2}}%
\global\long\def\sign#1{\text{sign}(#1)}%
\global\long\def\Impute#1#2{\check{\theta}_{#1}^{\langle#2\rangle}}%
\global\long\def\Maxeigen#1{\lambda_{\max}\!\left(#1\right)}%
\global\long\def\Mineigen#1{\lambda_{\min}\!\left(#1\right)}%
\global\long\def\Trace#1{\text{Tr}\left(#1\right)}%
\global\long\def\SetofContexts#1{\mathcal{X}_{#1}}%
\global\long\def\Ridgebeta#1{\widehat{\beta}_{#1}^{ridge}}%
\global\long\def\PFIwR{\texttt{PFIwR}}%

\maketitle

\begin{abstract}
We consider Pareto front identification~(PFI) for linear bandits (PFILin), i.e., the goal is to identify a set of arms with undominated 
mean reward vectors when the mean reward vector is a linear function of the context. 
PFILin includes the best arm identification problem and multi-objective active learning as special cases.   
The sample complexity of our proposed algorithm is optimal up to a logarithmic factor.   
In addition, the regret incurred by our algorithm during the estimation is within a logarithmic factor of the optimal regret among all algorithms that identify the Pareto front.
Our key contribution is a new estimator that in every round updates the estimate for the unknown parameter along \emph{multiple} context directions -- in contrast to the conventional estimator that only updates
the parameter estimate along the chosen context. 
This allows us to use low-regret arms to collect information about Pareto optimal arms. 
Our key innovation is to reuse the exploration
samples multiple times; in contrast to conventional estimators that use each sample only once. 
Numerical experiments demonstrate that the proposed algorithm successfully identifies the Pareto front while controlling the regret. 
\end{abstract}

%-------------------------
% 1. Introduction
%------------------------

\section{Introduction}
Consider a setting where one has to select among a finite set of actions that have multiple different characteristics, see, e.g.,  ~\citep{lizotte2010efficient,van2014multi,lin2019pareto}. 
A classical example is prescribing a drug to a patient, where one needs to consider its efficacy, toxicity, and potentially all its side effects.  
The efficacy and various side effects typically also depend on patient characteristics. 
Such examples can be found also in online platforms, e-commerce sites, and are pertinent to the design of most recommender systems.   

The problem of selecting an action that has multiple attributes is typically modeled using the concept of  Pareto optimality, and the learning problem reduces to identifying the Pareto front~\citep{goel2007response}, i.e. the set of actions that are not dominated, and therefore, potentially optimal for some user. 
We consider Pareto front identification~(PFI) for linear bandits (PFILin), where the attributes of each action are a linear function of an associated context. 
PFILin generalizes both the best arm identification (BAI) problems
and PFI for MABs. 
We propose an algorithm \PFIwR\ whose sample complexity is optimal to within logarithmic factors.

A ``good'' PFI algorithm should ideally have a both low sample complexity as well as low regret during the identification period. 
\citet{degenne2019bridging,zhong2023achieving} discuss the trade-off between regret and sample complexity in the context of BAI for drug testing. 
Such considerations are also important for e-commerce platforms where high regret could lead to low customer satisfaction and underexposure of products. 
We show \PFIwR\ has close to optimal (within logarithmic factors) regret among all PFI
algorithms.  

In particular, the Pareto front in the multi-objective setting typically has multiple arms, and hence, an algorithm may be forced to collect samples from high regret arms in order to decide whether it is on the Pareto front and minimizing regret is more challenging.
In the linear bandit setting, an algorithm must carefully choose actions so that corresponding contexts support efficient parameter estimation~\citep{soare2014best,tao2018best}.
Consequently, the challenging part of designing an algorithm is to allow suitable exploration and identification of the Pareto front, while controlling for the regret associated with these
arm choices.

To resolve this challenge, we propose the \emph{exploration-mixed estimator} which ``mixes'' the observations during an exploitation round with bootstrapped samples from a previous exploration round, i.e., the estimator ``recycles" the samples in the exploration phase.  
The recycling is key in enabling the estimator to update along several context directions in every round.  
This allows us to explore high-regret actions only for logarithimically increasing exploration rounds, and exploiting low-regret actions after that.
However, recycling samples may cause dependency, and higher estimation error as compared to that of the conventional estimators.
We offset the higher error of the exploration-mixed estimator, by using a doubly-robust (DR) estimator \citep{bang2005doubly}, that is robust to the error of the estimator used to impute the rewards for actions that are not selected.   
These methods ensure we can simultaneously learn rewards for PFI and select arms to minimize regret.
The main contributions of this paper are as follows:
\begin{enumerate}[(i)]
%\item We introduce the Pareto front identification problem for linear
%bandits (PFILin), which generalizes both the best arm identification
%(BAI) problems and PFI for MAB.  
%\wy{We present a lower bound on the sample complexity that is
%characterized by a problem-dependent
%gap~(Theorem~\ref{thm:lower_bound}).} 
\item We introduce a novel estimation procedure for linear bandit feedback
  that ensures $\tilde{O}(\sqrt{d/t})$ convergence rate for the reward
  vectors of \emph{all} arms while largely exploiting low regret
  arms~(Theorem~\ref{thm:self}).    
This uniform convergence is possible due to two innovations: (i) the novel
\emph{exploration-mixed estimator} that reuses the observations in the
past exploration rounds (Section~\ref{sec:mix}); and (ii) construction of
a DR estimate for unobserved rewards which is robust to the error of the
exploration-mixed estimator (Section~\ref{sec:DR}). 
\item We apply the novel estimation paradigm to PFILIn and propose a new
  algorithm \texttt{PFIwR} with sample complexity that is optimal up to
  logarithmic factors, and has $\tilde{O}(\sqrt{d/t})$ Pareto regret in round
  $t$ with context dimension $d$, after $O(d^3\log dt^2)$ initial
  exploration rounds independent of the problem complexity
  (Theorem~\ref{thm:regret}).   
  Further, the algorithm is shown to achieves optimal order regret among all
  PFI algorithms (Theorem~\ref{thm:regret_lower_bound}).   
\item Experimental results clearly show the estimator converges on the
  rewards of all contexts while exploiting low-regret arms,  and
  \texttt{PFIwR} has significantly superior performance to previously
  known algorithms for {\it both} PFI and regret minimization.   
\end{enumerate}

%------------------------
% Table 1
%------------------------
\begin{table*}[t]
\caption{A comparison of the related works in terms of settings and theoretical guarantees.}
\label{tab:related_works}
\vskip -20pt
\begin{center}
\begin{scriptsize}
\begin{sc}
\begin{tabular}{lccccr}
\toprule
 & Bandit Setting & Multi-objective? & Regret Bound? & PAC bound? \\
\midrule
%\citet{even2002pac} & Multi-armed & $\times$ & $\times$ & $\surd$ \\
%\citet{auer2002using} & Multi-armed & $\times$ & $\surd$ & $\times$ \\
\citet{valko2013finite} & Kernel & $\times$ & $\surd$ & $\times$\\
\citet{soare2014best} & Linear & $\times$ &  $\times$ & $\surd$\\
\citet{zuluaga2016varepsilon} & Gaussian Process & $\surd$ & $\times$ & $\surd$ \\
\citet{auer2016pareto} & Multi-armed & $\surd$ & $\times$ & $\surd$ \\
\citet{lu2019multi} & Generalized linear & $\surd$ &  $\surd$ & $\times$ \\
\citet{degenne2019bridging} & Multi-armed & $\times$ & $\surd$ & $\surd$\\
\citet{zhong2023achieving} & Multi-armed & $\times$ & $\surd$ & $\surd$ \\
Our work   & Linear & $\surd$ & $\surd$ & $\surd$ \\
\bottomrule
\end{tabular}
\end{sc}
\end{scriptsize}
\end{center}
\vskip -10pt
\end{table*}

%---------------------------
% 2. Related works
%---------------------------
\section{Related Work}
The typical approach in multi-objective rewards is to scalarize the problem by either setting the objective to be a weighted combination of all the objective~\citep{roijers2017interactive,roijers2018interactive,wanigasekara2019learning}, or optimizing one while imposing constraints on the rest~\citep{agrawal2016linear,kim2023improved}.
While these approaches identify only one action on the Pareto front,  
%and the action is a function of the particular weights or constraint settings. 
we identify all actions on the Pareto front, i.e. identify the set of actions that are potentially optimal for any scalarization approach. 

Table~\ref{tab:related_works} compares our contribution with the existing bandit literature.
The PFILin problem is a generalization of the BAI 
\citep{even2002pac,soare2014best} and single-objective regret minimization \citep{auer2002using,valko2013finite} to the multi-objective vector rewards.  
Existing algorithms for multi-objective PFI problems have focused on the Gaussian reward setting~\citep{zuluaga2016varepsilon} and non-contextual MAB setting~\citep{auer2016pareto}, and the optimal regret guarantees remain open.  
\citet{lu2019multi} proposed an algorithm that achieves a bound on regret for multi-objective contextual bandits; however the identification of \emph{all} arms in the Pareto front is not established. 
%since identifying \emph{one} arm in the Pareto front is sufficient for Pareto regret minimization.  
While \citet{degenne2019bridging} and \citet{zhong2023achieving} obtained theoretical guarantees for both regret and sample complexity for non-contextual single-objective rewards, extension to linear and multi-objective rewards remains open. 
%We establish bounds on the regret and sample complexity for multi-objective linear bandits.

%\citet{kim2019doubly,kim2021doubly,kim2023improved,kim2022squeeze,kim2024doubly} investigate doubly-robust estimators to impute pseudo-rewards and incorporate contexts of both selected and unselected arms in a single-objective linear bandit setting.   
%The focus this work is on minimizing the regret for one dimensional reward when the contexts are iid or the number of arms is bounded (and, the regret bound depends on the number of arms), whereas PFILin has fixed (non-stochastic) contexts, and possibly an exponentially large number of arms.   
%Therefore, their results do not extend to PFILin.  We develop novel doubly-robust estimator to establish efficient algorithms for PFILin.

%-------------------------
% 3, PFILin
%-------------------------

\section{Problem Formulation: Pareto Front Identification for Linear Bandits}
\label{sec:PFI} 
% In this section, we present the problem formulation and a lower bound of
% the sample complexity for PFILin.  
For a positive integer $N$, let $[N]:=\{1,\ldots,N\}$. 
%For $z \in \Real^{L}$ and $\ell \in [L]$, let $z_{(\ell)}$ denote the $\ell$-th entry of $z$. 
In PFILin, an action $k \in [K]$ is associated with a known
$d$-dimensional context vector $x_{k}\in\Real^{d}$. Let
$\mX:=\{x_{1},\ldots,x_{K}\}$.  
Without loss of generality, we assume $\|x_{k}\|_{2}\le1$ and, as is standard in this literature (e.g., \citet{tao2018best}), we assume that
$\mX$ spans $\Real^d$. 

In period $t$, the decision-maker chooses an $\Action t\in[K]$, and observes a sample of the random reward vector $\Vecreward{\Action{t}}{t}
=\Theta_{\star}^{\top}x_{a_{t}}+\eta_{t}$, where $\Theta_{\star}:=(\Parameter 1,\ldots,\Parameter L)\in\Real^{d\times L}$ is the unknown (but fixed) parameters with $\|\Parameter \ell\|_{2}\le \theta_{\max}$, for all $\ell \in [L]$, and $\eta_{t}\in\Real^{L}$ is a mean-zero, $\sigma$-sub-Gaussian random error vector that is independent of actions $\{\Action s\}_{s\in[t-1]}$, and other error vectors $\{\eta_{s}\in\Real^{L}:s\neq t\}$; however, we allow for the $L$ components of $\eta_{t}$ to be correlated.
Let $y_{k}:=\Theta_{\star}^{\top}x_{k}=\Expectation[\Vecreward kt] \in \Real^L$ denote the true mean reward vector for arm $k\in[K]$.  
We want to identify the Pareto front of the $\{y_{k}\}_{k\in[K]}$ defined as follows.

\begin{defn}[Pareto Front]
For vectors $a=(a^{(1)},\ldots,a^{(L)}$, $b=(b^{(1)},\ldots,b^{(L)}) \in\Real^{L}$, the vector $b$ \emph{dominates} $a$ (denoted by $a \prec b$) if $a^{(\ell)} \le b^{(\ell)}$, for all $\ell \in [L]$ and there exists $\ell \in [L]$ such that $a^{(\ell)} < b^{(\ell)}$.
The \emph{Pareto front} $\mP_{\star}:=\{ k\in[K]|\nexists k^{\prime}:y_{k}\prec y_{k^{\prime}}\}$ is a set of arms whose mean reward vector is not dominated by the reward of any other arm.  
\end{defn}
To identify the Pareto front $\mP_{\star}$, one must compute a reasonable estimate for the entire set of reward vectors $\{y_{k}\}_{k\in[K]}$.   
Following~\citet{auer2016pareto}, let
\[
  m(k,j) =\min\big\{\alpha\ge0\big|\exists \ell \in [L]:\Ereward \ell k+\alpha\ge\Ereward \ell j\big\} 
  =\max\big\{0,\; \min_{\ell \in [L]}(\Ereward \ell j-\Ereward \ell k)\big\}.
\]
% we define the degree $m(k,j)$ by which
denote the amount by which
arm~$j$ dominates arm~$k$. 
We have $m(k,j) > 0$ if and only if $\Ereward{\ell}{k} <\Ereward{\ell}{j}$, for all $\ell \in [L]$.
Therefore, the distance
$\Delta_{k}^{\star}:=\max_{k_{\star}\in\mP_{\star}}m(k,k_{\star})$ denotes the
minimum amount by which each component of the reward vector $y_{k}$ must
be increased to ensure that action~$k$ is not dominated by any Pareto optimal
action $k_{\star} \in \mP_{\star}$. 
By definition, the distance $\Delta_{k_{\star}}^{\star}=0$ for all Pareto
optimal actions $k_{\star}\in\mP_{\star}$. 
% With the distance $\Delta_{k}^{\star}$, we present
Next, we define the PFI 
success condition. 
\begin{defn}[PFI success condition]
For precision $\epsilon>0$ and confidence $\delta\in(0,1)$, a
PFI~algorithm must output a set of arms $\mP\subseteq[K]$ such that, with
probability at least $1-\delta$, 
\begin{equation}
\mP_{\star}\subseteq\mP\;\text{ and
}\;\Delta_{k}^{\star}\le\epsilon,\text{ for all
}k\in\mP\setminus\mP_{\star}.\label{eq:PFI_condition} 
\end{equation}
\end{defn}
The first condition in \eqref{eq:PFI_condition} ensures that $\mP$
contains the Pareto optimal set $\mP_{\star}$, and the second condition guarantees that the
set $\mP$ only includes arms sufficiently close to the Pareto front.  
Let $\tau_{\epsilon,\delta}$ denote the number of samples required for an
algorithm to meet the success condition~\eqref{eq:PFI_condition}.  
Then the cumulative regret $R(\tau_{\epsilon,\delta})$ of an algorithm
until round $\tau_{\epsilon,\delta}$ is defined as  
\begin{equation}
R(\tau_{\epsilon,\delta}):=\sum_{t=1}^{\tau_{\epsilon,\delta}}\Delta_{\Action
  t}^{\star}:=\sum_{t=1}^{\tau_{\epsilon,\delta}}\max_{k^{\star}\in\mP_{\star}}m(\Action
t,k^{\star}), 
\label{eq:Pareto_regret}
\end{equation}
where $\Action t$ denotes the action selected by the algorithm. 
Our goal is to \emph{simultaneously} establish an upper bound of the sample complexity $\tau_{\epsilon,\delta}$ and the Pareto regret
$R(\tau_{\epsilon,\delta})$. 

%Next, we establish a lower bound for PFILin with a problem complexity measure.
For a Pareto sub-optimal arm $k$, if the estimate $\hat{y}_k$ of the reward vector of arm $k$ has error  $\min_{\ell \in [L]}|\hat{y}_k^{\langle \ell \rangle}
- y_k^{\langle \ell \rangle}| > \Delta_{k}^{\star}$, it can erroneously appear Pareto optimal.  
Therefore, the required accuracy for a suboptimal arm $k \not\in \mP_{\star}$ is $\Delta_{k}=\Delta_{k}^{\star}$.  
Since the number of arms on the Pareto front is unknown, the algorithm must decide whether the remaining arms are all Pareto optimal or not to terminate.
Thus, we need another complexity measure,
\[
M(k,j) :=\min\big\{\alpha\ge0\big|\forall \ell \in [L]:\Ereward lk\le\Ereward lj+\alpha\big\}
=\max\big\{0, \;\max_{\ell \in [L]}\big(\Ereward lk-\Ereward lj)\big\},
\]
which is the amount by which each component of the mean reward of arm $j$ must be increased so that $k$ is weakly dominated by $j$.  
Note that $M(k,j)=0$ if and only if $\ErewardVec k\preceq\ErewardVec j$.
Fix a Pareto optimal arm $k$.
If the reward for arm $k$ is underestimated by $M(k,j)$ with respect to a Pareto optimal arm~$j$, it may appear weakly dominated by $j$.
Thus, in order to prevent misidentifying the Pareto optimal arm $k$ as a suboptimal arm, the error of the estimator has to be at most $\Delta_{k}^{+}:=\min_{j\in\mP_{\star}\backslash\{k\}}\min\left\{M(k,j),M(j,k)\right\}$. 
Next, consider a suboptimal arm $j$.
If the error of the estimator is greater than $M(j,k)+\Delta^{\star}_{j}$,
the Pareto optimal arm $k$ may appear dominated by suboptimal arm $j$.  
In order to distinguish the Pareto optimal arm $k$ from the Pareto suboptimal arms, the error of the estimator has be to at most $\Delta^{-}_{k}:=\min_{j\notin\mP_{\star}}\{M(j,k)+\Delta^{\star}_{j}\}$. 
In summary, to identify whether arm $k$ is in Pareto front, the estimation error has to be at most
\begin{equation}
\Delta_{k}:=\Delta_{k}^{\star},\;k\notin\mP_{\star},\text{ and }\Delta_{k}:=\min\{\Delta_{k}^{+},\Delta_{k}^{-}\},\;k\in\mP_{\star}.
\label{eq:est_acc}
\end{equation}
We index the arms in increasing order of required accuracy, i.e. $\Delta_{(1)} \le \cdots \le \Delta_{(K)}$.

\begin{thm}[A lower bound of the sample complexity for PFILin.]
\label{thm:lower_bound}
Fix $\epsilon>0$, and let 
$\Delta_{(k),\epsilon}:=\max\{\Delta_{(k)},\epsilon\}$.
Suppose the set of context vectors $\mX$ spans $\Real^{d}$ and $\|\Parameter{\ell}\|_{0} = d$, for all $\ell \in [L]$. 
Then, for any $\delta\in(0,1/4)$ and $\sigma>0$, there exist a $\sigma$-Gaussian distribution for the i.i.d. noise sequence $\{\eta_{t}\}_{t\ge1}$ such that any algorithm requires at least
$(\sigma^{2}/3)\sum_{k=1}^{d}\Delta_{(k),\epsilon}^{-2} \log(3L/4\delta)$
%\[
%\sum_{k=1}^{d} \frac{\sigma^2}{3\Delta_{(k),\epsilon}^2} \log \frac{3L}{4\delta}
%\]
rounds to meet the success condition~\eqref{eq:PFI_condition}. 
\end{thm}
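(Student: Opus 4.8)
The plan is to establish the bound by an information-theoretic change-of-measure argument in the style of the transportation lemma of Kaufmann, Capp\'e and Garivier, specialized to the multi-objective linear Gaussian model. First I would fix the hard noise to be $\eta_{t}\sim\mathcal{N}(0,\sigma^{2}I_{L})$, so that pulling arm $a$ produces a reward vector drawn from $\mathcal{N}(\Theta_{\star}^{\top}x_{a},\sigma^{2}I_{L})$; for two parameter matrices $\Theta_{\star},\Theta'$ the per-pull Kullback--Leibler divergence is then $\frac{1}{2\sigma^{2}}\norm{(\Theta_{\star}-\Theta')^{\top}x_{a}}_{2}^{2}$. For any $\delta$-correct PFI algorithm and any alternative $\Theta'$ whose correct output under \eqref{eq:PFI_condition} differs from that of $\Theta_{\star}$, the transportation inequality gives, with $N_{a}(\tau)$ the number of pulls of arm $a$ up to the stopping time,
\[
\sum_{a=1}^{K}\Expectation_{\Theta_{\star}}\!\big[N_{a}(\tau)\big]\,\frac{\norm{(\Theta_{\star}-\Theta')^{\top}x_{a}}_{2}^{2}}{2\sigma^{2}}\;\ge\;\mathrm{kl}(\delta,1-\delta),
\]
where $\mathrm{kl}$ is the binary relative entropy. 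This reduces the task to constructing a good family of alternatives and combining the resulting constraints.

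The core step is to build a family of perturbations that (i) forces a different PFI answer, (ii) localizes its cost to a single feature direction, and (iii) exploits the $L$ objectives to sharpen the confidence term. Using the spanning assumption on $\mX$ and the full-support assumption $\norm{\Parameter{\ell}}_{0}=d$, I would pick $d$ orthonormal directions $v_{1},\dots,v_{d}\in\Real^{d}$ and, for each hard arm index $(k)$ with $k\in[d]$, take $\Theta'=\Theta_{\star}-c_{k}\,v_{k}e_{\ell}^{\top}$, which changes exactly one objective $\ell$ and shifts the reward of arm $(k)$ by an amount just exceeding its required accuracy $\Delta_{(k),\epsilon}$, thereby flipping whether arm $(k)$ is (near-)dominated; the magnitude $c_{k}$ is tuned so that the induced reward change is of order $\Delta_{(k),\epsilon}$. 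The full-support hypothesis guarantees that every one of the $d$ coordinate directions is genuinely active, so that all $d$ terms in the sum are forced. Because the adversary may place the perturbation in any one of the $L$ objectives, the algorithm must simultaneously reject $L$ such alternatives per direction; bundling these into a composite hypothesis test over the $L$ objectives (a mixture/selection change of measure) replaces the binary term $\mathrm{kl}(\delta,1-\delta)$ by $\log(3L/4\delta)$, which is the origin of the factor $L$ inside the logarithm.

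With these alternatives in hand I would sum the $d$ transportation inequalities. The observation enabling a \emph{sum} rather than a maximum is orthogonality: for a single sample at arm $a$ the KL contribution is $\frac{c_{k}^{2}(x_{a}^{\top}v_{k})^{2}}{2\sigma^{2}}$, and $\sum_{k=1}^{d}(x_{a}^{\top}v_{k})^{2}=\norm{x_{a}}_{2}^{2}\le1$, so no single allocation can cheaply satisfy all $d$ constraints at once. Combining the per-direction lower bound $\propto\Delta_{(k),\epsilon}^{2}$ on the divergence with this budget constraint and solving the resulting allocation problem yields $\tau\ge(\sigma^{2}/3)\sum_{k=1}^{d}\Delta_{(k),\epsilon}^{-2}\log(3L/4\delta)$, the constant $1/3$ absorbing the anti-concentration slack in the composite test and the $\mathrm{kl}$ bound.

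The main obstacle I anticipate is precisely steps (ii)--(iii) in the linear setting: unlike the multi-armed case, a sample at any arm informs the reward estimate of every arm, so the transportation constraints do not decouple automatically. Producing the clean sum $\sum_{k=1}^{d}\Delta_{(k),\epsilon}^{-2}$ requires choosing the $d$ perturbation directions orthogonal in feature space and verifying, via the spanning and full-support conditions, that each perturbation genuinely changes the Pareto answer for arm $(k)$ while leaving the classification of the other hard arms intact. Equally delicate is the bookkeeping that turns the $L$ simultaneous single-objective alternatives into the $\log(3L/4\delta)$ confidence term without destroying the orthogonality structure that the summation relies on.
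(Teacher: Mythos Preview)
Your change-of-measure plan with Gaussian noise and parameter perturbations is a different route from the paper's, and the step you yourself flag as delicate---the orthonormal summing trick---does not close as written.

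The paper does not perturb $\Theta_\star$. It first proves an estimation lower bound (Lemma~\ref{lem:lower_noise}) by parameterizing the \emph{noise law} itself: for each arm $k$ the noise is a discrete two-point $\sigma$-sub-Gaussian variable whose distribution depends on a hidden bit $v_k\in\{-1,1\}$, while the mean stays zero so $\Theta_\star$ is fixed. Identifying arm $k$'s reward to the required accuracy forces the learner to recover $v_k$, and crucially information about $v_k$ comes \emph{only} from pulls of arm $k$---the linear structure cannot help. Assouad's hypercube method over $v\in\{-1,1\}^d$ together with Bretagnolle--Huber gives the per-arm sample requirement; a pigeonhole over the $d$ hardest arms produces the sum $\sum_{k=1}^{d}\Delta_{(k),\epsilon}^{-2}$. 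The $\log(3L/4\delta)$ factor is obtained afterward and separately, from independence of the $L$ objective noises: $\Probability(\text{all }L\text{ objectives accurate})\le(1-p)^{L}$, and one sets $p=4\delta/(3L)$.

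The gap in your plan is quantitative. The reward shift for arm $(k)$ under $\Theta'=\Theta_\star-c_kv_ke_\ell^\top$ is $c_k\,x_{(k)}^\top v_k$, so flipping its status needs $c_k\ge\Delta_{(k),\epsilon}/|x_{(k)}^\top v_k|$. After dividing each transportation constraint by $c_k^{2}$ and summing via $\sum_k(x_a^\top v_k)^2=\|x_a\|_2^2\le1$, what you actually obtain is
\[
\tau\;\ge\;2\sigma^{2}R\sum_{k=1}^{d}\frac{|x_{(k)}^\top v_k|^{2}}{\Delta_{(k),\epsilon}^{2}},
\]
not $\sum_k\Delta_{(k),\epsilon}^{-2}$. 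Nothing in the spanning or full-support hypotheses forces $|x_{(k)}^\top v_k|$ to be uniformly bounded below for an orthonormal basis $\{v_k\}$; indeed the $d$ hardest arms need not even have linearly independent contexts, so the sum can degenerate. This is exactly the coupling you anticipated: a parameter perturbation moves \emph{every} arm's reward, and you cannot simultaneously localize the effect to arm $(k)$ (which would want $v_k$ dual to $x_{(k)}$) and keep the directions orthonormal (which your summing identity needs). The paper's noise-side construction is designed precisely to break this tension. Your handling of the $L$ factor is also incomplete: the transportation lemma delivers $\mathrm{kl}(\delta,1-\delta)$ per alternative, and the passage to $\log(3L/4\delta)$ requires the product structure across objectives to be invoked explicitly, as the paper does, rather than a heuristic ``composite test''.
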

Theorem~\ref{thm:lower_bound} generalizes the lower bound in \citet{auer2016pareto} to the linear bandit setting.
Since $\Parameter{\ell} \in \Real^d$, the number of rounds required for PFI depends only on the $d$ smallest gaps instead of all $K$ gaps.

%--------------------
% 4. Estimation Consideration
%--------------------
\section{Estimating Rewards with Low-Regret Actions}
\label{sec:estimation}
\textbf{Overview.} Our main contribution is a   novel estimation strategy
that simultaneously learns rewards of \emph{all} actions while largely
exploiting low-regret arms. We address the following two main challenges: 
(i) the number of arms $K$ can be exponentially large; 
(ii) exploiting the low-regret actions may not % give
yield the
information required to learn the rewards of unexploited arms.
We resolve (i) by reducing $K$ context vectors into $d$ basis vectors
(Section~\ref{sec:dimension_reduction}); and (ii) by reusing the reward
samples in the exploration phase (Section~\ref{sec:mix}) along with
doubly-robust estimation (Section~\ref{sec:DR}) to compensate for
dependencies that arise from the data  ``reuse"  (imputation) scheme.  
Our strategy is applicable to more broadly to online learning problems 
under linear bandit feedback, e.g., BAI~\citep{tao2018best}, policy
optimization in reinforcement learning~\citep{he2021uniform}. We now
describe in more detail each ingredient in our approach and its
theoretical properties.  

\subsection{Exploration Strategy with Context Basis}
\label{sec:dimension_reduction}
Let $X=[x_1,\ldots,x_K]\in \Real^{d\times K}$ denote the matrix of contexts vectors. 
Using the (reduced) singular value decomposition (SVD), one can % find
compute
orthonormal vectors $\{u_i\in\Real^d:i\in[d]\},
\{v_i\in\Real^{K}:i\in[d]\}$ and scalars $\{\lambda_i \geq 0: i = 1,
\ldots, d\}$ such that $X = \sum_{i=1}^d \lambda_i u_i v_i^\top$. 
Thus, it follows that $v_i^\top X^\top \Parameter{\ell} = \sqrt{\lambda_i}
u_i^\top \Parameter{\ell}$, for $\ell \in [L]$ and $i \in [d]$. 
For each $i\in[d]$, let $\pi^{(i)} \in \Real^{K}_+$ denote the probability mass function 
$\pi^{(i)}_k = |v_{ik}|/\|v_i\|_1$ over actions $k\in[K]$.
Then, for a randomized action $a \sim \pi^{(i)}$, we have
\vspace{-5pt}
\begin{equation}
\Expectation\left[\|v_{i}\|_{1}\sign{v_{ia}}\Reward{\ell}{a}{s}\right]
= \Expectation \Big[\sum_{k=1}^Kv_{ik}\Reward{\ell}{k}{s}\Big] =  \sum_{k=1}^{K}v_{ik}x_{k}^{\top}\Parameter{\ell}\\
=v_{i}^{\top}X^{\top}\Parameter{\ell}=(\sqrt{\lambda_{i}}u_{i})^{\top}\Parameter{\ell}, 
\label{eq:linear_basis}
\end{equation}
Thus, $\sum_{k=1}^Kv_{ik}\Reward{\ell}{k}{s}$ % for $a \sim \pi^{(i)}$
can be viewed as the random reward
corresponding to the ``context basis'' $\sqrt{\lambda_i} u_i$.  Note that
there is no ``pure'' action that corresponds to the 
``context basis'' -- it corresponds to a randomized mixture of actions. We
will combine these randomized actions with pure actions to efficiently
learn the parameter $\Theta_{\star}$. 
Because $XX^\top = \sum_{k=1}^{K} x_k x_k^\top = \sum_{i=1}^{d} \lambda_i
u_i u_i^\top$, sampling $a_i \sim \pi^{(i)}$ for $i\sim\text{unif}[d]$
yields the design matrix $d^{-1}\sum_{k=1}^{K} x_k x_k^\top$ that
satisfies $\max_{k\in[K]} \|x_k\|^2_{(d^{-1}\sum_{k^\prime=1}^{K}
  x_{k^\prime} x_{k^\prime}^\top)^{-1}} \le d$ (see
Section~\ref{sec:optimal_design} for details).  

For each $t\ge1$, let $\mE_t \subset [t]$ denote the set of rounds reserved for exploration.   
Fix a confidence level $\delta\in(0,1)$, and let $\gamma_t:=
C d^{3}\log(2dt^{2}/\delta)$ where $C$ is an absolute constant specified in~\eqref{eq:C}. 
Define $\mE_0:=\emptyset$, and in each round $t\ge1$, sample a basis index
$i_t \sim \text{unif}[d]$ and sample the action $\check{a}_{t} \sim
\pi^{(i_t)}$ according to corresponding probability mass function. 
Define 
\begin{equation}
\mE_{t}:=\mE_{t-1}\cup\{t\}\text{ if
} \sum_{u\in\mE_{t}}\Indicator{a_{u}=\check{a}_{t}}
\le\frac{\gamma_{t}}{t}\sum_{s=1}^{t}\Indicator{\check{a}_{s}=\check{a}_{t}},    
\qquad 
\mE_{t}:=\mE_{t-1}\text{, otherwise}. 
\label{eq:exploration_set}
\end{equation}
This definition is to ensure that the number of actions in $\mE_t$ increases logarithmically in $t$, and ensures that $\check{a}_t \sim \pi^{(i_t)}$, $i_t \sim \text{unif}[d]$. 
% aligned with the distribution of $\check{a}_{i_t}\sim\pi^{(i_t)}$. 
By construction, $\sum_{u\in\mE_{t}} \mathbb{I}(a_u=\check{a}_{t}) \ge
(\gamma_t/t) \sum_{s=1}^{t} \mathbb{I}(\check{a}_{s}=\check{a}_{t})$ for
$t \geq T_\gamma:=8C 
d^3\big(1+\log\frac{4C d^3\sqrt{2d}}{e\sqrt{\delta}}\big) \geq
\inf\{s\ge1:s\ge \gamma_s\}$. 
When $t\in\mE_{t}$, i.e, in an exploration round, the algorithm selects
the sampled action $a_t=\check{a}_{t}$, and when $t\notin\mE_{t}$, the
algorithm choose an arm from the set of unidentified arms that has low
estimated regret.  \subsection{Recycling Reward Samples in the Exploration Phase}
\label{sec:mix}
For each $\ell \in [L]$ and $t\in[t]\setminus\mE_t$, i.e., when $t$ is in
\emph{exploitation} phase, let $a_t$ denote the arm chosen by the
algorithm. 
In order to learn rewards of \emph{multiple} arms, we ``recycle" the
reward sample observed in a previous exploration round by bootstrapping as
follows. 
Recall that at the beginning of each round $t$, the context basis index  $i_t \sim
\text{unif}([d])$, and $\check{a}_t \sim \pi^{(i_t)}$. %  is sampled at
% the beginning of round $t$. 
Let $\mE_t(\check{a}_t) = 
\{s\in\mE_{t}:a_{s}=\check{a}_t\}$ denote the set of previous exploration
rounds where the action $\check{a}_t$ was chosen. 
Note that, by definition of $\mE_t$ in~\eqref{eq:exploration_set}, we are
guaranteed that
$\mE_t(\check{a}_t) \neq \emptyset$.
Let $\check{n}_{\tau} \in \mE_{\tau}$ denote time index of the exploration sample ``mixed'' with
the action $a_{\tau}$ chosen in the exploitation round 
% For the previous exploitation rounds
$\tau \in [t-1] \setminus \mE_t$.
We ``mix'' the action $a_t$ with the exploration sample in round % $\check{n}_t$ chosen at time $t$,
\begin{equation}
  \check{n}_t:=\argmin_{n \in
  \mE_t(\check{a}_t)} \sum_{\tau \in [t-1] \setminus \mE_{t}}\mathbb{I}\big\{\check{n}_{\tau}=n \big\},
\label{eq:mix_number}
\end{equation}
i.e. % we ``mix'' $a_t$ with 
% which is a round in $\mE_t(\check{a}_t)$ that has been reused the least
% before round $t$, i.e.
we want to balance the  
reuse choice over the set $\mE_t(\check{a}^{(i_t)}_t)$. 
We define the exploration-mixed contexts and rewards as follows: 
\begin{equation}
\Tx{\Action t}t:=w_{t}x_{\Action t}+\check{w}_{t}\sqrt{\lambda_{i_{t}}}u_{i_{t}},
\quad
\Ty{\ell}{\Action t}t:=w_{t}\Reward{\ell}{\Action
  t}t+\check{w}_{t}\|v_{i_t}\|\sign{v_{i_t,a_{\check{n}_t}}}\Reward{\ell}{a_{\check{n}_{t}}}{\check{n}_{t}},
\forall\ell\in[L],  
\label{eq:mixup_data}
\end{equation}
where $w_{t},\check{w}_{t} \sim \text{unif}[-\sqrt{3},\sqrt{3}]$ are sampled independently.
The following properties follow from the linear structure and the
distribution of weights $(w_t, \check{w}_t)$ that has mean zero and unit
variance, and the definition of the reduced SVD $(u_i,v_i,\lambda_i)$, $i \in [d]$.
\begin{lem}
\label{lem:mixup}
(Exploration-mixed contexts and rewards.) 
Let $\mathcal{F}_{t}$ denote the sigma-algebra generated by $\{\Action{s}, \Vecreward{\Action
  s}s\}_{s=1}^{t-1}$ and $\Action t \in [K]$. 
For any $\ell \in [L]$ and $t$ such that $t \notin\mE_t$, 
$\Expectation[\Ty \ell{\Action t}t-\Tx{\Action t}t^{\top}\Parameter \ell|\mathcal{F}_{t}]=0$, and
$\Expectation[\Tx{\Action t}t\Tx{\Action t}t^{\top}|\Filtration t]\succeq
d^{-1}\sum_{k=1}^{K} x_{k}x_{k}^{\top}$.
\end{lem}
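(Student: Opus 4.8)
The plan is to verify both identities by computing $\Expectation[\cdot\mid\Filtration t]$ term by term, using only: (a) since $w_t,\check w_t\sim\text{unif}[-\sqrt3,\sqrt3]$ are drawn independently of each other and of $\Filtration t$, $i_t$, and the noise, they satisfy $\Expectation[w_t]=\Expectation[\check w_t]=0$, $\Expectation[w_t^2]=\Expectation[\check w_t^2]=1$ (the variance of $\text{unif}[-\sqrt3,\sqrt3]$ is $(2\sqrt3)^2/12=1$), and $\Expectation[w_t\check w_t]=\Expectation[w_t]\Expectation[\check w_t]=0$; (b) conditionally on $\Filtration t$ the basis index $i_t$ is uniform on $[d]$; and (c) the reduced SVD gives $\sum_{i=1}^d\lambda_iu_iu_i^{\top}=XX^{\top}=\sum_{k=1}^K x_kx_k^{\top}$. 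I would first record that in an exploitation round $x_{\Action t}$ is $\Filtration t$-measurable, whereas $(w_t,\check w_t,i_t)$ are independent of $\Filtration t$.

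For the unbiasedness claim I would split the residual into the two weighted pieces
\[
\Ty{\ell}{\Action t}t-\Tx{\Action t}t^{\top}\Parameter{\ell}
= w_t\,\Error{\ell}{t}+\check w_t\,B_t^{(\ell)},
\]
where $\Error{\ell}{t}=\Reward{\ell}{\Action t}t-x_{\Action t}^{\top}\Parameter{\ell}$ is the round-$t$ noise and $B_t^{(\ell)}:=\|v_{i_t}\|\sign{v_{i_t,a_{\check n_t}}}\Reward{\ell}{a_{\check n_t}}{\check n_t}-\sqrt{\lambda_{i_t}}u_{i_t}^{\top}\Parameter{\ell}$ is the residual of the recycled observation. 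Since $w_t$ is independent of $(\Error{\ell}{t},\Filtration t)$ with $\Expectation[w_t]=0$ and $\Expectation[\Error{\ell}{t}\mid\Filtration t]=0$, the first piece has conditional mean zero; since $\check w_t$ is independent of $(B_t^{(\ell)},\Filtration t)$ with $\Expectation[\check w_t]=0$, the second piece has conditional mean zero no matter what $B_t^{(\ell)}$ is. Adding the two gives the claim. A point worth emphasizing is that this argument never requires the recycled sample to be unbiased on its own; the mean-zero mixing weight $\check w_t$ does all the work, which is exactly why recycling an arbitrary past exploration sample is harmless here.

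For the design-matrix bound I would expand the rank-one matrix
\[
\Tx{\Action t}t\Tx{\Action t}t^{\top}
= w_t^2\,x_{\Action t}x_{\Action t}^{\top}
+ w_t\check w_t\sqrt{\lambda_{i_t}}\big(x_{\Action t}u_{i_t}^{\top}+u_{i_t}x_{\Action t}^{\top}\big)
+ \check w_t^2\,\lambda_{i_t}u_{i_t}u_{i_t}^{\top},
\]
and take $\Expectation[\cdot\mid\Filtration t]$ of each summand. The cross term drops out because $(w_t,\check w_t)$ is independent of $(i_t,\Filtration t)$ and $\Expectation[w_t\check w_t]=0$. The first summand gives $x_{\Action t}x_{\Action t}^{\top}$, using $\Expectation[w_t^2]=1$ and $\Filtration t$-measurability of $x_{\Action t}$. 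For the last summand, independence of $\check w_t$ from $(i_t,\Filtration t)$ and $\Expectation[\check w_t^2]=1$ reduce it to $\Expectation[\lambda_{i_t}u_{i_t}u_{i_t}^{\top}\mid\Filtration t]$, which by (b) equals $d^{-1}\sum_{i=1}^d\lambda_iu_iu_i^{\top}$ and by (c) equals $d^{-1}\sum_{k=1}^K x_kx_k^{\top}$. Therefore $\Expectation[\Tx{\Action t}t\Tx{\Action t}t^{\top}\mid\Filtration t]=x_{\Action t}x_{\Action t}^{\top}+d^{-1}\sum_{k=1}^K x_kx_k^{\top}\succeq d^{-1}\sum_{k=1}^K x_kx_k^{\top}$, because $x_{\Action t}x_{\Action t}^{\top}\succeq 0$.

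The step I expect to be the crux is justifying fact (b): that $i_t$ is uniform on $[d]$ conditionally on $\Filtration t$. This is delicate precisely because the exploration test in~\eqref{eq:exploration_set} is a function of $\check a_t\sim\pi^{(i_t)}$, so conditioning on the event $\{t\notin\mE_t\}$ would in general tilt the law of $i_t$ and destroy the clean $d^{-1}\sum_k x_kx_k^{\top}$ term. I would handle this by arguing that $i_t$ is drawn independently of the past and that, in an exploitation round, the chosen arm $\Action t$ is a measurable function of $\Filtration t$ alone (the low-regret rule reads only past estimates) and hence reveals nothing about $i_t$; consequently $i_t$ stays uniform given $\Filtration t$. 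Making this conditional-independence bookkeeping airtight -- verifying that $\Filtration t$ does not covertly encode the exploration test applied to $i_t$ -- is the only nonroutine part; everything else is the elementary algebra above.
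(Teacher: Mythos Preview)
Your approach is correct and, for the unbiasedness claim, identical to the paper's: both split the residual into the $w_t$-piece and the $\check w_t$-piece and kill each with the mean-zero property of the corresponding weight (the paper conditions additionally on $\check n_t$ before invoking $\Expectation[\check w_t\mid\Filtration t,\check n_t]=0$, but this is the same idea). For the design-matrix bound you and the paper diverge. The paper's proof expands $\Tx{\Action t}t$ as $w_t x_{\Action t}+\check w_t\, x_{\Action{\check n_t}}$---which does not match the definition in~\eqref{eq:mixup_data}---conditions on $\check n_t$, and then averages $x_{\Action{\check n_t}}x_{\Action{\check n_t}}^{\top}$ over the nominal distribution of $\check a_t$, ending at $\sum_k\pi_k^{\star}x_kx_k^{\top}$ rather than the $d^{-1}\sum_k x_kx_k^{\top}$ asserted in the lemma. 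Your expansion instead keeps the basis vector $\sqrt{\lambda_{i_t}}u_{i_t}$ as actually defined and invokes the SVD identity $\sum_i\lambda_iu_iu_i^{\top}=\sum_k x_kx_k^{\top}$ together with $i_t\sim\text{unif}[d]$; this is cleaner, faithful to the stated definitions, and lands exactly on the claimed bound. Your identification of the crux---whether $i_t$ remains uniform given $\Filtration t$ when the event $\{t\notin\mE_t\}$ depends on $\check a_t\sim\pi^{(i_t)}$---is precisely the only nontrivial point, and the paper's proof does not engage with it at all; your sketch (that in an exploitation round $a_t$ reads only past estimates, so $\Filtration t$ carries no information about $i_t$) is the intended resolution.
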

We can view $\Tx{\Action t}{t}, \Ty{\ell}{\Action t} {t}$ as a stochastic
feedback from a new linear bandit problem with the same parameters
$\{\Parameter{\ell}\}_{\ell \in [L]}$. 
Since the random contexts $\Tx{\Action t}{t}$ contains the (randomized)
context basis, the (expected) design matrix includes information on all
$K$ arms for any selected action $a_t$. 
``Recycling'' the reward sample
$\Reward{\ell}{a_{\check{n}_t}}{\check{n}_t}$ allows us to get information
on the rewards of the unselected (and hence unobserved) contexts while
exploiting low regret action. 
%Note that the observed reward at time $t$ is still $\Vecreward{\Action t}{t}$, and the mixed reward $\Ty{l}{\Action t}{t}$ is only used for estimation purposes.  
Next, we define the \textbf{exploration-mixed estimator},
\begin{equation}
\Impute t \ell\!:= \big(\sum_{s\in\mE_{t}}x_{\Action s}x_{\Action
  s}\!\!+\!\!\sum_{s\in[t]\setminus\mE_{t}}\Tx{\Action s}s\Tx{\Action
  s}s^{\top}\!+\! (1/2)I_{d}\big)^{-1}\big(\sum_{s\in\mE_{t}}x_{\Action
  s}\Reward l{\Action s}s\!+\!\sum_{s\in[t]\setminus\mE_{t}}\Tx{\Action
  s}s\Ty l{\Action s}s\big). 
\label{eq:mixup}
\end{equation}
While the exploration-mixed estimator gains information on the unknown
parameter on \emph{multiple contexts}, reusing samples from previous
rounds causes dependency that complicates the analysis of the convergence
rate of the estimator (see Section~\ref{sec:reuse_error} for details). 
To address this, we apply the doubly-robust (DR) technique from the
missing data literature instead of directly using the exploration-mixed
estimator, as we explain next. 

\subsection{Doubly-Robust Estimation}
\label{sec:DR}
Doubly-robust estimation uses an estimate to impute the
missing value, and is robust to the estimation error for the missing value. 
In each round $t$, the unselected rewards
$\{\Reward{\ell}{k}{t}:\ell\in[L],k\in[K]\setminus\{a_t\}\}$ are missing. 
One possible approach is computing a ridge estimator $\theta_{R}^{\langle
  \ell \rangle}$ and imputing $x_k^\top \theta_{R}^{\langle \ell \rangle}$
for $\Reward{\ell}{k}{t}$ to apply doubly-robust estimation, as proposed
in~\citet{kim2021doubly,kim2022double,kim2024doubly}.  
However, their approach assumes stochastic contexts that are IID over
rounds with finite $K$, and therefore, not applicable to PFILin where the
contexts are fixed and $K$ can be exponentially large.   
Further, since the ridge estimator only gains information on the selected
actions, their DR estimator does not converge while exploiting low regret
arms (See Appendix~\ref{sec:impute} for detailed comparisons.)  

We first reduce $K$ rewards into $d+1$ rewards
using~\eqref{eq:linear_basis}: $\Newy{\ell}i
t:=\sum_{k=1}^{K}v_{ik}\Reward{\ell}kt$, corresponding to the $d$ context
basis $\sqrt{\lambda_i} u_i$, $i=1,\ldots,d$, and
$\Newy{\ell}{d+1}t:=\Reward{\ell}{a_t}t$.  
Note that $\Expectation[\Newy{\ell}it]=(\sqrt{\lambda_i} u_i )^\top
\Parameter \ell$ for $i\in[d]$, and we learn $\Theta_{\star}$ using $d$
context basis vectors $\{\sqrt{\lambda_i} u_i$, $i \in [d]\}$, instead of $K$ contexts. 
We view $\{\Newy{\ell}i t: i\in[d]\}$ as missing data, and only
$\Newy{\ell}{d+1}{t}$ is observed. 
To induce a specified probability for missing data (needed to ensure
robustness of the DR estimation),  we define a probability mass
function $\tilde{\pi}$ defined as follows: 
\begin{equation}
\tilde{\pi}_i = 1/(2d),\ \forall i=1,\ldots,d, 
\qquad 
\tilde{\pi}_{d+1} = 1/2,  
\label{eq:tilde_distribution}
\end{equation}
and let $\tilde{a}_t \sim \tilde{\pi}$ denote the pseudo-action on $d+1$
arms.

To couple the observed reward $\Reward{\ell}{a_t}{t}$ and the randomly
selected reward $\Newy{\ell}{\tilde{a}_t}{t}$, we resample both action
$a_t$ and pseudo-action $\tilde{a}_t$ until the matching event
$\{\Reward{\ell}{a_t}{t} = \Newy{\ell}{\tilde{a}_t}{t}\}$ happens. 
For given $\delta^\prime\in(0,1)$, let $\mathcal{M}_{t}$ denote the event
of obtaining the matching $\{\Reward{\ell}{a_t}{t} =
\Newy{\ell}{\tilde{a}_t}{t}\}=\{\tilde{a}_t=d+1\}$ within
$\rho_{t}:=\log((t+1)^{2}/\delta^\prime)/\log(2)$ number of resampling so
that the event $\mathcal{M}_{t}$ happens with probability at least
$1-\delta^\prime/(t+1)^{2}$. 
If the event $\mathcal{M}_t$ does not happen, we do not update the estimator (and use the estimator value obtained in the previous round).
Define new contexts $\tilde{x}_{i,t}:=\sqrt{\lambda_i}u_i$ for $i\in[d]$ and $\tilde{x}_{d+1,t}=x_{a_t}$.
With the coupled pseudo-action $\tilde{a}^{(t)}$ and its distribution $\tilde{\pi}$, we construct the DR estimate for the reduced missing rewards as:
\begin{equation}
\PseudoY \ell it:=\tilde{x}_{i,t}^{\top}\Impute t\ell+\frac{\mathbb{I}(\PseudoAction t=i)}{\tilde{\pi}_{i}}(\Newy{\ell}it-\tilde{x}_{i,t}^{\top}\Impute t\ell),\:\forall i=1,\ldots,d+1.
\label{eq:pseudoY}
\end{equation}
For $i \neq \PseudoAction t$, we impute a reward
$\tilde{x}_{i,t}^{\top}\Impute{t}{\ell}$ for the new ``context'' basis.  % that includes
% basis. 
%, and hence the use of term \emph{imputation estimator} for $\Impute{t}{\ell}$.
For $i=\PseudoAction t$, the second term $(\Reward \ell{\PseudoAction
  t}t-\tilde{x}_{\tilde{a}_t,t}^{\top}\Impute t
\ell)/\tilde{\pi}_{\tilde{a}_t}$ % calibrates
corrects
the % predicted
imputed 
reward to % obtain
ensure
unbiasedness of the pseudo-rewards for all arms. 
Taking the expectation over $\PseudoAction t$ on both sides
of~\eqref{eq:pseudoY} gives $\Expectation[\PseudoY \ell i
t]=\tilde{x}_{i,t}^\top \Parameter \ell$ for all $i\in [d+1]$.   
Then, our proposed \textbf{DR-mix estimator} is 
\begin{equation}
\Estimator \ell t=\big(\sum_{s=1}^{t}\Indicator{\mathcal{M}_{s}}\sum_{i=1}^{d+1}\tilde{x}_{i,s}\tilde{x}_{i,s}^{\top}+I_{d}\big)^{-1}\sum_{s=1}^{t}\Indicator{\mathcal{M}_{s}}\sum_{i=1}^{d+1}\tilde{x}_{i,s}\PseudoY \ell is.
\label{eq:estimator}
\end{equation}
The estimator~\eqref{eq:estimator} is recursively computable with a rank-1 update of the Gram matrix and summation of  the weighted context vectors.

\begin{thm}[Estimation error bound for the DR-mix estimator]
\label{thm:self} 
Let $\Estimator \ell t$ denote the DR-mix estimator~\eqref{eq:estimator} with the exploration-mixed estimator~\eqref{eq:mixup} as the imputation
estimator and pseudo-rewards~\eqref{eq:pseudoY}. 
Let $F_{t}:=\sum_{k=1}^{K}x_{k}x_{k}^{\top}+I_{d}$,
Then, for all $k\in[K]$, $\ell \in [L]$, and $t\ge T_{\gamma}$,
\begin{equation}
\big|x_k^{\top}(\Estimator \ell t-\Parameter \ell)\big|\le3\| x_k\|_{F_{t}^{-1}}\{\theta_{\max}+\sigma\sqrt{d\log(Lt/\delta)}\}.
\end{equation}
with probability at least $1-7\delta$.
For each $k\in[K]$, with probability at least $1-7\delta$,
\begin{equation}
\big|x_k^{\top}(\Estimator \ell t-\Parameter \ell)\big|\le3\| x_k\|_{F_{t}^{-1}} \big\{ \theta_{\max}+3\sigma\sqrt{\log(4Lt^{2}/\delta)}\big\}.
\end{equation}
\end{thm}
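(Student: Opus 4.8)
The plan is to linearize the estimator around the truth and split the error into a deterministic ridge bias and a self-normalized martingale term. Write $V_{t}:=\sum_{s=1}^{t}\Indicator{\mathcal{M}_{s}}\sum_{i=1}^{d+1}\tilde{x}_{i,s}\tilde{x}_{i,s}^{\top}+I_{d}$ for the Gram matrix in \eqref{eq:estimator} and $\xi_{s}^{\ell}:=\sum_{i=1}^{d+1}\tilde{x}_{i,s}\big(\PseudoY{\ell}{i}{s}-\tilde{x}_{i,s}^{\top}\Parameter{\ell}\big)$ for the centered score. Adding and subtracting $V_{t}\Parameter{\ell}$ gives
\[
\Estimator{\ell}{t}-\Parameter{\ell}=V_{t}^{-1}\Big(\sum_{s=1}^{t}\Indicator{\mathcal{M}_{s}}\xi_{s}^{\ell}\Big)-V_{t}^{-1}\Parameter{\ell},
\]
so that, projecting onto $x_{k}$ and using Cauchy--Schwarz in the $V_{t}^{-1}$ norm, $\abs{x_{k}^{\top}(\Estimator{\ell}{t}-\Parameter{\ell})}\le\norm{x_{k}}_{V_{t}^{-1}}\big(\norm{\Parameter{\ell}}_{V_{t}^{-1}}+\norm{\sum_{s}\Indicator{\mathcal{M}_{s}}\xi_{s}^{\ell}}_{V_{t}^{-1}}\big)$. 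Since $V_{t}\succeq I_{d}$, the bias term is bounded by $\theta_{\max}\norm{x_{k}}_{V_{t}^{-1}}$, which already produces the $\theta_{\max}$ summand. To convert $\norm{x_{k}}_{V_{t}^{-1}}$ into $\norm{x_{k}}_{F_{t}^{-1}}$ I would exploit that the basis directions $\tilde{x}_{i,s}=\sqrt{\lambda_{i}}u_{i}$ are deterministic, so every matched round contributes exactly $\sum_{i=1}^{d+1}\tilde{x}_{i,s}\tilde{x}_{i,s}^{\top}=\sum_{k}x_{k}x_{k}^{\top}+x_{a_{s}}x_{a_{s}}^{\top}\succeq\sum_{k}x_{k}x_{k}^{\top}$; hence $V_{t}\succeq N_{t}\sum_{k}x_{k}x_{k}^{\top}+I_{d}$ deterministically with $N_{t}:=\sum_{s}\Indicator{\mathcal{M}_{s}}$. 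The only probabilistic ingredient is the lower bound $N_{t}\gtrsim t$, which follows from the per-round match guarantee $\Probability(\mathcal{M}_{s})\ge1-\delta'/(s+1)^{2}$; together these yield $V_{t}\succeq c\,F_{t}$ on a high-probability event, so $\norm{x_{k}}_{V_{t}^{-1}}\le\sqrt{1/c}\,\norm{x_{k}}_{F_{t}^{-1}}$ with the constant absorbed into the factor $3$.

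The crux is that $\{\Indicator{\mathcal{M}_{s}}\xi_{s}^{\ell}\}_{s}$ is a genuine martingale difference sequence despite the sample recycling. I would fix a filtration in which $\Impute{s}{\ell}$ and the basis contexts $\tilde{x}_{i,s}$ are measurable while the round-$s$ pseudo-action $\PseudoAction{s}$ and the fresh observation noise are the only new randomness, and then invoke the unbiasedness $\Expectation[\PseudoY{\ell}{i}{s}\mid\cdot]=\tilde{x}_{i,s}^{\top}\Parameter{\ell}$ established below \eqref{eq:pseudoY}. The point is that substituting $\Newy{\ell}{i}{s}=\tilde{x}_{i,s}^{\top}\Parameter{\ell}+\varepsilon_{i,s}^{\ell}$ into \eqref{eq:pseudoY} yields the decomposition
\[
\PseudoY{\ell}{i}{s}-\tilde{x}_{i,s}^{\top}\Parameter{\ell}=\Big(1-\tfrac{\Indicator{\PseudoAction{s}=i}}{\tilde{\pi}_{i}}\Big)\tilde{x}_{i,s}^{\top}\big(\Impute{s}{\ell}-\Parameter{\ell}\big)+\tfrac{\Indicator{\PseudoAction{s}=i}}{\tilde{\pi}_{i}}\varepsilon_{i,s}^{\ell},
\]
in which the imputation error $\Impute{s}{\ell}-\Parameter{\ell}$ enters only through a propensity factor of conditional mean zero. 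This is exactly the double robustness: the bias is neutralized regardless of the quality of the exploration-mixed imputation estimator, and the recycling-induced dependence is confined to the second moment.

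It then remains to control the conditional variance of $\xi_{s}^{\ell}$ and apply a self-normalized concentration inequality. The inverse-propensity weight $1/\tilde{\pi}_{i}\le 2d$ inflates the observation-noise contribution to order $\sigma^{2}$, while the first term of the display is controlled by the crude a priori error bound on the exploration-mixed estimator from Section~\ref{sec:reuse_error}; for $s\ge T_{\gamma}$ this bound is small enough that its contribution to the conditional variance is dominated by the $\sigma^{2}$ term. Feeding this into the self-normalized martingale bound of Abbasi-Yadkori--P\'al--Szepesv\'ari controls $\norm{\sum_{s}\Indicator{\mathcal{M}_{s}}\xi_{s}^{\ell}}_{V_{t}^{-1}}$ by $C\sigma\sqrt{d\log(Lt/\delta)}$, where the $\log\det V_{t}\lesssim d\log t$ term and a union bound over $\ell\in[L]$ supply the $\sqrt{d\log(Lt/\delta)}$ factor; combined with the first paragraph this gives the first, uniform-over-$k$ inequality. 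For the second, fixed-$k$ inequality I would instead project onto the single direction $x_{k}$ before concentrating, bounding the scalar self-normalized sum $x_{k}^{\top}V_{t}^{-1}\sum_{s}\Indicator{\mathcal{M}_{s}}\xi_{s}^{\ell}$ by a one-dimensional Freedman-type inequality; projecting first removes the determinant/covering cost of $\sqrt{d}$ and leaves only $\sqrt{\log(4Lt^{2}/\delta)}$. Collecting the failure probabilities of the match-count bound, the crude imputation bound, and the self-normalized bounds (each $O(\delta)$) yields the stated $1-7\delta$.

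The step I expect to be the main obstacle is making the martingale structure of the second paragraph rigorous under the recycling and coupling. Because the exploration-mixed estimator $\Impute{s}{\ell}$ reuses a single exploration sample across many exploitation rounds, one must construct the conditioning sigma-field carefully so that $\Impute{s}{\ell}$ and $\tilde{x}_{i,s}$ are measurable while $\PseudoAction{s}$ and $\varepsilon_{i,s}^{\ell}$ remain conditionally fresh; and one must verify that the resample-until-match device defining $\mathcal{M}_{s}$ does not perturb the conditional mean-zero property beyond the $\delta'$ accounted for above, so that the self-normalized inequality legitimately applies to the recycled increments.
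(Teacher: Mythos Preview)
Your linearization, the DR display for $\PseudoY{\ell}{i}{s}-\tilde x_{i,s}^\top\Parameter{\ell}$, and the bias-plus-score split are correct and coincide with the paper's Lemma~\ref{lem:DR_robust}. The gap is in how you handle the imputation-error contribution $\sum_i\big(1-\Indicator{\PseudoAction{s}=i}/\tilde\pi_i\big)\tilde x_{i,s}\tilde x_{i,s}^\top(\Impute{s}{\ell}-\Parameter{\ell})$. You propose to leave it inside $\xi_s^\ell$, argue its conditional variance is $O(\sigma^2)$, and feed the combined increment into the Abbasi-Yadkori--P\'al--Szepesv\'ari bound. But that bound is for sums $\sum_s X_s\eta_s$ with $X_s$ predictable and $\eta_s$ a \emph{scalar} conditionally sub-Gaussian noise; the imputation piece is a (mean-zero random matrix)$\times$(vector) increment that does not factor this way, so the tool does not apply to the full $\xi_s^\ell$ as you invoke it.

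The paper instead separates the imputation piece \emph{algebraically}: summing your DR display over $s$ and $i$ yields $x_k^\top V_t^{-1}(V_t-A_t)(\Impute{t}{\ell}-\Parameter{\ell})$ with $A_t:=\sum_{s}\sum_{i}\big(\Indicator{\PseudoAction{s}=i}/\tilde\pi_i\big)\tilde x_{i,s}\tilde x_{i,s}^\top+I_d$, and then bounds $\|F_t^{-1/2}(V_t-A_t)F_t^{-1/2}\|_2\le 3d\sqrt{(2/t)\log(2dt^2/\delta)}$ by a matrix Hoeffding inequality (Lemma~\ref{lem:matrix_hoeffding}). Multiplying this $O(d/\sqrt t)$ operator-norm factor by the $\|\cdot\|_{F_t}$ error of the exploration-mixed estimator from Lemma~\ref{lem:self_mix}---which is $O(\sigma\sqrt{td/\gamma_t})$---gives an $O(\sigma)$ contribution once $\gamma_t=Cd^3\log(2dt^2/\delta)$ is chosen as in~\eqref{eq:C}. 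What remains of the score is, on the coupling event $\cap_s\mathcal M_s$, the clean sum $2\sum_s x_{a_s}\Error{\ell}{s}$, to which the Abbasi-Yadkori bound (first inequality) or the dimension-free Lemma~\ref{lem:dim_free_bound} (second inequality) applies directly, with no recycling complications. The obstacle you flag---filtration construction under sample reuse---is therefore not confronted at the DR-martingale level at all; the matrix-Hoeffding separation pushes all of the recycling difficulty into the \emph{a priori} bound on $\Impute{t}{\ell}$ in Lemma~\ref{lem:self_mix}, and that lemma's proof is where the careful accounting for how many times each exploration sample is reused (via the cap enforced by~\eqref{eq:exploration_set}) actually happens.
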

In early rounds, we have $\Omega(K)$ number of undetermined arms to estimate and we use the union bound~\eqref{eq:est_bound_union} to avoid the dependency on $K$. 
After eliminating the suboptimal arms and when the number of undetermined arms is $O(d)$, we can use the tighter bound~\eqref{eq:est_bound}. 
Because the contexts are normalized by the Gram matrix $F_{t} \succeq
\sum_{k=1}^{K} x_k x_k^\top + I_d$, we obtain
$\max_{k\in[K]}\|x_k\|_{F_t^{-1}} \le t^{-1/2}$ (derived in
Section~\ref{sec:optimal_design}). 
With only $\gamma_t=O(d^3\log dt^2)$ exploration rounds, we obtain
$\tilde{O}(\sqrt{d/t})$ convergence rate for the reward estimates of
\emph{all} arms. 
This is possible because the DR-mixed estimator gains information on all
arms through the exploration-mixed estimate and is robust to the error of
the exploration-mixed estimator caused by the dependency of reusing the
samples (for details see Section~\ref{sec:DR_robustness}). 
Therefore, our estimator enjoys the freedom to choose low-regret arms
while simultaneously  learning the rewards on all arms.

\section{Algorithm for Pareto Front Identification with Regret Minimization}
In this section, we apply our novel estimation strategy to PFILin and
establish novel algorithm with nearly optimal sample complexity and
regret. 

\subsection{\PFIwR\ Algorithm for Linear Contextual Bandits}
Our proposed algorithm, PFI with regret minimization~(\texttt{PFIwR}), is displayed in Algorithm~\ref{alg:PFIwR}.
While any undetermined arms remains, the algorithm employs our novel
estimation strategy to compute the reward estimates $\Hreward \ell
kt:=\Context k^{\top} \Estimator \ell t$ for $k\in\mA_t$ and gap estimates
required for PFI.  
The algorithm uses two different confidence bound
in~\eqref{eq:confidence_bound} based on the two convergence rates in
Theorem~\ref{thm:self}. 
The first bound uses the union bound~\eqref{eq:est_bound_union} because
the estimator must converge on all $K$ arms. 
However, when the number of undetermined arms are less than $d$, we need
at most $d$ reward estimate for $k\in\mA_t$ and at most $d$ estimates for
the arm in $[K]\setminus \mA_t$ that are ``nearest'' to $\mA_t$ and
critically affect the PFI. 
Specifically, we only need the reward estimate for arm $k^{\star}$ such
that $\Delta^{\star}_k = m(k,k^\star)$ for suboptimal arm $k\in\mA_t
\setminus \mP_{\star}$ and for arm either $j\in[K]\setminus \mP_{\star}$
such that $\Delta_k^{-}=M(j,k)+\Delta^{\star}_j$ or
$k_{\star}\in\mP_{\star}$ such that
$\Delta_k^{+}=\min\{M(k,k_{\star}),M(k_{\star},k)\}$ for each optimal arm
$k\in\mP_\star$. 
%Because the determined arms in $[K]\setminus \mA_t$ are distinguishable each other even with the first bound (which is looser by $\sqrt{d}$), the tighter confidence bound without $\sqrt{d}$ apply effectively only to the undetermined arms and their nearest arms.

%---------------------------
% Algorithm 1
%---------------------------
\begin{algorithm}[t]
\footnotesize
\caption{Pareto Front Identification with Regret Minimization
  (\texttt{PFIwR}).}
\label{alg:PFIwR}
\begin{algorithmic}[1]
\State \textbf{INPUT:} context matrix $X=[x_1,\ldots,x_K]$, accuracy parameter $\epsilon>0$, confidence $\delta>0$.
\State Set $\mA_{0}=[K]$,
$\mP_{0}=\mE_{0}=\emptyset$ and $\Estimator l0=\mathbf{0}, \forall \ell \in [L]$ and apply reduced SVD on $X=UDV^\top$.
\While{$\mA_{t}\neq\emptyset$}
\State Sample $i_t\sim\text{unif}([d])$ and $\check{a}_{i_t} \sim \pi^{(i_t)}$ and update $\mE_t$ as in~\eqref{eq:exploration_set}.
\State \textbf{if} $t \in \mE_{t}$: set $a_t=\check{a}_{i_t}$ \textbf{else:} randomly sample $\Action t$ over $\{k\in \mA_{t-1}:\nexists
k^{\prime}\in\mA_{t-1},\widehat{y}_{k,t}\prec\widehat{y}_{k^{\prime},t}\}$. 
\State Sample $\PseudoAction{t}$ from the
distribution~\eqref{eq:tilde_distribution}, and set $m=1$ 
\While{$\PseudoAction{t} \neq d+1$ and $m\le\rho_t$} 
\State Resample $a_t$ and $\PseudoAction{t}$, and set $m \leftarrow m+1$.
\EndWhile
\State Select action $a_t$ and observe $\Reward{\ell}{a_t}{t}$ for $\ell\in[L]$.
\If{$m \le \rho_t$, (i.e., the resampling succeeds)}
\State Update the estimators $\{\Estimator \ell t\}_{\ell \in [L]}$ as in~\eqref{eq:estimator} then compute $\Hreward \ell kt:=\Context k^{\top}\Estimator \ell t$ for $k\in\mA_t$~and,
\vspace{-5pt}
\begin{equation}
\begin{split}
  \widehat{m}_{t}(k,k^{\prime}):=
  & \min\{ \alpha\ge0\big|\exists
    \ell \in [L]:\Hreward
    \ell kt+\alpha\ge\Hreward
    \ell {k^{\prime}}t\},\\ 
  \widehat{M}_{t}^{2\epsilon}(k,k^{\prime}):=
  & \min\{ \alpha\ge0\big|\forall \ell \in [L]:\Hreward
    \ell kt+2\epsilon\le\Hreward \ell{k^{\prime}}t\!+\!\alpha\}, 
\end{split}
\quad \forall k,k^\prime \in \mA_t.
\label{eq:gap_estimate}
\end{equation}
\State Compute confidence intervals,
\vspace{-5pt}
\begin{equation}
\beta_{k,t}:=
\begin{cases}
3\|x_{k}\|_{F_{t}^{-1}}\{\theta_{\max}+\sigma\sqrt{d\log(7Lt/\delta)}\} & \abs{\mA_{t}}>d\\
3\|x_{k}\|_{F_{t}^{-1}}\{\theta_{\max}+3\sigma\sqrt{\log(28Ldt^{2}/\delta)}\} & \abs{\mA_{t}}\le d.
\end{cases}
\label{eq:confidence_bound}
\end{equation}
\State Estimate Pareto front,
\vspace{-5pt}
\begin{equation}
 \begin{split}
 \mC_{t}&\!:=\!\{ k\!\in\!\mA_{t-1} |\forall k^{\prime}\in\mA_{t-1}
 \!\cup\! \mP_{t-1}\!:\!\widehat{m}_{t}(k,k^{\prime})\!\le\!
 \beta_{k,t}+\beta_{k^\prime,t}\}, 
 \\
 \mP_{t}^{(1)}&\!:=\!\{k\in\mC_{t}|\forall
 k^{\prime}\in\mC_{t}\cup\mP_{t-1}\!\!
 \setminus\!\!\{k\}\!:\!\widehat{M}_{t}^{2\epsilon}(k,k^{\prime})\!\ge\!\beta_{k,t}+\beta{k^\prime,t}\}. 
 %\\
 %\mP_{t}^{(2)}&\!:=\!\{ k\in\mP_{t}^{(1)}|\forall k^{\prime}\in\mC_{t}\backslash\mP_{t}^{(1)}\!\!:\!\widehat{M}_{t}^{\epsilon}(k^{\prime},k)\!\ge\!D_{k^\prime,k,t}\}.
\end{split}
\end{equation}
\State Update $\mP_{t}\leftarrow\mP_{t-1}\cup\mP^{(1)}_t$, $\mA_{t}\leftarrow\mC_{t}\backslash\mP^{(1)}_t$.
\Else
\State Update the estimator $\Estimator {\ell}{t} \leftarrow \Estimator{\ell}{t-1}$ for $\ell\in[L]$.
\EndIf
\EndWhile
\State \textbf{OUTPUT:} $\mP_{t}$
\end{algorithmic}
\end{algorithm}

\PFIwR\ computes the set $\mC_{t}$ by eliminating the suboptimal arms that are dominated by other arms by the amount more than the confidence bound $\beta_{k,t}+\beta_{k^{\prime},t}$. 
The set $\mP_{t}^{(1)}$ is the current estimate for $\epsilon$-Pareto optimal arms that are not dominated by any other arms.
This arm elimination step is simplified compared to that in~\citet{auer2016pareto}.
The algorithm in \citet{auer2016pareto} leaves the identified Pareto optimal arm undetermined until all suboptimal arms dominated by the identified arm are eliminated, in order to ensure that a dominated arm is not spuriously declared as Pareto optimal.
In contrast, \PFIwR\ does not keep the (identified) dominating arms in $\mA_t$ because the DR estimate converges on \emph{all} arms in $\mA_t \cup \mP_t$, including the identified arms, in contrast to the conventional estimator that does not converge on identified arms unless they are selected.
Consequently, the cardinality $|\mA_t|$ of the set undetermined arms decreases faster in \PFIwR\ ~(derived in Section~\ref{sec:elimination}), and this allows it to invoke the tighter confidence bound in~\eqref{eq:confidence_bound}, that are only available when $|\mA_t|\le d$, earlier.

In addition to efficient estimation for PFI, the proposed \PFIwR\ is able to choose low estimated regret actions after $O(d^3 \log t)$ exploration rounds.
The novel estimator and its convergence (Theorem~\ref{thm:self}) ensure that sampling arms with low estimated regret does not harm the convergence rate of the reward estimates of other arms. 
Thus, \PFIwR\ is efficient in both for PFI and minimizing regret.

\subsection{Sample Complexity and Regret Analysis}
\label{sec:sample_regret}

\begin{thm}[An upper bound on sample complexity]
\label{thm:sample} Fix $\epsilon>0$ and $\delta \in (0,1)$. 
Let $\Delta_{(k),\epsilon} = \max\{\Delta_{(k)},\epsilon\}$, where
$\Delta_{(k)}$ is the ordered gap $\Delta_{k}$ defined in~\eqref{eq:est_acc} in increasing order.
%Recall that the arm are ordered in increasing order of $\Delta_{(k),\epsilon}$.
Then the stopping time $\tau_{\epsilon,\delta}$ of \texttt{PFIwR} is bounded above by 
\vspace{-5pt}
\[
\max\Big\{
O\Big( \sum_{k=1}^{d}
\frac{(\theta_{\max}+\sigma)^{2}}{\Delta_{(k),\epsilon}^{2}}
\log\frac{(\theta_{\max}+\sigma)dL}{\Delta_{(k),\epsilon}\delta}\Big),
T_{\gamma}\Big\}, 
\]
where $T_{\gamma}$ denotes an upper bound on the initial exploration rounds.
%\[
%\max\!\bigg\{\!O\bigg(\frac{\max\!\big\{ \theta_{\max}^{2},\frac{\sigma^{2}d}{p^{2}}\big\}}{\Delta_{\epsilon}^{2}}\!\log\frac{\max\!\big\{ \theta_{\max}^{2},\frac{\sigma^{2}d}{p^{2}}\big\} dL}{\Delta_{\epsilon}^{2}\delta}\bigg),T_{\gamma}\!\bigg\}.
%\]
\end{thm}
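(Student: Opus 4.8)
The plan is to condition on a high-probability \emph{good event} on which the confidence widths $\beta_{k,t}$ in~\eqref{eq:confidence_bound} uniformly dominate the true estimation errors, then to bound, for each arm, the round by which it must leave the undetermined set $\mA_t$, and finally to combine these per-arm resolution times. The device that drives everything is the uniform norm bound $\max_{k\in[K]}\norm{x_k}_{F_t^{-1}}\le t^{-1/2}$ (derived in Section~\ref{sec:optimal_design}), valid once $t\ge T_{\gamma}$; it converts the estimator rate of Theorem~\ref{thm:self} into the explicit decay $\beta_{k,t}=O\big(t^{-1/2}(\theta_{\max}+\sigma\sqrt{\cdots})\big)$, so that a gap of size $\Delta_{k,\epsilon}$ is resolvable after $\Theta(\Delta_{k,\epsilon}^{-2})$ rounds.

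First I would define the good event
\[
\mathcal{G}:=\Big\{\forall t\ge T_{\gamma},\ \forall \ell\in[L],\ \forall k:\ \abs{x_k^{\top}(\Estimator \ell t-\Parameter \ell)}\le\beta_{k,t}\Big\},
\]
and show $\Probability(\mathcal{G})\ge1-\delta$. The two cases of $\beta_{k,t}$ are matched to the two rate bounds of Theorem~\ref{thm:self}: when $\abs{\mA_t}>d$ the uniform bound carrying the $\sigma\sqrt{d\log(Lt/\delta)}$ term controls \emph{all} arms at once, whereas when $\abs{\mA_t}\le d$ the sharper per-arm bound controls only the at most $2d$ arms that matter, namely the undetermined arms together with the critical competitors identified in~\eqref{eq:est_acc}. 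The inflated constants $7L$ and $28Ldt^2$ inside the logarithms of~\eqref{eq:confidence_bound} are precisely what absorbs the $7\delta$ failure probability of Theorem~\ref{thm:self} together with a union bound over $\ell$, over the relevant arms, and over rounds $t$. On $\mathcal{G}$ I would then establish correctness: comparing the empirical gaps $\widehat m_t,\widehat M_t^{2\epsilon}$ of~\eqref{eq:gap_estimate} to the true gaps via $\abs{\widehat m_t(k,k')-m(k,k')}\le\beta_{k,t}+\beta_{k',t}$ (and similarly for $\widehat M_t^{2\epsilon}$), I show no Pareto-optimal arm is ever removed and every arm promoted to $\mP_t^{(1)}$ is $\epsilon$-Pareto-optimal, i.e.\ the output obeys~\eqref{eq:PFI_condition}. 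The same comparison furnishes the resolution criterion: arm $k$ is guaranteed to leave $\mA_t$ once $\beta_{k,t}+\beta_{k',t}$ falls below a constant multiple of its required accuracy $\Delta_{k,\epsilon}$, with $k'$ its critical competitor.

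Second is the counting argument that produces the $d$-term sum. Substituting $\beta_{k,t}=O(t^{-1/2}(\cdots))$ into the resolution criterion yields an implicit inequality of the form $t\gtrsim\Delta_{k,\epsilon}^{-2}(\theta_{\max}+\sigma)^2\log t$, which I invert with the standard lemma to get the resolution round $t_k=O\big(\Delta_{k,\epsilon}^{-2}(\theta_{\max}+\sigma)^2\log\frac{(\theta_{\max}+\sigma)dL}{\Delta_{k,\epsilon}\delta}\big)$; this inversion is what produces the logarithmic factor in the statement. Since arms leave $\mA_t$ in order of decreasing gap, the total cost splits across the two regimes. The phase $\abs{\mA_t}>d$ terminates once the arm with the $(d{+}1)$-th smallest accuracy $\Delta_{(d+1)}$ is resolved, which—because of the extra $\sqrt d$ in this regime's $\beta_{k,t}$—costs $O\big(d\,\Delta_{(d+1),\epsilon}^{-2}(\theta_{\max}+\sigma)^2\log(\cdots)\big)$; and since $\Delta_{(d+1)}\ge\Delta_{(d)}\ge\cdots\ge\Delta_{(1)}$ gives $d\,\Delta_{(d+1),\epsilon}^{-2}\le\sum_{k=1}^{d}\Delta_{(k),\epsilon}^{-2}$, this phase is absorbed into the target sum. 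In the phase $\abs{\mA_t}\le d$ the sharper bound (no $\sqrt d$) clears each remaining arm $(k)$, $k\le d$, by round $t_{(k)}$, so the stopping time is at most the Regime-1 cost plus $\max_{k\le d}t_{(k)}$, both dominated by $O\big(\sum_{k=1}^{d}\Delta_{(k),\epsilon}^{-2}(\theta_{\max}+\sigma)^2\log(\cdots)\big)$. Taking the maximum with $T_{\gamma}$, the rounds needed before $\norm{x_k}_{F_t^{-1}}\le t^{-1/2}$ takes hold, yields the claimed bound.

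The main obstacle will be the clean handoff between the two confidence regimes. I must show that $\abs{\mA_t}$ provably drops to $\le d$ by the end of Regime~1 (using the $\abs{\mA_t}>d$ elimination step with the coarser bound), that its cost is genuinely absorbed by the $d$-term sum, and—crucially—that switching to the sharper per-arm bound, which now controls only $\le 2d$ arms, does not break the uniform coverage needed for correctness: I have to argue that arms discarded in Regime~1 can never re-enter $\mA_t$ and that every critical competitor remains inside the controlled set. Everything downstream (inverting $t\gtrsim\Delta^{-2}\log t$, and tracking constants so the failure probability telescopes to exactly $\delta$) is routine but must be executed carefully to land the stated logarithmic factor.
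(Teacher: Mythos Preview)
Your proposal is correct and follows essentially the same approach as the paper: conditioning on a good event via Theorem~\ref{thm:self}, establishing an elimination criterion of the form $4\max_k\beta_{k,t}\le\Delta_{(k),\epsilon}$ (the paper's Lemma~\ref{lem:elimination}, which is exactly the four-case analysis you allude to), splitting into the two confidence regimes, absorbing the $\abs{\mA_t}>d$ phase via $d\,\Delta_{(d+1),\epsilon}^{-2}\le\sum_{k=1}^d\Delta_{(k),\epsilon}^{-2}$, and inverting the implicit $t\gtrsim\Delta^{-2}\log t$ inequality (the paper's Lemma~\ref{lem:logt}). The only cosmetic difference is that the paper writes the Regime-2 cost as a sum over $k\le d$ rather than your $\max_{k\le d}t_{(k)}$, but both are dominated by the target $\sum_{k=1}^d\Delta_{(k),\epsilon}^{-2}(\cdots)$.
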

The proof and explicit finite-sample bound for $\tau_{\epsilon,\delta}$ is in Appendix~\ref{subsec:sample_proof}. 
%\wy{When the gaps $\Delta_{(1)},\ldots,\Delta_{(d)}$ are large, the number of rounds $T_\gamma$ dominates the other term.  However, the $T_\gamma$ is only required to exploit low-regret arms and the \PFIwR\ algorithm easily identifies PFI with large problem-dependent gaps.}
When the contexts are Euclidean basis, our result directly applies to the PFI in the MAB setting studied by \citet{auer2016pareto}. 
The sample complexity is optimal within a logarithm factor of the lower bound in Theorem~\ref{thm:lower_bound}. 
%In addition to its near-optimal sample complexity, \PFIwR\ simultaneously learns across all arms, which gives freedom to control the regret while identifying the Pareto optimal arms.

\begin{thm}[Upper bounds on Pareto regret]
\label{thm:regret} Fix $\epsilon>0$ and $\delta \in (0,1)$. Let
$\Delta^{\star}_{\epsilon}:=\max\{\epsilon,\;\min_{k\in[K]\setminus\mP_\star}\Delta_{k}^{\star}\}$ 
denote the minimum Pareto regret over suboptimal arms.  
Then, with probability at least $1-\delta$, the instantaneous Pareto regret,
$\Delta_{\Action t}^{\star}\le2\max_{j\in\mA_{t-1}}\beta_{j,t-1},$
for all $t\notin\mE_t$ and $t\ge T_\gamma$, where $\beta_{j,t}$ is the
error bound defined in \eqref{eq:confidence_bound}. 
The cumulative Pareto regret of \texttt{PFIwR},
\vspace{-2pt}
\[
R(\tau_{\epsilon,\delta})=\bar{O}\Big(\theta_{\max}d^3\log\frac{\theta_{\max}d\sigma}{\delta\Delta_{(1),\epsilon}}+\frac{\theta_{\max}d\sigma}{\Delta_{\epsilon}^{\star}}\log\frac{\theta_{\max}d\sigma}{\Delta_{\epsilon}^{\star}\delta}\Big)
\]
with probability at least $1-\delta$, where $\bar{O}$ ignores $\log\log(\cdot)$ terms. 
\end{thm}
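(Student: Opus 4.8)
The plan is to establish the instantaneous bound first and then sum it, treating exploration and exploitation rounds separately. I would begin by fixing a good event $\mathcal{G}$ on which the conclusion of Theorem~\ref{thm:self} holds \emph{simultaneously} for all rounds $t \ge T_\gamma$, all arms $k$, and all objectives $\ell$; a union bound over rounds is affordable precisely because the confidence radii in \eqref{eq:confidence_bound} carry a $t^2$ inside the logarithm, making the per-round failure probabilities summable. Into $\mathcal{G}$ I would also fold the PFI correctness event (no Pareto optimal arm is ever eliminated, so $\mP_\star \subseteq \mA_t \cup \mP_t$ for every $t$) and the matching events $\mathcal{M}_t$, calibrating constants so that $\Probability(\mathcal{G}) \ge 1-\delta$. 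On $\mathcal{G}$ the reward estimates obey $|\Context k^\top(\Estimator{\ell}{t}-\Parameter{\ell})| \le \beta_{k,t}$.

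For the instantaneous bound at an exploitation round $t \notin \mE_t$ with $t \ge T_\gamma$, let $k^\star \in \mP_\star$ attain $\Delta_{\Action t}^\star = m(\Action t, k^\star)$; by correctness $k^\star \in \mA_{t-1}\cup\mP_{t-1}$. Since $\Action t$ is drawn from the estimated-nondominated arms, $\Action t$ is not dominated by $k^\star$ in the round-$(t-1)$ estimates (directly when $k^\star \in \mA_{t-1}$, and through the membership $\mA_{t-1}\subseteq\mC_{t-1}$ when $k^\star \in \mP_{t-1}$). Non-domination produces a coordinate $\ell_0$ with $\Hreward{\ell_0}{\Action t}{t-1} \ge \Hreward{\ell_0}{k^\star}{t-1}$, and feeding the two confidence radii into this inequality gives $y_{k^\star}^{\langle\ell_0\rangle}-y_{\Action t}^{\langle\ell_0\rangle}\le\beta_{k^\star,t-1}+\beta_{\Action t,t-1}$. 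Because $\Delta_{\Action t}^\star=\min_\ell(y_{k^\star}^{\langle\ell\rangle}-y_{\Action t}^{\langle\ell\rangle})$ is no larger than the $\ell_0$ term, this yields exactly $\Delta_{\Action t}^\star\le2\max_{j\in\mA_{t-1}}\beta_{j,t-1}$.

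For the cumulative bound I would split $\sum_{t\le\tau_{\epsilon,\delta}}\Delta_{\Action t}^\star$ into (i) rounds $t<T_\gamma$ together with exploration rounds $t\in\mE_t$, and (ii) exploitation rounds $t\notin\mE_t$ with $t\ge T_\gamma$. Every round in (i) costs at most $2\theta_{\max}$ (as $|y_k^{\langle\ell\rangle}|\le\theta_{\max}$), and the number of such rounds is $T_\gamma+|\mE_{\tau_{\epsilon,\delta}}|=O(d^3\log(d\tau_{\epsilon,\delta}^2/\delta))$ by the construction of $\mE_t$; substituting the stopping-time bound from Theorem~\ref{thm:sample} turns $\log\tau_{\epsilon,\delta}$ into $\log\frac{\theta_{\max}d\sigma}{\delta\Delta_{(1),\epsilon}}$ and reproduces the first regret term. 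For (ii) I would apply the instantaneous bound and sum $2\max_j\beta_{j,t-1}=\tilde{O}(\sqrt{d/t})$. The structural observation that makes this sum converge at the right rate is that a \emph{suboptimal} arm stays in $\mA_t$ only while $\max_j\beta_{j,t}\gtrsim\Delta_k^\star$ (otherwise it is confidently dominated and eliminated), so the regret-accruing rounds are effectively capped at the resolution time $\approx(\Delta_\epsilon^\star)^{-2}$ of the smallest suboptimal gap; summing $\sqrt{d/t}$ up to that horizon gives the second term $\tilde{O}\big(\theta_{\max}d\sigma/\Delta_\epsilon^\star\cdot\log(\cdots)\big)$.

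The main obstacle is the exploitation sum in step (ii), and specifically the need to decouple two distinct time scales: the long sample-complexity tail is spent discriminating among \emph{Pareto optimal} arms (whose mutual accuracy $\Delta_k^+,\Delta_k^-$ can be tiny, driving $\Delta_{(1),\epsilon}$ in Theorem~\ref{thm:sample}), yet these rounds incur \emph{zero} Pareto regret, so regret must be charged only to the suboptimal phase governed by $\Delta_\epsilon^\star$. Making this precise requires the elimination dynamics of $\mA_t$ --- showing suboptimal arms leave the undetermined set promptly once $\beta_{k,t}\lesssim\Delta_k^\star$ --- together with careful handling of the two confidence regimes $|\mA_t|\gtrless d$ in \eqref{eq:confidence_bound}, the one-round lag between the estimate used for selection and the radius $\beta_{j,t-1}$, and the skipped-update rounds where $\mathcal{M}_t$ fails. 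The dependency introduced by recycling exploration samples is \emph{not} an obstacle here, since it is already absorbed into the uniform guarantee of Theorem~\ref{thm:self}, which I treat as a black box.
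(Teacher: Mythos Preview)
Your overall strategy matches the paper's proof: work on the good event from Theorem~\ref{thm:self} together with Lemma~\ref{lem:Pareto_inclusion}, establish the per-round bound, then split $R(\tau_{\epsilon,\delta})$ into exploration rounds (at most $\gamma_{\tau_{\epsilon,\delta}}$ many, each costing $\le 2\theta_{\max}$, which after plugging in Theorem~\ref{thm:sample} gives the first term) and exploitation rounds, truncating the latter at the first time $4\beta_{t-1}\le\Delta_\epsilon^\star$ since the instantaneous regret vanishes thereafter; summing $\beta_t\asymp\sqrt{d/t}$ up to that horizon produces the second term. This is exactly the decomposition the paper uses.

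The one slip is in your instantaneous-bound argument for the case $k^\star\in\mP_{t-1}$. Membership $a_t\in\mA_{t-1}\subseteq\mC_{t-1}$ does \emph{not} give estimated non-domination of $a_t$ by $k^\star$; the definition of $\mC_{t-1}$ only yields $\widehat m_{t-1}(a_t,k^\star)\le\beta_{a_t,t-1}+\beta_{k^\star,t-1}$. After adding the two confidence radii this gives $m(a_t,k^\star)\le 2(\beta_{a_t,t-1}+\beta_{k^\star,t-1})$, which is a factor of two looser and involves $\beta_{k^\star,t-1}$ with $k^\star\notin\mA_{t-1}$, so you do not recover the exact claim $\Delta_{a_t}^\star\le 2\max_{j\in\mA_{t-1}}\beta_{j,t-1}$. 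This is harmless for the $\bar O$ cumulative bound, since $\beta_{k,t}\le\beta_t$ uniformly in $k$ and the paper itself immediately passes to the uniform $\beta_{t-1}$ in that step. The paper's route to the instantaneous bound is different from your case split: it observes that every arm removed from $\mC_s$ is truly dominated by a surviving arm, and uses the monotonicity $m(a_t,k)\le m(a_t,k')$ whenever $y_k\prec y_{k'}$ to telescope $\max_{k\in\mP_\star}m(a_t,k)\le\max_{k\in\mC_1}m(a_t,k)\le\cdots\le\max_{k\in\mA_{t-1}}m(a_t,k)$; at the final set both $a_t$ and the maximizer lie in $\mA_{t-1}$, and the estimated-Pareto-front selection rule gives $\widehat m_{t-1}(a_t,k)=0$ exactly, producing the clean factor $2$.
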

The explicit expression for the finite-sample bound is in Appendix~\ref{subsec:regret_proof}. 
The first term is the regret from the exploration rounds $\mE_t$, whose cardinality $|\mE_t| = O(d^3 \log dt)$ for all $t$. 
Since the algorithm need to increase $\mE_t$ until it identifies all arms on the Pareto front, the bound involves $\Delta_{(1),\epsilon}$, which is the cost for identifying all arms in the Pareto front.
%The worst case happens when $\epsilon <\min_{k\in\mP_{\star}}\Delta_{k}^{+}<\min_{k\notin\mP_{\star}}\Delta_{k}^{\star}$ because the algorithm does not terminate until it identifies all arms on the Pareto front.  
%However, the dependency is only logarithmic, and the term that mainly affects the regret bound is $\Delta^{\star}_{\epsilon}$ in the second term.   
%When the confidence interval on the reward vector drops below $\Delta^{\star}_{\epsilon}$, the algorithm selects the Pareto optimal arms which incur no regret, and thus, the regret bound is linear in only $(\Delta^{\star}_{\epsilon})^{-1}$ and not $\Delta_{\epsilon}^{-1}$. 
When $L=1$ and the contexts are Euclidean basis, Theorem~\ref{thm:sample} and Theorem~\ref{thm:regret} recovers the sample complexity bound and regret bound for the best arm identification in MAB setting established by \citet{degenne2019bridging} and \citet{zhong2023achieving}.

%In Appendix~\ref{sec:regret_lower_bound}, we prove that the regret of \PFIwR\ is optimal up to logarithmic factors among all algorithms that identifies PFI.
%Our results generalizes the simultaneous upper bounds for sample complexity and regret to multi-objective linear bandit case. 

\begin{thm}[A regret lower bound for in PFILin]
\label{thm:regret_lower_bound}
For $\epsilon>0$, let $\Delta_{\epsilon}^{\star}:=\max\{\epsilon,
\min_{k\in[K]\setminus\mathcal{P}_{\star}}\Delta_k^{\star}\}$ denote the
minimum Pareto regret over suboptimal arms. 
Suppose the set of context vectors $\mathcal{X}$ span $\mathbb{R}^d$ and $\min_{\ell \in [L]}\|\theta_{\star}^{\langle \ell \rangle}\|_0 = d$.
Then, for any $\delta \in (0,1/4)$ and $\sigma>0$, there exists a
$\sigma$-sub-Gaussian distribution for the i.i.d. noise sequence
$\{\eta_t\}_{t\ge1}$ such that for any PFI algorithms that satisfies PFI
success condition (1) with failure probability $\delta$, 
\vspace{-2pt}
\[
  R(\tau_{\epsilon,\delta}) \ge
  \frac{\sqrt{3}d\sigma}{8\Delta_{\epsilon}^{\star}} \log
  \frac{1}{4\delta}. 
\]
\end{thm}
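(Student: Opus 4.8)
The plan is to establish this instance-dependent bound by a change-of-measure (transportation) argument, reusing the hard construction behind the sample-complexity lower bound of Theorem~\ref{thm:lower_bound} but tracking the \emph{Pareto regret} rather than the number of samples. I would fix the given instance $(\mX,\Theta_\star)$ — recalling that $\mX$ spans $\Real^d$ and $\min_{\ell}\norm{\Parameter{\ell}}_{0}=d$ — take the noise to be isotropic $\sigma$-Gaussian, and let $\Delta_{\epsilon}^{\star}$ be as defined. Since a lower bound only needs a single hard objective, I would place all the difficulty in one coordinate $\ell$, which explains why $L$ does not appear in the bound. The spanning and full-support hypotheses are exactly what guarantee that no coordinate direction is redundant, so a correct algorithm is forced to resolve the reward along all $d$ directions, and the goal is to show that each such resolution must be paid for by Pareto regret of order $\sigma(\Delta_{\epsilon}^{\star})^{-1}\log(1/\delta)$.

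First I would set up $d$ confusing alternatives. For each direction $i\in[d]$ I construct $\Theta^{(i)}=\Theta_\star+\xi_i e_\ell^\top$, where the shift $\xi_i\in\Real^d$ perturbs only the $\ell$-th objective and is calibrated so that, under $\Theta^{(i)}$, exactly one $\Delta_\epsilon^\star$-suboptimal arm $b_i$ is lifted onto the Pareto front while the Pareto status of every other arm is preserved; the minimal such shift satisfies $x_{b_i}^\top\xi_i\asymp\Delta_\epsilon^\star$. Because the success condition~\eqref{eq:PFI_condition} forces the output set to \emph{exclude} $b_i$ under $\Theta_\star$ with probability $\ge 1-\delta$ but to \emph{include} it under $\Theta^{(i)}$ with probability $\ge 1-\delta$, the laws of the observed trajectory under the two parameters must be well separated. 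Writing $N_a(\tau_{\epsilon,\delta})$ for the number of pulls of arm $a$ before stopping and using that the per-sample Kullback--Leibler divergence between two $\sigma$-Gaussian observations of arm $a$ whose means differ by $x_a^\top\xi_i$ equals $(x_a^\top\xi_i)^2/(2\sigma^2)$, the transportation inequality combined with the standard bound $\mathrm{kl}(1-\delta,\delta)\ge\log\frac{1}{4\delta}$ (valid for $\delta<1/4$) yields
\[
\sum_{a\in[K]}\Expectation_{\Theta_\star}\!\big[N_a(\tau_{\epsilon,\delta})\big]\,\frac{(x_a^\top\xi_i)^2}{2\sigma^2}\ \ge\ \log\frac{1}{4\delta},\qquad i\in[d].
\]

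Finally I would convert this information budget into regret. With the shift calibrated at the critical scale $\norm{\xi_i}\asymp\Delta_{\epsilon}^{\star}$ and $(x_a^\top\xi_i)^2$ controlled through the instance geometry, the left-hand side is dominated by a multiple of $\sum_a\Expectation[N_a]\Delta_a^\star$, i.e.\ by the accumulated Pareto regret; rearranging then gives a per-direction regret of order $\sigma(\Delta_{\epsilon}^{\star})^{-1}\log(1/\delta)$. Summing the $d$ directional inequalities — which can be done additively because the construction makes the $\xi_i$ act on orthogonal components and assigns distinct informative arms $b_i$ to distinct directions — produces the claimed $\frac{\sqrt3\,d\sigma}{8\Delta_{\epsilon}^{\star}}\log\frac{1}{4\delta}$, with the constants tracked through the Gaussian KL and the binary-KL bound. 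The main obstacle is precisely this conversion step in the \emph{linear} model: unlike the multi-armed case, information about one arm's reward leaks through every arm that shares its context direction, so a near-optimal (low-regret) arm could in principle supply the information needed to rule out $b_i$ \emph{for free}, which would destroy the bound. The construction must therefore be engineered — using the spanning and full-support hypotheses — so that the $d$ critical flip directions are (near-)orthogonal to the contexts of all low-regret arms, whence they are resolvable only by sampling arms whose Pareto regret is $\Omega(\Delta_{\epsilon}^{\star})$. Verifying that such a configuration exists for every admissible $(\mX,\Theta_\star)$, and that the single random stopping time $\tau_{\epsilon,\delta}$ can simultaneously absorb all $d$ directional constraints, is where the bulk of the technical effort lies.
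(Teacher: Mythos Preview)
Your route is genuinely different from the paper's, and the obstacle you flag at the end is exactly why the paper does \emph{not} go this way. The paper's proof is a two-line black-box combination, not a direct transportation argument: first, Lemma~\ref{lem:lower_noise} (the engine behind Theorem~\ref{thm:lower_bound}) shows that with a suitably chosen $\sigma$-sub-Gaussian noise family, any PFI algorithm needs at least $T_{0}=\Omega\!\big(d\sigma^{2}(\Delta_{\epsilon}^{\star})^{-2}\log(1/4\delta)\big)$ rounds before it can certify membership in the Pareto front; second, the paper invokes an external minimax regret lower bound for finite-arm linear bandits (Theorem~5.6 in \citet{kim2022squeeze}), which says that over any horizon $T\ge d$ the regret is $\Omega(\sqrt{dT})$. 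Substituting $T=T_{0}$ gives the claimed $d\sigma/\Delta_{\epsilon}^{\star}$ scaling. No per-direction alternatives are built, and no information-to-regret conversion is carried out by hand.

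Compared with this, your direct change-of-measure plan is more self-contained and would, if it went through, yield a genuinely instance-wise bound without appealing to an outside minimax result. But the conversion step you isolate is a real gap, not a technicality: in the linear model nothing prevents the Pareto-optimal arms themselves from spanning $\Real^{d}$, in which case an algorithm can estimate $\Theta_{\star}$ to arbitrary precision while pulling only zero-regret arms, and your $d$ transportation inequalities are satisfied at zero Pareto-regret cost. So the assertion that one can always ``engineer the $d$ flip directions to be (near-)orthogonal to the contexts of all low-regret arms for every admissible $(\mX,\Theta_{\star})$'' is false in general. The paper sidesteps this entirely because its second step is a \emph{minimax} regret bound---it is the adversary, not you, who gets to pick the hard direction once the horizon $T_{0}$ is fixed---and that is what the ``there exists a noise distribution'' quantifier in the theorem is really buying. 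If you want to salvage your approach, you would have to read the statement in that same minimax spirit and choose the instance (or at least the noise law, as in Lemma~\ref{lem:lower_noise}) adversarially, rather than trying to make the argument work uniformly over all $(\mX,\Theta_{\star})$.
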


Theorem~\ref{thm:regret_lower_bound} shows that \PFIwR\ establishes nearly optimal regret among algorithms that achieve PFI and it is the first result on the trade-off between PFI and Pareto regret minimization. 
For $L=1$ and the contexts are Euclidean basis,  Theorem~\ref{thm:regret_lower_bound} recovers the lower bound for regret of BAI algorithms developed by~\citet{zhong2023achieving}. 
Note that the lower bound applies only to the algorithms that guarantee PFI; it is possible for an algorithm that does not guarantee PFI to have a regret lower bound that is lower than the one in Theorem~\ref{thm:regret_lower_bound}.

%-----------------------------
% 6. Experiments
%-----------------------------
\section{Experiments}
\label{sec:experiments} 
%In this section, we report experimental results that demonstrate the efficacy and performance of our proposed estimator and algorithm. 

\subsection{Consistency of the Proposed Estimator on All Actions}
\label{subsec:experiment_estimator}

We conduct the following experiment to empirically verify that our proposed DR-mix estimator
converges on \emph{all} arms while exploiting low-regret arms.
We consider a $3$-arm bandit, i.e. the context vectors are the Euclidean basis in $\Real^3$. 
The parameter $\Parameter{1} =(1,-1,-1)^{\top}$, and the random error is sampled from centered Gaussian distribution with variance $\sigma^2 = 0.01$.  
In rounds $n\le50$, each of three arms is pulled with equal probability; in rounds $n>50$, only the optimal arm~(arm 1) is pulled.

The plots in Figures~\ref{fig:err_opt} and~\ref{fig:err_subopt} illustrate the reward error of the proposed DR-mix estimator, the conventional ridge estimator, and an exploration-mixed estimator defined in~\eqref{eq:mixup} as a function of the number of rounds~$n$. 
The conventional ridge estimator converges only on the arm that is pulled (arm $1$), while the exploration-mixed estimator and the proposed DR-mix estimator converge for all arms, including arms $2$ and $3$ that are not observable in round $n > 50$.
While the exploration-mixed estimator converges as fast as the DR-mix estimator on arm $2$ and $3$, it converges slower on arm $1$; since the DR-mix estimator minimizes on all $d$ context basis while exploration-mixed estimator minimized only one context basis.
For further analysis on the estimators, see Section~\ref{sec:est_density}.
%However, the estimates for arm $1$ computed by the exploration-mixed estimator converge slower than our proposed estimator. 
%\hl{What about on arms~2 and 3. If the exploration-mixed estimator is not slow on arms 2 and 3, why is it slow on arm 1, the chosen arm?}

%---------------------
% Figure 1
%---------------------
\begin{figure}[t]
%\vskip 10pt
\centering
\begin{subfigure}[b]{0.45\textwidth}
    \centering
    \includegraphics[width=\textwidth]{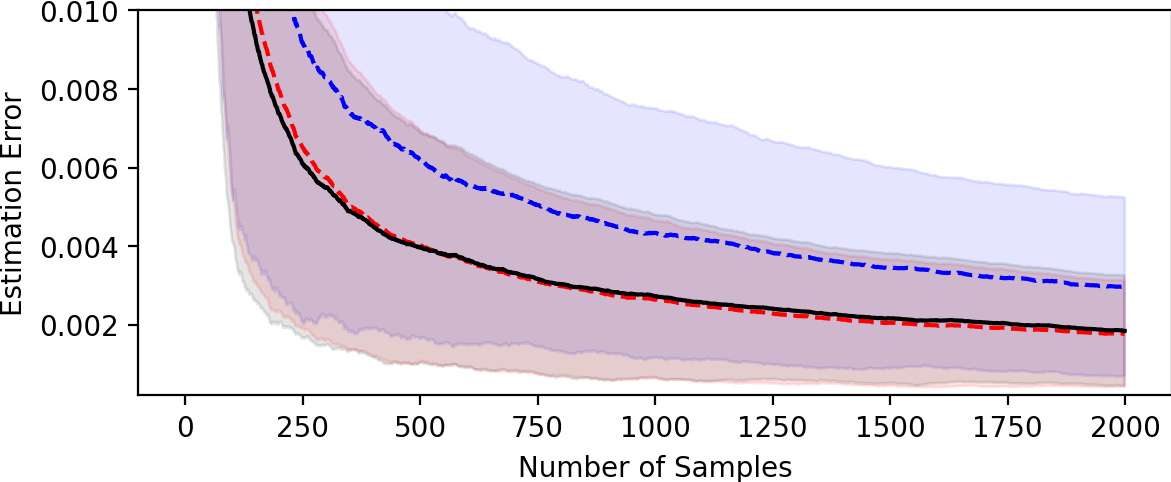}
    \vskip -5pt
    \caption{Error $|\widehat{\theta}_1 - \theta_{\star}^{(1)}|$ for the exploited arm (arm 1)}
    \label{fig:err_opt}
    \end{subfigure}
    \begin{subfigure}[b]{0.45\textwidth}
    \centering
    \includegraphics[width=\textwidth]{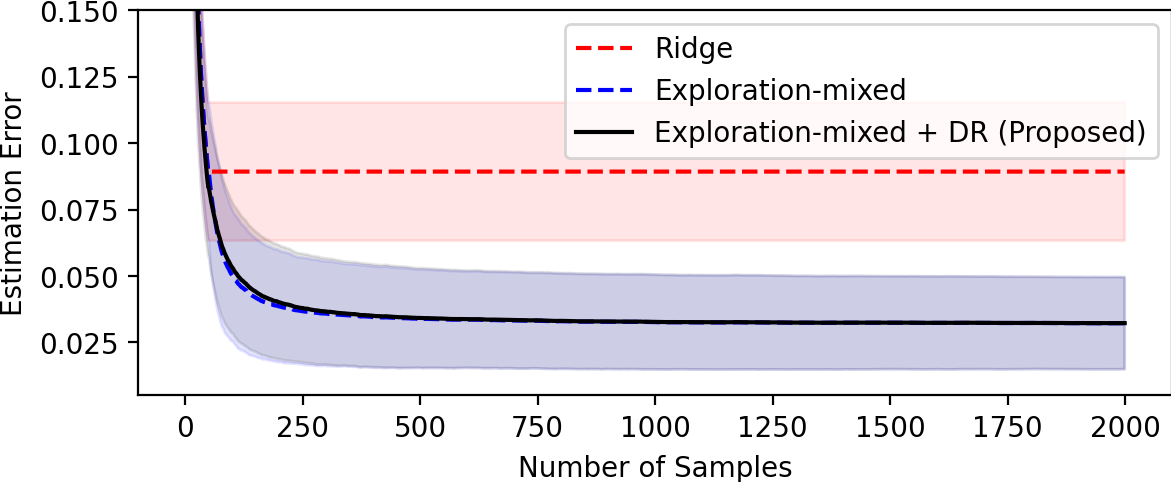}
    \vskip -5pt
    \caption{Error $\|\widehat{\theta}_{\{2,3\}} - \theta_{\star}^{(\{2,3\})}\|_2$ for arm 2 and 3}
    \label{fig:err_subopt}
    \end{subfigure}
    \caption{Estimation errors of the proposed DR-mix estimator~\eqref{eq:estimator} with the conventional ridge estimator, and the exploration-mixed estimator~\eqref{eq:mixup} for a 3-armed bandit problem.
    The line and shade represent the average and standard deviation over 1000 independent experiments.
    The estimators use samples from all arms for $n\in[50]$, and after that, only observe rewards from arm $1$.
    %The proposed DR-mix estimator converges faster than the ridge estimator and the exploration-mixed estimator on all arms while exploiting only arm 1.
    %The proposed estimator learns the rewards of \emph{all} arms while exploiting the reward. 
    %\hl{What is the difference between DR + exploration-mixed and exploration-mixed by itself?} 
    }
    \vspace{-10pt}
\end{figure}

%---------------------
% Figure 2
%---------------------
\begin{figure*}[t]
\centering
\begin{subfigure}[b]{0.325\textwidth}
    \centering
    \includegraphics[width=\textwidth]{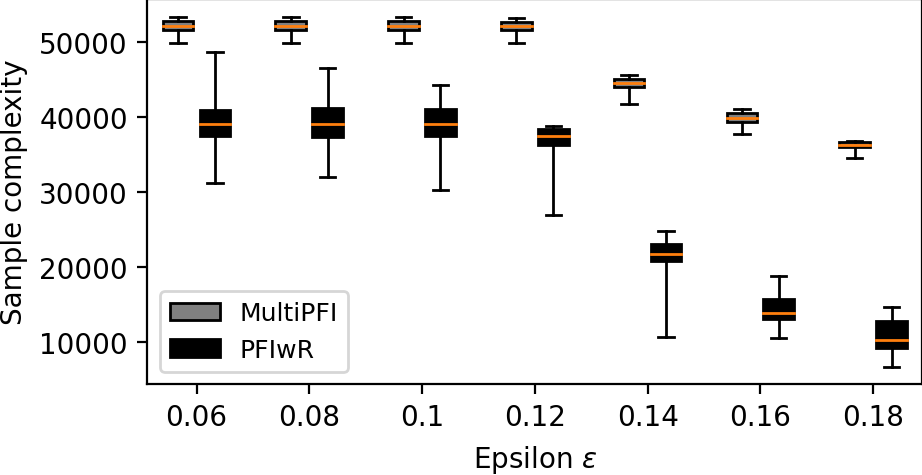}
    \vskip -5pt
    \caption{Number of samples for PFI}
    \label{fig:llvm_sample}
    \end{subfigure}
    \hfill
    \begin{subfigure}[b]{0.325\textwidth}
    \centering
    \includegraphics[width=\textwidth]{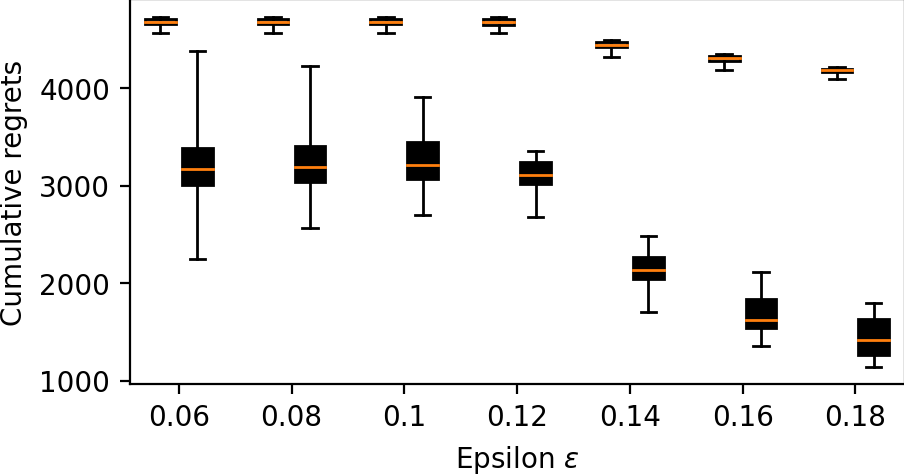}
    \vskip -5pt
    \caption{Total sum of regret at termination}
    \label{fig:llvm_regret}
    \end{subfigure}
    \hfill
    \begin{subfigure}[b]{0.325\textwidth}
    \centering
    \includegraphics[width=\textwidth]{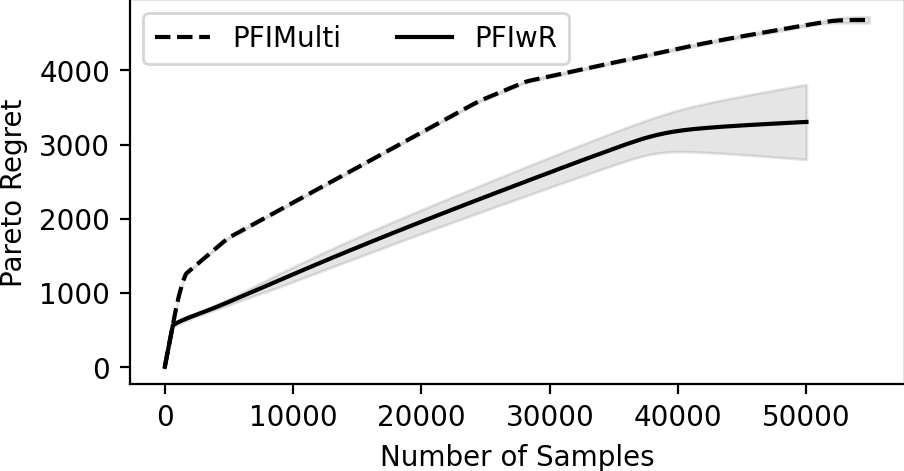}
    \caption{Cumulative regret ($\epsilon=0.06$)}
    \label{fig:llvm_regret_sample}
    \end{subfigure}
    \caption{Comparison of \texttt{PFIwR} and \texttt{MultiPFI} on the SW-LLVM dataset. 
    Both algorithms correctly identify the $\epsilon$-near Pareto optimal arms on all 500 independent experiments.
    %For visualization, the boxplots of the number of samples for PFI exclude 10\% of the outliers above.
    %\texttt{PFIwR} identifies Pareto optimal arms with mostly fewer samples and less regret compared to \texttt{MultiPFI} \citep{auer2016pareto}. 
    } 
    \label{fig:comparison}
    \vskip -10pt
\end{figure*}

\subsection{Comparison of \texttt{MultiPFI} and \texttt{PFIwR}}

Next, we compare \PFIwR\ with \texttt{MultiPFI}~\citep{auer2016pareto} on the SW-LLVM dataset~\citep{zuluaga2016varepsilon} (see Section~\ref{sec:sw-llvm} for details).
Figure~\ref{fig:comparison} reports the performance of \texttt{PFIwR} and \texttt{MultiPFI} \citep{auer2016pareto} on various $\epsilon=\{0.06, 0.08, \ldots, 0.18\}$. 
Both algorithms use a fixed $\delta$ of $0.1$.
In Figure~\ref{fig:llvm_sample}, in most cases, \PFIwR\ uses fewer samples than \texttt{MultiPFI} to satisfy the success  condition~\eqref{eq:PFI_condition}. 
%with accuracy at least $99.8\%$, in all values of $\epsilon$.  
Even though the number of samples used by \PFIwR\ has a larger variance, in most cases, it uses fewer samples for PFI than \texttt{MultiPFI}. 
Figure~\ref{fig:llvm_regret} is a box plot of the cumulative Pareto regret of \texttt{PFIwR} and \texttt{MultiPFI} at the termination of the algorithm --  \PFIwR\ has significantly lower regret than \texttt{MultiPFI}.  
Figure~\ref{fig:llvm_regret_sample} display the cumulative Pareto regret of \texttt{PFIwR} and \texttt{MultiPFI} as a function of rounds $n$ when $\epsilon=0.06$.  
Since the number of rounds required for PFI and the horizon is random, to compute the average and standard deviation of the cumulative regret, we set the instantaneous regret to zero after the algorithm terminates in each experiment. 
The regret of \PFIwR\ increases slower than \texttt{MultiPFI} because it chooses actions that minimize regret in the exploitation phase while learning the rewards.  
The experiment demonstrates that our proposed \PFIwR\ achieves the dual goal of PFI and regret minimization.

\bibliography{ref}
\bibliographystyle{plainnat}

%%%%%%%%%%%%%%%%%%%%%%%%%%%%%%%%%%%%%%%%%%%%%%%%%%%%%%%%%%%%

\appendix
\newpage

\section{Supplementary Materials for Experiments}

\subsection{SW-LLVM Dataset Description}
\label{sec:sw-llvm}
The SW-LLVM dataset \citep{zuluaga2016varepsilon} consists of $1024$ $2$-dimensional reward vectors. 
We normalized the reward vectors by subtracting the average and dividing by the standard deviation for each component.
We created a $16$-arm PFI problem using the methodology in~\citet{auer2016pareto}: we clustered the reward vectors into $16$
groups, with $64$ reward vectors in each group. 
We computed the mean reward $y_i \in \Real^2$ for the $i$-th cluster by taking the average over the $i$-th cluster, and when the algorithm selects an arm $i$ in any round, we randomly sample a reward vector from the $i$-th cluster.

\subsection{Consistency of the Doubly-Robust Estimator Without the Exploration-Mixed Estimator}
\label{sec:impute}
%---------------------
% Figure 3
%---------------------
\begin{figure}[b]
\centering
    \includegraphics[width=0.80\textwidth]{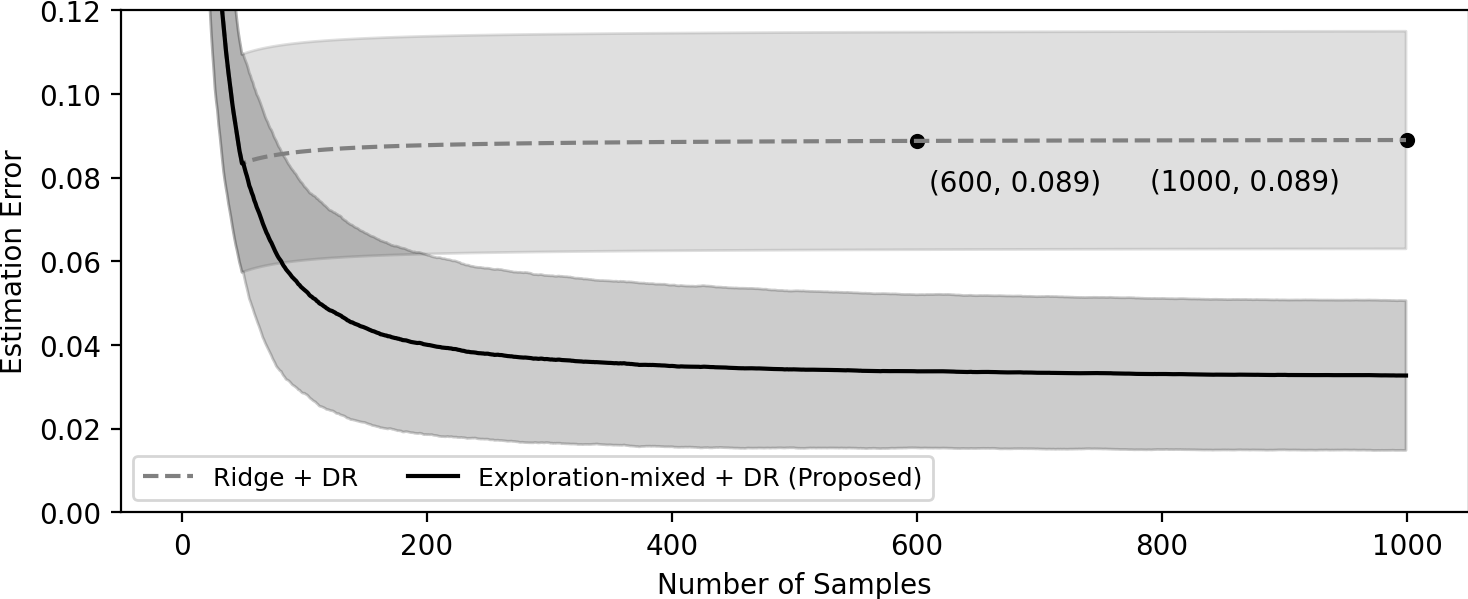}
    \caption{The $\ell_2$-error of the reward on the \textit{unexploited} arms (arms 2 and 3) of the proposed estimator and the DR estimator whose imputation estimator is the conventional ridge estimator in the 3-armed bandit problem (for detailed setting, see Section~\ref{subsec:experiment_estimator}.)
    The estimators use samples from all three arms when $t\le50$ and only arm 1 when $t>50$. When constructing a DR estimator, choosing the imputation estimator that learns rewards on all arms is crucial for convergence on all arms.
    }
\label{fig:DR_err}
\end{figure}

The convergence properties of the DR 
estimator $\Estimator \ell t$ critically depend on 
the imputation estimator $\Impute{t}{\ell}$ used in the pseudo-reward~\eqref{eq:pseudoY}.  
In Figure~\ref{fig:DR_err} we plot the error of the DR estimators with two different imputation estimators: ridge estimator and exploration-mixed estimator~\eqref{eq:mixup} as a function of the number of rounds for  a $3$-armed bandit problem, or equivalently, a linear bandit problem with
the set of context vectors $\mX$ given by the Euclidean basis.
Using the DR method with the ridge estimator does not guarantee convergence on all arms -- it only gets  information from the exploited arm that has no information about the rewards of the other arms. 
In contrast, the DR estimator $\Estimator{l}{t}$ with \eqref{eq:mixup} as an imputation
estimator, converges on all arms. 
This is possible  because ``mixing'' contexts and rewards as in~\eqref{eq:mixup_data} transforms the $3$-armed bandit data into a linear bandit with stochastic contexts that span $\Real^3$ with high probability.
% When we use DR with the estimator~\eqref{eq:mixup}, the reward estimates
% converge to the rewards for all undetermined arms. 

\subsection{Comparison of the Density of the Estimators}
\label{sec:est_density}
The plots in Figure~\ref{fig:density} display the evolution of the density of estimates
of the three methods for arm~1 and arm~2 for $n = 50, 500, 2000$. 
For arm 1 (Figure~\ref{fig:density_opt}), the ridge estimator and the
proposed DR-mix estimator converge faster, i.e., have zero-mean with a lower
variance, compared to the exploration-mixed estimator.    
Since the exploration-mixed estimator~\eqref{eq:mixup} creates the context
and the associated reward by assigning random weights to the current
observation and one from a past exploration round, the reward estimate for
the selected arm becomes unstable. 
In contrast, the DR-mix estimator returns the focus to estimating the reward
of the selected arm and converges faster than the exploration-mixed
estimator. 
For arm 2 (Figure~\ref{fig:density_subopt}), while the ridge estimator
diverges with increasing variance, the exploration-mixed estimator and the
DR-mix estimator converge.  
Since there are no new samples from arm 2, the term $\sqrt{n}$ increases
the mean and the variance of the density of the ridge estimator.  
In contrast, the mean of the density of the exploration-mixed estimator
and the proposed DR-mix estimator converges to 0, and the variance increases slower than that of the ridge estimator.  
%\hl{Do these results implies that DR-mix estimator makes an $O(1)$ error on the unselected arms, i.e. the error $\hat{\theta}-\theta_{\star} = O(n^{-\beta})$ for $\beta < 0.5$, or can it diverge?} 
%\wy{(With fixed number of unselected samples ($n_2$ for arm $2$ and $n_3$ for arm $3$), the worst-case error is $O(\min\{n_2,n_3\}^{-1/2})$. When $n_2$ and $n_3$ increases slower than $O(\log n)$, then the error of the estimator, $\hat{\theta}-\theta_{\star} = O(n^{-\beta})$ for $\beta < 0.5$. The convergence rate depends on how fast the unselected samples increases compared to $n$.)}

The fast convergence of the DR-mix estimator is a consequence of combining the
exploration-mixed data and DR technique.    
The exploration-mixed estimator~\eqref{eq:mixup} leverages the linear structure of the mean reward vector to create a pair of ``mixed'' contexts and rewards by combining the context of the selected arm (arm 1) with randomly selected arms (arms 2 and 3) from the exploration phase.
While the ``mixing'' allows the exploration-mixed estimator to learn all $d=3$ entries of the parameter vector, it minimizes $\Expectation[|\Tx{\Action t}{t}^\top(\tilde{\theta}_t-\theta_{\star})|^2]$ instead of the basis vectors for target contexts $\mX$ of interest.
%of interest $\sum_{k=1}^{3}|_k^\top(\tilde{\theta}_t-\theta_{\star})|^2$.
Although the exploration-mixed estimator eventually converges to the true parameter, $\theta_{\star}$, the target of interest $\sum_{k=1}^{3}|x_k^\top(\tilde{\theta}_t-\theta_{\star})|^2 = \|\tilde{\theta}_t - \theta_{\star}\|_2$ converges slower than $\Expectation[|\Tx{\Action t}{t}^\top(\tilde{\theta}_t-\theta_{\star})|^2]$. 
Therefore, we apply the DR method and use pseudo-rewards~\eqref{eq:pseudoY} to move the target to the one of interest by modifying the context from $\Tx{\Action t}{t}$ to $\mX$ (equivalently, changing Gram matrix to $\sum_{k=1}^K x_kx_k^\top)$.
Thus, our proposed estimator minimizes the target $\sum_{k=1}^{3}|x_k^\top(\widehat{\theta}_t-\theta_{\star})|^2$ directly and estimates the mean rewards of the arms significantly faster.

%---------------------
% Figure 2
%---------------------
\begin{figure}[t]
\centering 
\begin{subfigure}{0.90\textwidth} 
\includegraphics[width=0.3\textwidth]{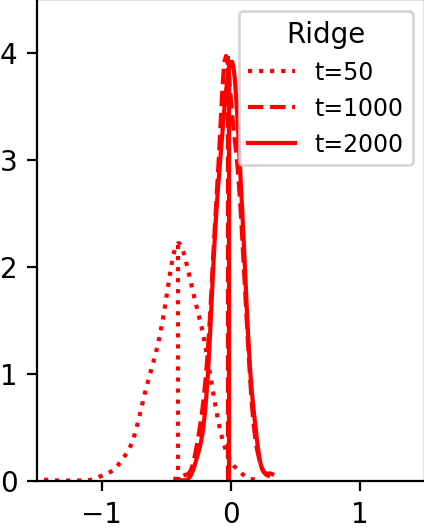}
\hfill
\includegraphics[width=0.3\textwidth]{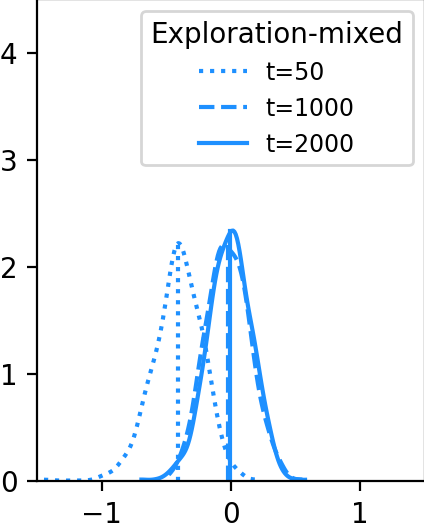}
\hfill
\includegraphics[width=0.3\textwidth]{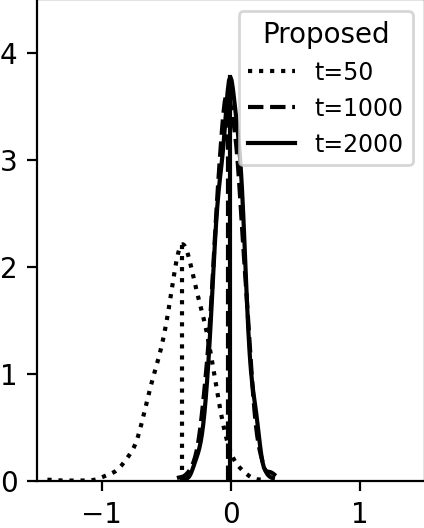}
\hfill
\caption{On the \textit{exploited} arm (arm 1)}
\label{fig:density_opt} 
\end{subfigure}
\\
\begin{subfigure}{0.90\textwidth} 
\includegraphics[width=0.3\textwidth]
{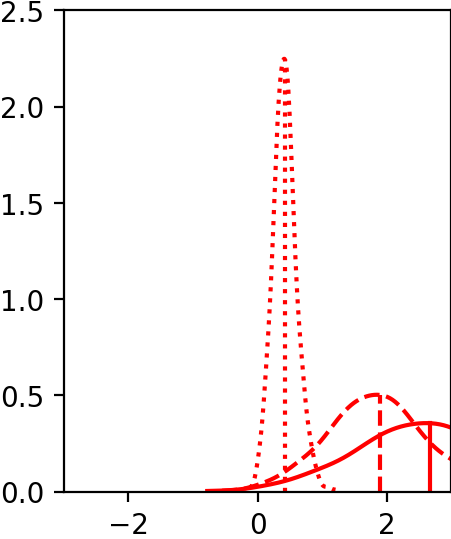}
\hfill
\includegraphics[width=0.3\textwidth]{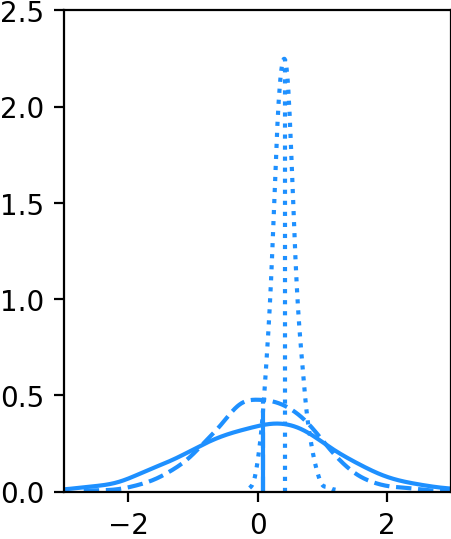}
\hfill
\includegraphics[width=0.3\textwidth]{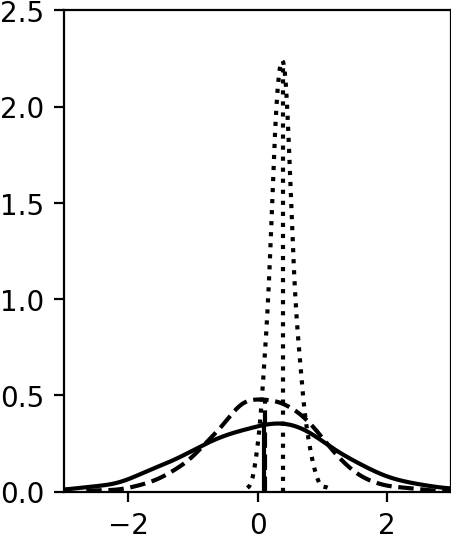}
\hfill
\caption{On the \textit{unexploited} arm (arm 2)}
\label{fig:density_subopt}
\end{subfigure} 
\caption{Changes in densities of
  $\sqrt{n}(\widehat{\theta}-\theta_{\star})$ over the number of samples
  $n=50, 500, 2000$ on the \textit{exploited} arm (arm 1) and the
  \textit{unexploited} arm (arm 2). 
The vertical line represents the average computed from 1000 independent experiments.
The proposed DR-mix estimator converges faster with lower variance than the ridge
and exploration-mixed estimator on all arms. 
}
\label{fig:density} 
\end{figure}

\section{Missing Proofs}
\label{sec:missing_proofs}

\subsection{Proof of Theorem~\ref{thm:lower_bound}}
\label{sec:lower_bound_proof}

% \begin{thm}[A lower bound of the sample complexity for PFILin.]
% \label{thm:lower_bound} 
% %Given $\epsilon>0$, define $\Delta_{(k),\epsilon}:=\max\{\Delta_{k},\epsilon\}$ in the ascending order $\Delta_{1,\epsilon}\le\cdots\le\Delta_{K,\epsilon}$.
% Suppose the set of context vectors $\mX$ spans $\Real^{d}$ and the parameters $\Parameter
% l\in\{\theta\in\Real^{d}:\|\theta\|_{0}=d\}$, for all $\ell \in [L]$. 
% Then, for any tolerance $\epsilon>0$, $\delta\in(0,1/4)$ and $\sigma>0$, there exist a $\sigma$-Gaussian distribution for the i.i.d. noise sequence $\{\eta_{t}\}_{t\ge1}$ such that any algorithm requires at least  
% \[
% \sum_{k=1}^{d}\frac{\sigma^2}{3\Delta_{(k),\epsilon}^{2}}\log\frac{3dL}{4\delta}
% \]
% rounds to meet the success condition~\eqref{eq:PFI_condition}. 
% \end{thm}
%The first term in the lower bound is the minimum number of rounds required to identify whether the $K-d$ arms are in Pareto front, which is linear in $d$ instead of the number of arms $K$ because estimating $d$ entries of $\Parameter \ell$ is sufficient. 
%The second term is the number of rounds required to identify the $d$ arms whose gaps are smaller than other $K-d$ arms, which is derived by considering PFI with $d$-armed bandit. 

Before we prove Theorem~\ref{thm:lower_bound}, we present a lower bound for the error of linearly parameterized rewards. 
\begin{lem}
\label{lem:lower_noise} Suppose $\mX:=\{x_{k}\in\Real^{d}:k\in[K]\}$
spans $\Real^{d}$. For $k\in[K]$ and $i_{k}\geq1$, let $Y^{(i_{k})}=\theta_{\star}^{\top}x_{k}+\eta_{k}^{(i_{k})}$,
where $\eta_{k}^{(i_{k})}$ is an identically and independently distributed
noise. Then, for any $\sigma>0$ and $\epsilon\in(0,\sigma/4)$. Then
there exist mean-zero $\sigma$-sub-Gaussian random noises such that
the error 
\[
\max_{x\in\mX}\abs{x^{\top}\big(\widehat{\theta}_{n}-\theta_{\star}\big)}>\epsilon,
\]
with probability at least $\delta\in(0,1/4)$, for any estimator $\widehat{\theta}_{n}$
that uses at most $n_{1}+\cdots+n_{K}\le\frac{d\sigma^{2}}{3\epsilon^{2}}\log\frac{1}{4\delta}$
number of independent samples $\cup_{k=1}^{K}\{x^{(i_{k})},Y^{(i_{k})}\}_{i_{k}\in[n_{k}]}$.
\end{lem}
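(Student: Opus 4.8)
The plan is to prove this by a two-point (Le~Cam) argument in which the dimension $d$ enters through a single trace identity rather than through a family of $d$ competing hypotheses. I would take the noise to be i.i.d.\ $N(0,\sigma^{2})$ (which is mean-zero and $\sigma$-sub-Gaussian, hence an admissible choice), fix an arbitrary deterministic allocation $(n_{1},\dots,n_{K})$ with $N:=\sum_{k}n_{k}\le\frac{d\sigma^{2}}{3\epsilon^{2}}\log\frac{1}{4\delta}$, and set $V:=\sum_{k=1}^{K}n_{k}x_{k}x_{k}^{\top}$. If $V$ is singular, then because $\mathcal{X}$ spans $\mathbb{R}^{d}$ there is a unit vector $w\in\ker V$ and an arm $k$ with $x_{k}^{\top}w\neq0$; the pair $\theta_{0}=0$ and $\theta_{1}=(2\epsilon/|x_{k}^{\top}w|)\,w$ produces identical data laws (zero Fisher information along $w$) yet is separated by $2\epsilon$ along $x_{k}$, so any estimator errs by more than $\epsilon$ on one of them with probability $\ge 1/2>\delta$. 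Hence I may assume $V\succ0$ and work with $V^{-1}$.

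The geometric heart of the argument is to find a single perturbation direction that is simultaneously hard to detect (small KL) and forces a large error on some $x_{k}\in\mathcal{X}$. I would exploit the identity
\[
\sum_{k=1}^{K}n_{k}\,x_{k}^{\top}V^{-1}x_{k}=\mathrm{Tr}\!\left(V^{-1}\sum_{k}n_{k}x_{k}x_{k}^{\top}\right)=\mathrm{Tr}(I_{d})=d,
\]
which, since $\sum_{k}n_{k}=N$, yields by averaging $\max_{k}x_{k}^{\top}V^{-1}x_{k}\ge d/N$. Let $k^{\star}$ attain this maximum and set $v:=\beta V^{-1}x_{k^{\star}}$, choosing $\beta>0$ so that $\max_{k}|x_{k}^{\top}v|=2\epsilon$ (in fact slightly above $2\epsilon$, to make the separation strict). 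Because $|x_{k^{\star}}^{\top}v|=\beta\,x_{k^{\star}}^{\top}V^{-1}x_{k^{\star}}$ is one of the terms in that maximum, $\beta\le 2\epsilon/(x_{k^{\star}}^{\top}V^{-1}x_{k^{\star}})$, and therefore
\[
v^{\top}V v=\beta^{2}\,x_{k^{\star}}^{\top}V^{-1}x_{k^{\star}}\le\frac{4\epsilon^{2}}{x_{k^{\star}}^{\top}V^{-1}x_{k^{\star}}}\le\frac{4\epsilon^{2}N}{d}.
\]
This is exactly the bound that converts the budget on $N$ into a bound on the information distance.

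Finally I would take $\theta_{0}=0$ and $\theta_{1}=v$. For Gaussian noise the KL divergence between the two joint data laws is $\mathrm{KL}(\mathbb{P}_{0}\|\mathbb{P}_{1})=\frac{1}{2\sigma^{2}}\sum_{k}n_{k}(x_{k}^{\top}v)^{2}=\frac{1}{2\sigma^{2}}v^{\top}V v\le\frac{2\epsilon^{2}N}{d\sigma^{2}}\le\tfrac{2}{3}\log\frac{1}{4\delta}$ by the budget assumption. Since $\max_{k}|x_{k}^{\top}(\theta_{1}-\theta_{0})|>2\epsilon$, the accuracy events $G_{i}:=\{\max_{x\in\mathcal{X}}|x^{\top}(\widehat{\theta}_{n}-\theta_{i})|\le\epsilon\}$ are disjoint by the triangle inequality on coordinate $k^{\star}$, so $G_{1}\subseteq G_{0}^{c}$. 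The Bretagnolle–Huber inequality applied to the event $G_{0}$ then gives $\mathbb{P}_{0}(G_{0}^{c})+\mathbb{P}_{1}(G_{1}^{c})\ge\mathbb{P}_{0}(G_{0}^{c})+\mathbb{P}_{1}(G_{0})\ge\tfrac{1}{2}e^{-\mathrm{KL}}$, whence $\max_{i}\mathbb{P}_{i}(\text{estimation error}>\epsilon)\ge\tfrac{1}{4}e^{-\mathrm{KL}}\ge\tfrac{1}{4}(4\delta)^{2/3}\ge\delta$, the last inequality holding because $\delta<1/4$. Choosing $\theta_{\star}$ to be whichever of $\theta_{0},\theta_{1}$ attains the larger error probability completes the proof.

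The main obstacle is the geometric coupling in the second step: one must perturb along a direction that is at once nearly uninformative under the design $V$ and strongly visible to some context $x_{k}\in\mathcal{X}$, and these requirements pull against each other, since the least-informative eigendirection of $V$ is typically also nearly orthogonal to every $x_{k}$. Taking $v\propto V^{-1}x_{k^{\star}}$ together with the trace identity $\sum_{k}n_{k}x_{k}^{\top}V^{-1}x_{k}=d$ resolves the tension and is precisely what manufactures the factor $d$ out of a mere two-point construction. The remaining technical points I would dispatch in passing are the measure-zero boundary in the separation (absorbed by taking the separation strictly above $2\epsilon$) and the singular-design case (handled above); the hypothesis $\epsilon<\sigma/4$ is not needed for this Gaussian construction and plays only the role of a mild regularity condition.
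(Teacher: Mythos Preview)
Your argument is correct and takes a genuinely different route from the paper. The paper builds a custom two-point noise law indexed by $v\in\{-1,1\}^{d}$ (this is where the hypothesis $\epsilon<\sigma/4$ is used, to keep the probabilities in $(0,1)$), picks $d$ linearly independent contexts $\tilde{x}_{1},\dots,\tilde{x}_{d}$, reduces the estimation problem to recovering the $d$ signs $v_{j}$, and then invokes Assouad's lemma together with the Bretagnolle--Huber/KL bound to obtain the failure probability; the factor $d$ enters through the $d$ separate coordinate tests. You instead keep the noise Gaussian, stay with a single Le~Cam pair $(\theta_{0},\theta_{1})$, and manufacture the factor $d$ purely geometrically via the identity $\sum_{k}n_{k}\,x_{k}^{\top}V^{-1}x_{k}=\mathrm{Tr}(I_{d})=d$, which guarantees an arm $k^{\star}$ with $x_{k^{\star}}^{\top}V^{-1}x_{k^{\star}}\ge d/N$ and hence a perturbation $v\propto V^{-1}x_{k^{\star}}$ that is simultaneously $2\epsilon$-separating along some arm and has $v^{\top}Vv\le 4\epsilon^{2}N/d$.

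What each approach buys: the paper's Assouad route is the textbook way to pick up a dimension factor, but it requires the bespoke non-Gaussian noise and a somewhat delicate reduction to $d$ coordinate hypotheses. Your route is shorter and more elementary (two hypotheses, Gaussian noise, one KL computation), and it lands exactly on the constant $3$ in the statement; it also makes clear that the hypothesis $\epsilon<\sigma/4$ is an artifact of the paper's noise construction rather than intrinsic. The only point to flag is the order of quantifiers: your $\theta_{\star}\in\{\theta_{0},\theta_{1}\}$ depends on the allocation $(n_{1},\dots,n_{K})$ through $V$, so the result you prove is the standard minimax statement ``for every fixed sampling scheme and estimator there is a $\theta_{\star}$ that defeats it'', which is precisely how the paper's own proof is structured as well (their hard $v$ also depends on the sampling through $n_{\min}$).
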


\begin{proof}
\textbf{Step 1. Constructing a noise distribution:} For $k\in[K]$
% we construct the noise distribution
the noise 
\[
\eta_{k}^{(i_{k})}=\begin{cases}
-\frac{\sigma}{2\left(1+\frac{2\epsilon}{\sigma}v_{k}\right)} &
\text{w.p.}\; p_{+}(v_k) := \frac{1}{2}+\frac{\epsilon}{\sigma}v_{k}\\ 
\frac{\sigma}{2\left(1-\frac{2\epsilon}{\sigma}v_{k}\right)} &
\text{w.p.}\; p_{-}(v_k) :=\frac{1}{2}-\frac{\epsilon}{\sigma}v_{k} 
\end{cases},
\]
% parametrized with
where
$v_{k}\in\{-1,1\}$. Let $\Probability_{v_{k}}$
denote the probability measure for the noise $\eta_{k}$. Since 
\[
\abs{\frac{1}{2\left(1+\frac{2\epsilon}{\sigma}\right)}}\le\abs{\frac{1}{2\left(1-\frac{2\epsilon}{\sigma}\right)}}\le1,
\]
for all $\epsilon\in(0,\frac{\sigma}{4})$, we have $|\eta_{k}^{(i_{k})}|\le\sigma$.
It is easy to show that the difference
\[
\abs{\frac{\sigma}{2\left(1-\frac{2\epsilon}{\sigma}v_{k}\right)}+\frac{\sigma}{2\left(1+\frac{2\epsilon}{\sigma}v_{k}\right)}}>2\epsilon,
\]
for all $v_{k}\in\{-1,1\}$. 

\textbf{Step 2. Reduction to $d$ parameter estimation: }For $v\in\{-1,1\}^{d}$
let
$\Probability_{v}^{(n)}=\prod_{k=1}^{K}\prod_{i_{k}=1}^{n_{k}}\Probability_{v_{k}}^{(i_{k})}$ 
denote % the product measure of $n_{1}+\cdots+n_{K}=n$ noises from
% $K$ arms.
joint distribution for $\bm{\eta}^{(n)} = \{\eta_{k}^{i_k}: i_k\in [n_k], k \in [K]\}$.
Let $\mS = \{\tilde{x}_{1},\ldots,\tilde{x}_{d}\}\subset\mX$ % denote
% a linearly independent subset of $\mX$
denote a set of linear independent vectors, 
and let $y_{j}:=\tilde{x}_{j}^{\top}\theta_{\star}$, $j \in \mS$. 
Let $Y_{k}=y_{k}+\eta_{k}$ denote a sample for arm $k\in[K]$, independent
of $n$ initial samples, with $\eta_{k}\sim\Probability_{v_{k}}$. Let
$\Probability_{v}:=\prod_{k=1}^{K}\Probability_{v_{k}}\times\Probability_{v}^{(n)}$.

Let $\widehat{\theta}_{n}:\mX\times\Real\to\Real$ denote any estimator that estimates $\theta_{\star}$ using the $n  = \sum_{k=1}^K n_k$ data points.
% the estimator that uses $n_{k}$ independent
% samples from arm $k\in[K]$ produces
% $\widehat{\theta}_{n}:\mX\times\Real\to\Real$.
Clearly, 
\[
\Probability_{v}\left(\max_{k\in[K]}
  \abs{x_{k}^{\top}\left(\widehat{\theta}_{n}(x_{k},Y_{k})-\theta_{\star}\right)}>\epsilon\right)  
\ge
\Probability_{v}\left(\max_{j\in[d]}
  \abs{\tilde{x}_{j}^{\top}\widehat{\theta}_{n}(\tilde{x}_{j},Y_{j})-y_{j}}>\epsilon\right).  
\]
% For each $k\in[K]$ and $v_{k}\in\{-1,1\}$, define $p_{+}(v_{k})=1/2+\epsilon v_{k}/\sigma$
% and $p_{-}(v_{j})=1/2-\epsilon v_{k}/\sigma$. Then,
Next,
\begin{eqnarray}
\lefteqn{\Probability_{v}\left(\max_{j\in[d]}\abs{\tilde{x}_{j}^{\top}\widehat{\theta}_{n}(\tilde{x}_{j},Y_{j})-y_{j}}\le\epsilon\right)}\\ 
  & \le & \Probability_{v}\left(\bigcap_{j=1}^{d}\left\{
          \abs{\tilde{x}_{j}^{\top}\widehat{\theta}_{n}(\tilde{x}_{j},Y_{j})-y_{j}}\le\epsilon\right\}
          \right) \notag \\ 
& = & \Probability_{v}\left(\bigcap_{j=1}^{d}\left\{
      \abs{\tilde{x}_{j}^{\top}\widehat{\theta}_{n}(\tilde{x}_{j},Y_{j})-Y_{j}-\eta_{j}}\le\epsilon\right\}
      \right) \notag \\
  & = &
        \Expectation_{v}^{(n)} \left[\left.\prod_{j=1}^{d}\Probability_{v_{j}}\left(\abs{\tilde{x}_{j}^{\top}
        \widehat{\theta}_{n}(\tilde{x}_{j},Y_{j})-Y_{j}-\eta_{j}}\le\epsilon 
        \right)\right|% \{\eta_{k}^{(i_{k})}:i_{k}\in[n_{k}],k\in[K]\}\right)
        \bm{\eta}^{(n)} 
        \right], \label{eq:P_v_decomp}
\end{eqnarray}
where the last inequality holds because $\eta_{1},\ldots,\eta_{d}$
are independent. % Let
% $\boldsymbol{\eta}^{(n)}:=\{\eta_{k}^{(i_{k})}:i_{k}\in[n_{k}],k\in[K]\}$. 
For each $j\in[d]$, 
\begin{eqnarray}
   \lefteqn{\Probability_{v_{j}}\left(\left.\abs{\tilde{x}_{j}^{\top}
  \widehat{\theta}_{n}-Y_{j}-\eta_{j}}>\epsilon\right|\boldsymbol{\eta}^{(n)}\right)}
  \notag \\    
 & = &
       \Probability_{v_{j}}\left(\left.\abs{\tilde{x}_{j}^{\top}
       \widehat{\theta}_{n}-Y_{j}+\frac{\sigma}{4p_{+}(v_{j})}}
       >\epsilon\right|\eta_{j}=-\frac{\sigma}{4p_{+}(v_{j})},   
       \boldsymbol{\eta}^{(n)}\right)p_{+}(v_{j}) \notag \\ 
 && \mbox{}
    +\Probability_{v_{j}}\left(\left.\abs{\tilde{x}_{j}^{\top}\widehat{\theta}_{n}-Y_{j}
    -\frac{\sigma}{4p_{-}(v 
    _{j})}}>\epsilon\right|\eta_{j}=\frac{\sigma}{4p_{-}(v_{j})},
    \boldsymbol{\eta}^{(n)}\right)p_{-}(v_{j}), \notag \\ 
 & \ge &  
   \Probability_{v_{j}}\left(\left.\abs{\tilde{x}_{j}^{\top}\widehat{\theta}_{n}-Y_{j}
         + \frac{\sigma}{4p_{+}(v_{j})}}>\epsilon\right|\eta_{j} =
         -\frac{\sigma}{4p_{+}(v_{j})}, \boldsymbol{\eta}^{(n)}\right)
         \min\left\{ p_{+}(v_{j}),p_{-}(v_{j})\right\} \notag \\
  && \mbox{}
         +\Probability_{v_{j}}\left(\left.\abs{\tilde{x}_{j}^{\top}\widehat{\theta}_{n}-Y_{j}
     -\frac{\sigma}{4p_{-}(v_{j})}}>\epsilon\right|\eta_{j} =
     \frac{\sigma}{4p_{-}(v_{j})},\boldsymbol{\eta}^{(n)}\right)
   \min\left\{ p_{+}(v_{j}),p_{-}(v_{j}) \right\} \notag \\  
  & \ge & \min\left\{ p_{+}(v_{j}),p_{-}(v_{j})\right\}, \label{eq:ProbLowBnd} 
\end{eqnarray}
where the last inequality holds because for $v_{j}\in\{-1,1\}$, 
\[
\left\{ u\in\Real:\abs{u-Y_{j}+\frac{\sigma}{4p_{+}(v_{j})}}>\epsilon\right\} \cup\left\{ u\in\Real:\abs{u-Y_{j}-\frac{\sigma}{4p_{-}(v_{j})}}>\epsilon\right\} =\Real.
\]
% For each $j\in[d]$ and $\widehat{v}_{j}\in\{-1,1\}$, define a function
Define the function $\hat{f}(x,z): \Real \times \{-1,1\}$ as follows:
\[
\hat{f}(x,z):=\begin{cases}
x-\frac{\sigma}{4p_{+}(z)} & \text{if}\;p_{+}(z)>p_{-}(z)\\
x+\frac{\sigma}{4p_{-}(z)} & \text{o.w.}
\end{cases}.
\]
% \[
% \hat{f}(Y_{j},\widehat{v}_{j}):=\begin{cases}
% Y_{j}-\frac{\sigma}{4p_{+}(\widehat{v}_{j})} & \text{if}\;p_{+}(\widehat{v}_{j})>p_{-}(\widehat{v}_{j})\\
% Y_{j}+\frac{\sigma}{4p_{-}(\widehat{v}_{j})} & \text{o.w.}
% \end{cases}.
% \]
Then, the estimator $\hat{f}(Y_{i},v_{j})$ attains % the infimum
% of the probability,
the minimum in \eqref{eq:ProbLowBnd} since
\begin{eqnarray*}
  \Probability_{v_{j}}\left(\abs{\hat{f}(Y_{j},v_{j})-y_{j}}>\epsilon\right)
  & = &
    \Probability_{v_{j}}\left(\left.\hat{f}(Y_{j},v_{j})=Y_{j}+\frac{\sigma}{4p_{-}(v_{j})}\right|
      \eta_{j}=-\frac{\sigma}{4p_{+}(v_{j})}\right)p_{+}(v_{j})\\  
 && \mbox{}
    +\Probability_{v_{j}}\left(\left. \hat{f}(Y_{j},v_{j}) =
    Y_{j}-\frac{\sigma}{4p_{+}(v_{k})}\right|\eta_{j}=\frac{\sigma}{4p_{-}(v_{j})}\right)p_{-}(v_{j})\\   
& = & \Indicator{p_{+}(v_{j})\le p_{-}(v_{j})}p_{+}(v_{j}) +
      \Indicator{p_{+}(v_{j})>p_{-}(v_{j})}p_{-}(v_{j})\\ 
& = & \min\left\{ p_{+}(v_{j}),p_{-}(v_{j})\right\} \\
& \le &
        \Probability_{v_{j}}\left(\left. \abs{\tilde{x}_{j}^{\top}\widehat{\theta}_{n}-Y_{j}-\eta_{j}}
        > \epsilon\right|\boldsymbol{\eta}^{(n)}\right). 
\end{eqnarray*}
Thus, for $j\in[d]$ and $v_{j}\in\{-1,1\}$, 
\[
\Probability_{v_{j}}\left(\left.\abs{\tilde{x}_{j}^{\top}\widehat{\theta}_{n}-Y_{j}-\eta_{j}}\le\epsilon\right|\boldsymbol{\eta}_{n}\right)\le\Probability_{v_{j}}\left(\abs{\widehat{f}(Y_{j},v_{j})-y_{j}}\le\epsilon\right).
\]
Plugging this upper bound in~\eqref{eq:P_v_decomp}, we get 
\begin{eqnarray*}
\Probability_{v}\left(
  \max_{j\in[d]} \abs{\tilde{x}_{j}^{\top}
  \widehat{\theta}_{n}(\tilde{x}_{j},Y_{j})-y_{j}} \le \epsilon\right) 
  & \le & 
    \Expectation_{v}^{(n)}\left[
          \prod_{j=1}^{d} \Probability_{v_{j}}
          \left(\abs{\hat{f}(Y_{j},v_{j})-y_{j}}\le\epsilon\right)\right]\\   
& = & \Probability_{v}\left(\bigcap_{j=1}^{d}\left\{
      \abs{\hat{f}(Y_{j},v_{j})-y_{j}}\le\epsilon\right\} \right).
\end{eqnarray*}
Let $n_{\min}=\min_{j\in[d]}n_{j}$ denote the minimum number of samples over arms in $[d]$. 
Let 
$\Probability_{v}^{(n_{\min})} =
\prod_{i=1}^{n_{\min}}\prod_{j=1}^{d}\Probability_{v_{j}}^{(i)}$ denote
the joint probability of the noise associated with $n_{\min}$ samples from
each arm $\tilde{x} \in \mS$. 
For $j\in[d]$ and any estimator $\widehat{v}_{j}$
for $v_{j}$ that uses $n_{\min}$ samples from arm $j=1,\ldots,d$,
\begin{eqnarray*}
\Probability_{v}\left(\bigcap_{j=1}^{d}\left\{
  \abs{\widehat{f}(Y_{j},v_{j})-y_{j}}\le\epsilon\right\}
  \right)&=&\Probability_{v}\left(\bigcap_{j=1}^{d}\left\{
             \abs{\widehat{f}(Y_{j},\widehat{v}_{j})-y_{j}}\le\epsilon\right\}
             \cap\left\{ \widehat{v}_{j}=v_{j}\right\} \right)\\ 
&\le&\Probability_{v}\left(\bigcap_{j=1}^{d}\left\{ \widehat{v}_{j}=v_{j}\right\} \right)\\
% &=&\mathbb{E}_{v}^{(n_{\min})}\left[\Indicator{\bigcap_{j=1}^{d}\left\{
% \widehat{v}_{j}=v_{j}\right\}
% }\Expectation_{v}\left[\left.1\right|\boldsymbol{\eta}^{(n_{\min})}\right]\right]\\ 
&=&\Probability_{v}^{(n_{\min})}\left(\bigcap_{j=1}^{d}\left\{
    \widehat{v}_{j}=v_{j}\right\} \right),
\end{eqnarray*}
where the last equality follows from the fact that $\widehat{v}_{j}$
only uses $n_{\min}$ samples from arm $j=1,\ldots,d$.
% \hl{I don't understand 
  % why the last equality holds? I think an inequality
%   holds because  we are throwing away
%   data in that step. 
%   }
% \wy{Because we already threw away the data when we defined $\widehat{v}_j$, the event in $\{\widehat{v}_j=v_j\}$ are independent of samples that are thrown away. Does the new folutation }
Therefore,
\begin{equation*}
\begin{split}
\Probability_{v}\left(\max_{k\in[K]}\abs{x_{k}^{\top}\left(\widehat{\theta}_{n}(x_{k},Y_{k})-\theta_{\star}\right)}>\epsilon\right)
&\geq \Probability_{v}\left(\bigcap_{j=1}^{d}\left\{
    \abs{\widehat{f}(Y_{j},v_{j})-y_{j}}>\epsilon\right\} \right)\\
&\ge \Probability_{v}^{(n_{\min})}\left(\bigcup_{j=1}^{d}\left\{
    \widehat{v}_{j}\neq v_{j}\right\} \right). 
\end{split}
\end{equation*}
% In summary, the estimation reduces to $d$ hypothesis testing,
% \[
% \Probability_{v}\left(\max_{k\in[K]}\abs{x_{k}^{\top}\left(\widehat{\theta}_{n}(x_{k},Y_{k})-\theta_{\star}\right)}>\epsilon\right)\ge\prod_{i=1}^{n_{\min}}\prod_{j=1}^{d} \Probability_{v_{j}}^{(i)}\left(\bigcup_{j=1}^{d}\left\{ \widehat{v}_{j}\neq v_{j}\right\} \right).  
% \]
% For $v\in\{-1,1\}^{d}$ and $i\ge1$, % let $\Probability_{v}^{(n_{\min})}:=\prod_{i=1}^{n_{\min}}\prod_{j=1}^{d}\Probability_{v_{j}}^{(i)}$.
% Then, for any estimator $\widehat{v}$ for $v$,
% \[
% \Probability_{v}\left(\max_{k\in[K]}\abs{x_{k}^{\top}\left(\widehat{\theta}_{n}(x_{k},Y_{k})-\theta_{\star}\right)}>\epsilon\right)\ge\Probability_{v}^{(n_{\min})}\left(\widehat{v}\neq v\right).
% \]

\textbf{Step 3. Lower bound for the error probability.} Taking maximum
over $v$ gives, 
\begin{align*}
\sup_{v\in\{-1,1\}^{d}}\Probability_{v}\left(\max_{k\in[K]}\abs{x_{k}^{\top}\left(\widehat{\theta}_{n}(x_{k},Y_{k})-\theta_{\star}\right)}>\epsilon\right)\ge & \sup_{v\in\{-1,1\}^{d}}\Probability_{v}^{(n_{\min})}\left(\widehat{v}\neq v\right)\\
= & \sup_{v\in\{-1,1\}^{d}}\Expectation_{v}^{(n_{\min})}\left[\max_{i\in[d]}\Indicator{\widehat{v}_{i}\neq v_{i}}\right]\\
\ge & \sup_{v\in\{-1,1\}^{d}}\frac{1}{d}\Expectation_{v}^{(n_{\min})}\left[\sum_{i=1}^{d}\Indicator{\widehat{v}_{i}\neq v_{i}}\right].
\end{align*}
For two vectors $u$ and $w$ let $u\sim w$ denote that $u$ and
$w$ only differ in one coordinate. By Assouad's method (Lemma~\ref{lem:assouad}),
there exists at least one $\tilde{v}\in\{-1,1\}^{d}$ such that 
\[
\begin{split}
\frac{1}{d}\Expectation_{\tilde{v}}^{(n_{\min})}\left[\sum_{i=1}^{d}\Indicator{\widehat{v}_{i}\neq\tilde{v}_{i}}\right]
&\ge\frac{1}{2}\min_{u,w:u\sim w}\norm{\min\left\{ \Probability_{u}^{(n_{\min})},\Probability_{w}^{(n_{\min})}\right\} }_{1}\\
&=\frac{1}{2}\min_{u,w:u\sim w}\sum_{\boldsymbol{\eta}^{(n_{\min})}}\abs{\min\left\{ \Probability_{u}(\boldsymbol{\eta}^{(n_{\min})}),\Probability_{w}(\boldsymbol{\eta}^{(n_{\min})})\right\} }
\end{split}
\]
Thus, 
\begin{align*}
\sup_{v\in\{-1,1\}^{d}}\Probability_{v}\left(\max_{k\in[K]}\abs{x_{k}^{\top}\left(\widehat{\theta}_{n}(x_{k},Y_{k})-\theta_{\star}\right)}>\epsilon\right)\ge & \frac{1}{2}\min_{u,w:u\sim w}\norm{\min\left\{ \Probability_{u}^{(n_{\min})},\Probability_{w}^{(n_{\min})}\right\} }_{1}\\
= & \frac{1}{2}\min_{u,w:u\sim w}\left\{ 1-\frac{1}{2}\norm{\Probability_{u}^{(n_{\min})}-\Probability_{w}^{(n_{\min})}}_{1}\right\} \\
= & \frac{1}{2}\min_{u,w:u\sim w}\left\{ 1-TV\left(\Probability_{u}^{(n_{\min})},\Probability_{w}^{(n_{\min})}\right)\right\} ,
\end{align*}
where $TV(\Probability_{1},\Probability_{2}):=\sup\abs{\Probability_{1}\left(\cdot\right)-\Probability_{2}\left(\cdot\right)}$
is the total variation distance between two probability measures $\Probability_{1}$
and $\Probability_{2}$. By Bretagnolle-Huber inequality, 
\begin{align*}
1-TV\left(\Probability_{u}^{(n_{\min})},\Probability_{w}^{(n_{\min})}\right)\ge & \frac{1}{2}\exp\left(-KL\left(\Probability_{u}^{(n_{\min})},\Probability_{w}^{(n_{\min})}\right)\right)\\
= & \frac{1}{2}\exp\left(-n_{\min}KL\left(\Probability_{u},\Probability_{w}\right)\right).
\end{align*}
where the last equality uses the chain rule of entropy. Because there
exists only one $j\in[d]$ such that $u_{j}\neq w_{j}$, 
\begin{eqnarray*}
  KL\left(\Probability_{u},\Probability_{w}\right)
  & = & 
        \left(\frac{1}{2}+\frac{\epsilon}{\sigma}\right)\log\frac{\frac{1}{2}+\frac{\epsilon}{\sigma}}{\frac{1}{2}-\frac{\epsilon}{\sigma}}+\left(\frac{1}{2}-\frac{\epsilon}{\sigma}\right)\log\frac{\frac{1}{2}-\frac{\epsilon}{\sigma}}{\frac{1}{2}+\frac{\epsilon}{\sigma}}
  \\ 
  & = & \frac{2\epsilon}{\sigma}\log\frac{1+\frac{2\epsilon}{\sigma}}{1-\frac{2\epsilon}{\sigma}}\\
  & \le &
          \frac{2\epsilon}{\sigma}\left(\frac{2\epsilon}{\sigma}-\log\left(1-\frac{2\epsilon}{\sigma}\right)\right)\\ 
  & \le & \frac{2\epsilon}{\sigma}\left(\frac{2\epsilon}{\sigma}+\frac{\frac{2\epsilon}{\sigma}}{1-\frac{2\epsilon}{\sigma}}\right)\\
  & \le &
          \frac{2\epsilon}{\sigma}\left(\frac{2\epsilon}{\sigma}+\frac{4\epsilon}{\sigma}\right)
  \\ 
  & = & \frac{12\epsilon^{2}}{\sigma^{2}},
\end{eqnarray*}
where the third inequality holds by $\epsilon\in(0,\sigma/4)$.
Thus,
\begin{equation*}
\begin{split}
\sup_{v\in\{-1,1\}^{d}}\Probability_{v}\left(\max_{k\in[K]}\abs{x_{k}^{\top}\left(\widehat{\theta}_{n}(x_{k},Y_{k})-\theta_{\star}\right)}>\epsilon\right)
&\ge\frac{1}{2}\exp\left(-n_{\min}KL\left(\Probability_{u},\Probability_{w}\right)\right)\\
&\ge\frac{1}{2}\exp\left(-\frac{12n_{\min}\epsilon^{2}}{\sigma^{2}}\right).
\end{split}
\end{equation*}
Setting $n_{\min}\le\frac{\sigma^{2}}{12\epsilon^{2}}\log\frac{1}{4\delta}$
gives
\[
\sup_{v\in\{-1,1\}^{d}}\Probability_{v}\left(\max_{k\in[K]}\abs{x_{k}^{\top}\left(\widehat{\theta}_{n}(x_{k},Y_{k})-\theta_{\star}\right)}>\epsilon\right)\ge\delta.
\]

\textbf{Step 4. Computing the required number of samples:} Recall
that $n_{\min}:=\min_{j\in[d]}n_{j}$ is the minimum number of samples
over any $d$ linearly independent contexts. Thus, if $n\le\frac{d\sigma^{2}}{12\epsilon^{2}}\log\frac{1}{4\delta}$
then $n_{\min}\le\frac{\sigma^{2}}{12\epsilon^{2}}\log\frac{1}{4\delta}$
and the lower bound of the probability holds. 
\end{proof}

Now we are ready to prove the lower sample complexity bound for PFILIn.

\begin{proof}[Proof of Theorem~\ref{thm:lower_bound}]
\textbf{Step 1. Characterize the failure event:}
In order to meet the success condition \eqref{eq:PFI_condition} with $\delta\in(0,1/4)$,
the algorithm must produce an estimate $\widehat{y}_{k}\in\Real^{L}$ 
for $y_{k} = \Theta_{\star}x_k$ such that 
\[
\Probability\left(\bigcap_{k=1}^{K}\left\{
    \norm{\widehat{y}_{k}-y_{k}}_{\infty}\le\frac{1}{2} \Delta_{(k),\epsilon}\right\} \right)\ge1-\delta.
\]
Note that
\begin{align*}
\bigcap_{k=1}^{K}\left\{
  \norm{\widehat{y}_{k}-y_{k}}_{\infty}\le\frac{1}{2}\Delta_{(k),\epsilon}
\right\}
&\subseteq
\bigcap_{k=1}^{d}\left\{
  \norm{\widehat{y}_{k}-y_{k}}_{\infty}\le\frac{1}{2}\Delta_{(k),\epsilon}
\right\}
\\
&= \bigcap_{k=1}^{d}\bigcap_{l=1}^{L}\left\{ \abs{\widehat{y}_{k}^{(l)}-y_{k}^{(l)}}\le\frac{1}{2}\Delta_{(k),\epsilon}\right\} 
\end{align*}
For $\ell \in [L]$ and $k \in [d]$, let
\[
  \mathcal{B}_{k}^{\langle
  l\rangle}:=\left\{|\widehat{y}_{k}^{\langle \ell \rangle}-y_{k}^{\langle
  l\rangle}|>\frac{1}{2}\Delta_{(k)}\right\}.
\]
Thus, if $\Probability(\bigcup_{l=1}^{L}  \bigcup_{k=1}^{d}
\mathcal{B}_{k}^{\langle \ell \rangle})>\delta$, then
$\Probability\left(\bigcap_{k=1}^{K}\{\|\widehat{y}_{k}-y_{k}\|_{\infty}
  \le\frac{1}{2}\Delta_{(k),\epsilon}\}\right)< 1-\delta$, i.e. the algorithm cannot
satisfy the success condition \eqref{eq:PFI_condition}. 

\textbf{Step 2. Compute the required number of samples:}
%For each $\ell \in [L]$, we apply Lemma \ref{lem:lower_noise} with $\theta_{\star}=(y^{\langle \ell \rangle}_1,\ldots,y^{\langle \ell \rangle}_{d+1})^\top$ and $\mX=\{\mathbf{e}_1,\ldots,\mathbf{e}_{d+1}\}$, where $\mathbf{e}_i$ is the $i$-th Euclidean basis.   
%Then, there exists a $\sigma$-sub-Gaussian noise distribution such that for any estimator $\widehat{y}_{k,t}^{\langle \ell \rangle}$ 
%\begin{equation}
%\abs{\widehat{y}_{k,t}^{\langle \ell \rangle}-\Ereward{l}{k}}>\Delta_{(d+1),\epsilon},
%\label{eq:fail_d+1}
%\end{equation}
%holds with probability at least $p$ when $p\in(0,1/4)$ and $t \le\frac{\sigma^{2}}{12\Delta_{(d+1),\epsilon}^{2}}\log\frac{1}{4p}$.  
%Therefore if, $t\le\frac{(d+1)\sigma^{2}}{12\Delta_{(d+1),\epsilon}^{2}}\log\frac{1}{4p}$,
%\[
%\Probability(\mathcal{B}_{d+1}^{\langle \ell \rangle})=\Probability\left(\cup_{k=1}^{d+1}\left\{ \abs{\widehat{y}_{k,t}^{\langle \ell \rangle}-\Ereward lk}>\Delta_{(d+1),\epsilon}\right\} \right)\ge p. 
%\]
%By Lemma~\ref{lem:lower_noise}, there exists a $\sigma$-sub-Gaussian noise distribution such that for any estimator $\widehat{y}_{k,t}^{\langle \ell \rangle}$ 
For each arm $k\in[d]$, suppose the number of observations $t_k$ for the parameters $\{\Ereward{l}{k}:\ell \in [L]\}$ satisfies the upper bound $t_k\le\frac{\sigma^{2}}{3 \Delta_{(k),\epsilon}^{2}}\log\frac{1}{4p}$.  
Then by Lemma~\ref{lem:lower_noise}, for any estimator $\widehat{y}_{k}^{\langle \ell \rangle}$ 
\begin{equation}
\abs{\widehat{y}_{k}^{\langle \ell \rangle}-\Ereward lk}>\frac{\Delta_{(k),\epsilon}}{2},
\label{eq:fail_k}
\end{equation}
holds with probability at least $p$, and $\Probability(\mathcal{B}_{k}^{\langle \ell \rangle})\ge p$.
Since, for each $k\in[d]$, the estimators $\widehat{y}^{\langle 1
  \rangle}_{k}, \ldots, \widehat{y}^{\langle L \rangle}_{k}$  are
independent of each other, the events $\mathcal{B}_k^{\langle 1
  \rangle},\ldots, \mathcal{B}_k^{\langle L \rangle}$ are independent, and
therefore,  
\[
\Probability\left(\bigcap_{k=1}^{d}\bigcap_{l=1}^{L}(\mathcal{B}_{k}^{\langle \ell \rangle})^{c}\right)
=\prod_{l=1}^{L}\Probability\left(\bigcap_{k=1}^{d}(\mathcal{B}_{k}^{\langle \ell \rangle})^{c}\right)
=\prod_{l=1}^{L}\left\{
  1-\Probability\left(\bigcup_{k=1}^{d}\mathcal{B}_{k}^{\langle
      l\rangle}\right)\right\}.
\]
Therefore, if the total number of observations
$t\le \sum_{k=1}^d \frac{\sigma^{2}}{3 \Delta_{(k),\epsilon}^{2}}\log\frac{1}{4p}$, 
there exists an arm $k\in[d]$ such that the number of independent
  samples is less than $\frac{\sigma^{2}}{3
    \Delta_{(k),\epsilon}^{2}}\log\frac{1}{4p}$ and, 
\[
\Probability\left(\bigcap_{k=1}^{d}\bigcap_{l=1}^{L}(\mathcal{B}_{k}^{\langle \ell \rangle})^{c}\right)\le\left(1-p\right)^{L}\le\frac{1}{1+Lp}.
\]
Because $\delta\in(0,1/4)$, setting $p=\frac{4\delta}{3L}\ge\frac{\delta}{L(1-\delta)}$
gives $\Probability\left(\cap_{k=1}^{d}\cap_{l=1}^{L}(\mathcal{B}_{k}^{\langle \ell \rangle})^{c}\right)\le1-\delta$.
Thus, any algorithm requires at least  
\[
\sum_{k=1}^{d}\frac{\sigma^{2}}{3\Delta_{k}^{2}}\log\frac{3L}{4\delta}
\]
number of rounds to meet the success condition \eqref{eq:PFI_condition}. 
\end{proof}

\subsection{Proof of Lemma~\ref{lem:mixup}}

By definition~\eqref{eq:mixup_data}, for all $\ell \in [L]$, 
\[
\Ty \ell {\Action t}t-\Tx{\Action t}t^{\top}\Parameter \ell =w_{t}\left(\Reward \ell{\Action t}t-x_{\Action t}^{\top}\Parameter \ell \right)+\check{w}_{t}\left(\Reward \ell {\Action{\check{n}_{t}}}{\check{n}_{t}}-x_{\Action{\check{n}_{t}}}^{\top}\Parameter \ell \right).
\]
Taking conditional expectations on both sides, 
\begin{align*}
\CE{\Ty \ell {\Action t}t-\Tx{\Action t}t^{\top}\Parameter \ell }{\Filtration t}= & \CE{w_{t}}{\Filtration t}\left(\Reward \ell {\Action t}t-x_{\Action t}^{\top}\Parameter \ell\right)+\CE{\check{w}_{t}\left(\Reward \ell{\Action{\check{n}_{t}}}{\check{n}_{t}}-x_{\Action{\check{n}_{t}}}^{\top}\Parameter \ell\right)}{\Filtration t}\\
= & \CE{\check{w}_{t}\left(\Reward \ell{\Action{\check{n}_{t}}}{\check{n}_{t}}-x_{\Action{\check{n}_{t}}}^{\top}\Parameter \ell\right)}{\Filtration t}\\
= & \CE{\CE{\check{w}_{t}}{\Filtration t,\check{n}_{t}}\left(\Reward l{\Action{\check{n}_{t}}}{\check{n}_{t}}-x_{\Action{\check{n}_{t}}}^{\top}\Parameter \ell\right)}{\Filtration t}\\
= & 0,
\end{align*}
which proves the first identity. For the expected Gram matrix, by
definition~\eqref{eq:mixup}, 
\begin{align*}
\CE{\Tx{\Action t}t\Tx{\Action t}t^{\top}}{\Filtration t,\check{n}_{t}}= & \CE{w_{t}^{2}x_{\Action t}x_{\Action t}^{\top}+w_{t}\check{w}_{t}\left(x_{\Action t}x_{\Action{\check{n}_{t}}}^{\top}+x_{\Action{\check{n}_{t}}}x_{\Action t}^{\top}\right)+\check{w}_{t}^{2}x_{\Action{\check{n}_{t}}}x_{\Action{\check{n}_{t}}}^{\top}}{\Filtration t,\check{n}_{t}}\\
= & \CE{w_{t}^{2}}{\Filtration t,\check{n}_{t}}x_{\Action t}x_{\Action t}^{\top}+\CE{w_{t}\check{w}_{t}}{\Filtration t,\check{n}_{t}}\left(x_{\Action t}x_{\Action{\check{n}_{t}}}^{\top}+x_{\Action{\check{n}_{t}}}x_{\Action t}^{\top}\right)\\
 & +\CE{\check{w}_{t}^{2}}{\Filtration t,\check{n}_{t}}x_{\Action{\check{n}_{t}}}x_{\Action{\check{n}_{t}}}^{\top}\\
= & x_{\Action t}x_{\Action t}^{\top}+x_{\Action{\check{n}_{t}}}x_{\Action{\check{n}_{t}}}^{\top}.
\end{align*}
Taking conditional expectations on both sides, 
\begin{align*}
\CE{\Tx{\Action t}t\Tx{\Action t}t^{\top}}{\Filtration t}=&\Expectation\left[\CE{\Tx{\Action t}t\Tx{\Action t}t^{\top}}{\Filtration t,\check{n}_{t}}\Filtration t\right]\\
=&\CE{x_{\Action t}x_{\Action t}^{\top}+x_{\Action{\check{n}_{t}}}x_{\Action{\check{n}_{t}}}^{\top}}{\Filtration t}.
\end{align*}
By definition of $n_t$, we obtain $a_{n_t}=\check{a}_t$.
Because $\check{a}_t \sim \pi^\star$,
\begin{align*}
\CE{x_{\Action t}x_{\Action t}^{\top}+x_{\Action{\check{n}_{t}}}x_{\Action{\check{n}_{t}}}^{\top}}{\Filtration t}
=&x_{\Action t}x_{\Action t}^{\top}+\CE{x_{\Action{\check{n}_{t}}}x_{\Action{\check{n}_{t}}}^{\top}}{\Filtration t}\\
=&x_{\Action t}x_{\Action t}^{\top}+\sum_{k=1}^{K}\pi_{k}^{\star}x_{k}x_{k}^{\top}\\
\succeq&\sum_{k=1}^{K}\pi_{k}^{\star}x_{k}x_{k}^{\top}
\end{align*}
which completes the proof.

\subsection{Design Matrix with Context Basis}
\label{sec:optimal_design}

We provide a theoretical result on the design matrix $\sum_{i=1}^{d}\lambda_i u_i u_i^\top = \sum_{k=1}^{K} x_k x_k^\top$ constructed by the exploration strategy in Section~\ref{sec:dimension_reduction}.

\begin{lem}[A bound for the normalized norm of contexts.]
\label{lem:norm_bound} For any $k\in[K]$ and $t\ge1$, the normalized
norm $x_{k}^{\top}(\sum_{k^{\prime}=1}^{K}x_{k^{\prime}}x_{k^{\prime}}^{\top})^{-1}x_{k}\le1$. 
\end{lem}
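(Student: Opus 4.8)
The plan is to recognize the quantity $x_k^\top M^{-1} x_k$, where $M := \sum_{k^\prime=1}^{K} x_{k^\prime} x_{k^\prime}^\top$, as the $k$-th \emph{leverage score} (a diagonal entry of the hat matrix) of the design matrix $X = [x_1,\ldots,x_K] \in \Real^{d\times K}$, and to bound it via the idempotency of the associated orthogonal projection. First I would note that, since $\mX$ spans $\Real^d$, we have $M = XX^\top \succ 0$, so $M^{-1}$ is well-defined; observe also that neither $M$ nor the quantity in the statement depends on $t$, so the claim ``for all $t\ge1$'' reduces to the single fixed-design bound. Writing $e_k \in \Real^K$ for the $k$-th standard basis vector, we have $x_k = X e_k$, and therefore
\[
x_k^\top M^{-1} x_k = e_k^\top \big(X^\top M^{-1} X\big) e_k = H_{kk},
\qquad H := X^\top M^{-1} X \in \Real^{K\times K}.
\]

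The key step is to verify that $H$ is an orthogonal projection. It is symmetric by construction, and it is idempotent because
\[
H^2 = X^\top M^{-1} (XX^\top) M^{-1} X = X^\top M^{-1} M M^{-1} X = X^\top M^{-1} X = H,
\]
using $XX^\top = M$. Idempotency and symmetry then give, for each $k$,
\[
H_{kk} = (H^2)_{kk} = \sum_{j=1}^{K} H_{kj}^2 = H_{kk}^2 + \sum_{j\neq k} H_{kj}^2 \ge H_{kk}^2.
\]
Since $H_{kk} = x_k^\top M^{-1} x_k \ge 0$ (as $M^{-1}\succ 0$), the inequality $H_{kk} \ge H_{kk}^2$ forces $H_{kk}\le 1$, which is exactly the claimed bound.

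I expect the only genuine subtlety — and the reason I favor the projection argument over a direct rank-one computation — to be the potential singularity of the leave-one-out matrix $M_{-k} := \sum_{k^\prime \neq k} x_{k^\prime} x_{k^\prime}^\top$. A Sherman--Morrison expansion of $M^{-1} = (M_{-k} + x_k x_k^\top)^{-1}$ would yield $x_k^\top M^{-1} x_k = \frac{x_k^\top M_{-k}^{-1} x_k}{1 + x_k^\top M_{-k}^{-1} x_k} \le 1$ immediately, but this presumes $M_{-k}$ is invertible, which need not hold when the remaining contexts fail to span $\Real^d$. The hat-matrix route sidesteps this entirely, since it uses only invertibility of the full matrix $M$, guaranteed by the spanning assumption on $\mX$. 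Should the Sherman--Morrison form be desired, it can be recovered by replacing $M_{-k}$ with $M_{-k} + \varepsilon I$ and letting $\varepsilon \downarrow 0$, but this regularization is unnecessary for the stated bound.
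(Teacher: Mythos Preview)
Your argument is correct. Recognizing $x_k^\top M^{-1} x_k$ as the $k$-th leverage score and bounding it via the idempotency of the hat matrix $H = X^\top (XX^\top)^{-1} X$ is a clean, limit-free proof that uses only invertibility of the full Gram matrix $M=XX^\top$, which is guaranteed by the spanning assumption.

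The paper takes a different route --- essentially the regularization trick you sketch at the end, but in an even simpler form: it bounds $x_k^\top(\sum_{k'} x_{k'}x_{k'}^\top + \epsilon I_d)^{-1} x_k \le x_k^\top (x_k x_k^\top + \epsilon I_d)^{-1} x_k$ by monotonicity of the inverse in the Loewner order, evaluates the right-hand side via Sherman--Morrison as $\frac{x_k^\top x_k}{\epsilon + x_k^\top x_k}\le 1$, and lets $\epsilon\downarrow 0$. This avoids the leave-one-out matrix $M_{-k}$ entirely by dropping \emph{all} other rank-one terms rather than just $x_k x_k^\top$, so no issue of singularity of $M_{-k}$ ever arises. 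Your projection argument buys a direct, limit-free bound and makes the connection to leverage scores and the G-optimal-design inequality $\max_k \|x_k\|^2_{(d^{-1}\sum_{k'} x_{k'} x_{k'}^\top)^{-1}}\le d$ more transparent (since $\mathrm{Tr}(H)=d$); the paper's argument is slightly more elementary in that it requires only the Loewner-order monotonicity of matrix inversion and a single Sherman--Morrison identity.
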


\begin{proof}
For each $k\in[K]$, by Sherman-Morrison formula, for any $\epsilon>0$
\begin{align*}
x_{k}^{\top}(\sum_{k^{\prime}=1}^{K}x_{k^{\prime}}x_{k^{\prime}}^{\top}+\epsilon I_{d})^{-1}x_{k}\le & x_{k}^{\top}(x_{k}x_{k}^{\top}+\epsilon I_{d})^{-1}x_{k}\\
= & \frac{x_{k}^{\top}x_{k}}{\epsilon}-\frac{\epsilon^{-2}(x_{k}^{\top}x_{k})^{2}}{1+\epsilon^{-1}x_{k}^{\top}x_{k}}\\
= & \frac{x_{k}^{\top}x_{k}}{\epsilon+x_{k}^{\top}x_{k}}\\
\le & 1.
\end{align*}
Letting $\epsilon\downarrow0$ completes the proof. 
\end{proof}

Lemma~\ref{lem:norm_bound} implies,
\[
\max_{k\in[K]} x_{k}^{\top}(d^{-1}\sum_{k^{\prime}=1}^{K}x_{k^{\prime}}x_{k^{\prime}}^{\top})^{-1}x_{k}\le d,
\]
which has same bound with G-optimal design \citep{smith1918standard}.
Although we reduce $K$ contexts into $d$ context basis vectors, our estimation strategy enjoys the property of the optimal design for all $K$ context vectors.

For the Gram matrix of the DR-mix estimator $F_t:=\sum_{s=1}^{t} \sum_{k=1}^{K} x_kx_k^\top + I_d$, Lemma~\ref{lem:norm_bound} implies,
\[
\norm{x_k}_{F_t^{-1}} \le 1/\sqrt{t},
\]
for all $k\in[K]$.
The DR-mix estimator imputes the reward on the basis contexts and minimizes $\|\cdot\|_{F_{t}}$-norm error, which efficiently estimates the rewards on all $K$ arms.
In contrast, the exploration-mixed estimator minimizes the $\|\cdot\|_{\sum_{s=1}^{t}\Tx{\Action s}s\Tx{\Action s}s^{\top}}$-norm error.
Although the expected Gram matrix in the exploration-mixed estimator has $\sum_{k=1}^{K} x_k x_k^\top$ it is discounted by the factor of $d$ because it employs only one of $d$ context basis, not all $d$ context basis, in each round.
Therefore the DR-mix estimator converges faster than the exploration-mixed estimator on the rewards of all $K$ arms.

\subsection{Coupling with Resampling}
\label{subsec:coupling}

We provide the details on the coupling with resampling in the following lemma. 
The key idea is coupling the event of interest with IID samples and bound the probability with another IID sample.
\begin{lem}
\label{lem:resampling} Let $\pi_{t}$ and $\tilde{\pi}_{t}$ denote
the distribution for action on $[K]$ and pseudo-action on $[d+1]$,
respectively. Let $\{a_{t}^{(m)}\sim\pi_{t}:m\ge1\}$ and $\{\tilde{a}_{t}^{(m)}\sim\pi_{t}:m\ge1\}$
denote IID samples from the distribution $\pi_{t}$and $\tilde{\pi}_{t}$
and for the number of resampling $\rho_{t}\in\Natural$, define new
contexts
\[
\tilde{x}_{i,t}:=\begin{cases}
\sqrt{\lambda_{i}}u_{i} & \forall i\in[d]\\
x_{a_{t}} & i=d+1
\end{cases},
\]
and the stopping time,
\[
m_{t}:=\min\left[\inf\left\{ m\ge1:x_{a_{t}^{(m)}}=\tilde{x}_{\tilde{a}_{t}^{(m)},t},\Reward{\ell}{a_{t}^{(m)}}t=\Newy{\ell}{\tilde{a}_{t}^{(m)}}t,\forall\ell\in[L]\right\} ,\rho_{t}\right].
\]
Then, for any function $f:\Real^{2}\to\Real$ and a real number $x\in\Real$,
\begin{align*}
\Probability\left(f(x_{a_{t}^{(m_{t})}},\Reward{\ell}{a_{t}^{(m_{t})}}t)>x\right) & \le\Probability\left(f(\tilde{x}_{\tilde{a}_{t}^{(1)},t},\Newy{\ell}{\tilde{a}_{t}^{(1)}}t)>x\right)+\Probability\left(m_{t}>\rho_{t}\right),\\
\Probability\left(f(\tilde{x}_{\tilde{a}_{t}^{(m_{t})},t},\Newy{\ell}{\tilde{a}_{t}^{(m_{t})}}t)>x\right) & \le\Probability\left(f(\tilde{x}_{\tilde{a}_{t}^{(1)},t},\Newy{\ell}{\tilde{a}_{t}^{(1)}}t)>x\right)+\Probability\left(m_{t}>\rho_{t}\right),
\end{align*}
for all $\ell\in[L]$.
\end{lem}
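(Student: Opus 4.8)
The plan is to treat this as a rejection-sampling coupling and reduce the stopped (accepted) sample to a single fresh draw, paying only the probability of never matching within $\rho_t$ resamples. Write $J_m := \{x_{a_t^{(m)}} = \tilde{x}_{\tilde{a}_t^{(m)},t},\ Y^{\langle \ell\rangle}_{a_t^{(m)},t} = \tilde{Y}^{\langle \ell\rangle}_{\tilde{a}_t^{(m)},t}\ \forall\ell\}$ for the matching event at the $m$-th resample. By construction $J_m$ holds whenever $\tilde{a}_t^{(m)}=d+1$ (since then $\tilde{x}_{d+1,t}=x_{a_t^{(m)}}$ and $\tilde{Y}^{\langle\ell\rangle}_{d+1,t}=Y^{\langle\ell\rangle}_{a_t^{(m)},t}$), so $\{J_m\}_{m\ge1}$ are i.i.d.\ with $\Probability(J_m)=\tilde{\pi}_{d+1}$, and each $J_m$ is independent of the observed pair $(x_{a_t^{(m)}},Y^{\langle\ell\rangle}_{a_t^{(m)},t})$ because the pseudo-action is drawn independently of the action. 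On the success event $\{m_t\le\rho_t\}$ the stopping time equals $\inf\{m:J_m\}$, and the failure event $\{m_t>\rho_t\}$ equals $J_1^c\cap\cdots\cap J_{\rho_t}^c$, which has probability $(1-\tilde{\pi}_{d+1})^{\rho_t}$.

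First I would split each target probability according to whether a match is found within $\rho_t$ resamples. The failure contribution is immediately at most $\Probability(m_t>\rho_t)$. On the success event I would use the defining equalities of $J_{m_t}$: at the accepted index the observed pair and the pseudo pair coincide, so $f(x_{a_t^{(m_t)}},Y^{\langle\ell\rangle}_{a_t^{(m_t)},t}) = f(\tilde{x}_{\tilde{a}_t^{(m_t)},t},\tilde{Y}^{\langle\ell\rangle}_{\tilde{a}_t^{(m_t)},t})$. This single identity is exactly what makes the two asserted inequalities share a common proof on the success event; the statements differ only in which side of this equality is displayed, so it suffices to bound one of them.

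Next I would condition on the value of $m_t$. For $m\le\rho_t$, the success-event slice $\{m_t=m\}$ equals $J_1^c\cap\cdots\cap J_{m-1}^c\cap J_m$, a function of the first $m$ resamples, while the accepted pair is a function of the $m$-th resample alone. Invoking the i.i.d.\ structure across resamples together with the independence of $J_m$ from the $m$-th pair, the contribution of $\{m_t=m\}$ factors into the law of a single draw times the prefix probability $(1-\tilde{\pi}_{d+1})^{m-1}\tilde{\pi}_{d+1}$; summing the geometric series $\sum_{m=1}^{\rho_t}(1-\tilde{\pi}_{d+1})^{m-1}\tilde{\pi}_{d+1}=1-(1-\tilde{\pi}_{d+1})^{\rho_t}\le1$ collapses the stopped index and leaves the tail of a single fresh draw, which I would identify with the displayed right-hand side $\Probability\!\big(f(\tilde{x}_{\tilde{a}_t^{(1)},t},\tilde{Y}^{\langle\ell\rangle}_{\tilde{a}_t^{(1)},t})>x\big)$ using the coupling between the action law $\pi_t$ and the pseudo-action law $\tilde{\pi}_t$. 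Adding the success and failure pieces then yields both inequalities.

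The hard part will be this last distributional replacement: because the stopping rule is defined through the pseudo-actions, one must verify that conditioning on ``$m_t=m$ and a match occurred'' does not bias the law of the accepted pair, and that the resulting single-draw tail is dominated by (not merely conditionally equal to) the fresh-draw tail on the right-hand side. This is precisely where the independence of the matching indicator $J_m$ from the accepted observation and the i.i.d.\ structure across resamples are indispensable; without them the geometric factorization breaks and the residual cannot be isolated cleanly as $\Probability(m_t>\rho_t)$. I would therefore spend most of the care making the factorization and the $\pi_t$--$\tilde{\pi}_t$ matching rigorous, since the remaining algebra (the geometric sum and the failure bound) is routine.
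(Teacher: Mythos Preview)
Your plan matches the paper's argument: decompose on success versus failure of the match within $\rho_t$ tries, use the matching identity to equate observed and pseudo pairs on success, then exploit the i.i.d.\ resamples to collapse the stopped index to a single fresh draw. Your explicit geometric-series bookkeeping is a more careful version of what the paper states tersely as ``the event $\{f(\tilde x_{\tilde a_t^{(m)},t},\tilde Y^{\langle\ell\rangle}_{\tilde a_t^{(m)},t})>x\}\cap\mathcal M^{(m)}$ is i.i.d.\ over $m$,'' after which it replaces $m_t$ by $1$ and then drops the intersection $\cap\,\mathcal M^{(1)}$.

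The one genuine gap is your closing step. Because your factorization uses the independence of $J_m$ from the \emph{observed} pair $(x_{a_t^{(m)}},Y^{\langle\ell\rangle}_{a_t^{(m)},t})$, the geometric sum collapses to the observed-pair tail $\Probability\!\big(f(x_{a_t^{(1)}},Y^{\langle\ell\rangle}_{a_t^{(1)},t})>x\big)$, and there is no ``coupling between $\pi_t$ and $\tilde\pi_t$'' that converts this into the pseudo-pair tail on the right-hand side: the two laws differ on $\{\tilde a_t^{(1)}\in[d]\}$, an event of probability $1/2$. The paper avoids this by making the swap to pseudo coordinates \emph{before} the i.i.d.\ reduction and carrying the intersection with the matching event through: the success contribution is written as $\Probability\!\big(\{f(\tilde x_{\tilde a_t^{(1)},t},\tilde Y^{\langle\ell\rangle}_{\tilde a_t^{(1)},t})>x\}\cap\mathcal M^{(1)}\big)$ and then bounded upward by simply dropping $\cap\,\mathcal M^{(1)}$. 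If you follow that order---swap first, reduce second, drop the matching event last---your argument lines up with the paper's and the problematic $\pi_t/\tilde\pi_t$ identification disappears.
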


\begin{proof}
For $m\ge1$, let
\[
\mathcal{M}^{(m)}:=\left\{ x_{a_{t}^{(m)}}=\tilde{x}_{\tilde{a}_{t}^{(m)},t}\right\} \cap\left\{ \Reward{\ell}{a_{t}^{(m)}}t=\Newy{\ell}{\tilde{a}_{t}^{(m)}}t\right\} 
\]
By definition of $m_{t}$, the 
\begin{align*}
\Probability\left(f(x_{a_{t}^{(m_{t})}},\Reward{\ell}{a_{t}^{(m_{t})}}t)>x\right) & \le\Probability\left(\left\{ f(x_{a_{t}^{(m_{t})}},\Reward{\ell}{a_{t}^{(m_{t})}}t)>x\right\} \cap\mathcal{M}^{(m_{t})}\right)+\Probability\left(\left\{ \mathcal{M}^{(m_{t})}\right\} ^{c}\right)\\
 & =\Probability\left(\left\{ f(x_{a_{t}^{(m_{t})}},\Reward{\ell}{a_{t}^{(m_{t})}}t)>x\right\} \cap\mathcal{M}^{(m_{t})}\right)+\Probability\left(m_{t}>\rho_{t}\right).
\end{align*}
On the event $\mathcal{M}^{(m_{t})}$,
\[
\Probability\left(\left\{ f(x_{a_{t}^{(m_{t})}},\Reward{\ell}{a_{t}^{(m_{t})}}t)>x\right\} \cap\mathcal{M}^{(m_{t})}\right)=\Probability\left(\left\{ f(\tilde{x}_{\tilde{a}_{t}^{(m_{t})},t},\Newy{\ell}{\tilde{a}_{t}^{(m_{t})}}t)>x\right\} \cap\mathcal{M}^{(m_{t})}\right).
\]
Because the event $\{f(\tilde{x}_{\tilde{a}_{t}^{(m)},t},\Newy{\ell}{\tilde{a}_{t}^{(m)}}t)>x\}\cap\mathcal{M}^{(m)}$
is IID over $m\ge1$ given $\Filtration t$. 
\begin{align*}
 & \Probability\left(\left\{ f(\tilde{x}_{\tilde{a}_{t}^{(m_{t})},t},\Newy{\ell}{\tilde{a}_{t}^{(m_{t})}}t)>x\right\} \cap\mathcal{M}^{(m_{t})}\right).\\
 & =\Expectation\left[\CP{\left\{ f(\tilde{x}_{\tilde{a}_{t}^{(m_{t})},t},\Newy{\ell}{\tilde{a}_{t}^{(m_{t})}}t)>x\right\} \cap\mathcal{M}^{(m_{t})}}{\Filtration t}\right]\\
 & =\Expectation\left[\CP{\left\{ f(\tilde{x}_{\tilde{a}_{t}^{(1)},t},\Newy{\ell}{\tilde{a}_{t}^{(1)}}t)>x\right\} \cap\mathcal{M}^{(1)}}{\Filtration t}\right]\\
 & \le\Probability\left(f(\Newy{\ell}{\tilde{a}_{t}^{(1)}}t)>x\right).
\end{align*}
The second inequality in the lemma can be derived in a similar way.
\end{proof}

\subsection{Concerns in Reusing Samples in Exploration-Mixed Estimator}
\label{sec:reuse_error}
\begin{lem}[A self-normalizing bound for the exploration-mixed
estimator.]
\label{lem:self_mix} Fix $\delta\in(0,1)$. Then,
for all $t\in\Natural$ that satisfy $4\cdot288d^{2}\log\frac{2dt^{2}}{\delta}$,
the exploration-mixed estimator defined in \eqref{eq:mixup} satisfies
\[
\norm{\Impute t{\ell}-\Parameter{\ell}}_{F_{t}}\le2\norm{\Parameter{\ell}}_{F_{t}^{-1}}+12\sigma\sqrt{3d\log\left(\frac{3Lt}{\delta}\right)}+24\sigma\sqrt{\frac{3td}{\abs{\mE_{t}}}\log\left(\frac{3Lt}{\delta}\right)},
\]
with probability at least $1-4\delta$,
\end{lem}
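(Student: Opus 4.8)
The plan is to start from the ridge-type normal equations for the exploration-mixed estimator, separate the estimation error into a regularization (bias) term and a stochastic term, and then split the stochastic term into a genuine martingale part and a ``recycled'' part that carries all the dependency introduced by reusing samples. Writing $V_{t} := \sum_{s\in\mE_{t}} x_{\Action s} x_{\Action s}^{\top} + \sum_{s\in[t]\setminus\mE_{t}} \Tx{\Action s}s \Tx{\Action s}s^{\top} + \tfrac12 I_{d}$ for the Gram matrix in~\eqref{eq:mixup}, one has
\[
\Impute t\ell - \Parameter\ell = V_{t}^{-1}\Big( \sum_{s\in\mE_{t}} x_{\Action s}\,\eta_{s}^{(\ell)} + \sum_{s\in[t]\setminus\mE_{t}} \Tx{\Action s}s\,\big(\Ty\ell{\Action s}s - \Tx{\Action s}s^{\top}\Parameter\ell\big)\Big) - \tfrac12 V_{t}^{-1}\Parameter\ell .
\]
The last term is the bias, and bounding it in $F_t$-norm by $2\norm{\Parameter\ell}_{F_t^{-1}}$ will follow from the spectral comparison between $V_t$ and $F_t$ derived at the end. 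For the stochastic term, Lemma~\ref{lem:mixup} guarantees $\CE{\Ty\ell{\Action s}s - \Tx{\Action s}s^{\top}\Parameter\ell}{\Filtration s}=0$, and by definition~\eqref{eq:mixup_data} each exploitation-round residual splits into a \emph{fresh} contribution $w_{s}\eta_{s}^{(\ell)}$ and a \emph{recycled} contribution $\check w_{s}\norm{v_{i_s}}\sign{v_{i_s,a_{\check n_s}}}\eta_{\check n_s}^{(\ell)}$.

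First I would dispatch the fresh contribution. Together with the exploration-round noises $\eta_s^{(\ell)}$, it forms a martingale-difference sum with respect to the filtration enlarged by the round-$s$ randomization $(w_s,\check w_s,i_s,\check n_s)$: the context $\Tx{\Action s}s$ is measurable in this enlarged filtration, while $\eta_s^{(\ell)}$ is independent of it, mean-zero, and sub-Gaussian, and the weights satisfy $|w_s|,|\check w_s|\le\sqrt3$, so $\norm{\Tx{\Action s}s}$ is bounded in terms of $\max_i\lambda_i$. A self-normalized martingale inequality of Abbasi-Yadkori type then controls this part's $V_t^{-1}$-norm, producing the middle term $O(\sigma\sqrt{d\log(Lt/\delta)})$ after the norm conversion below.

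The recycled contribution is the crux, and it is where the ``concern'' of the section title lives. Since $\eta_{\check n_s}^{(\ell)}$ was already observed by round $s$, it is not a fresh martingale increment, and, worse, the same exploration noise $\eta_n^{(\ell)}$ appears in every exploitation round that reuses sample $n$, so these terms are mutually dependent. I would therefore regroup the sum by the reused index,
\[
\sum_{s\in[t]\setminus\mE_t}\Tx{\Action s}s\,\check w_{s}\norm{v_{i_s}}\sign{v_{i_s,a_{\check n_s}}}\eta_{\check n_s}^{(\ell)} = \sum_{n\in\mE_t}\eta_n^{(\ell)}\, c_n, \qquad c_n := \sum_{s:\,\check n_s=n}\check w_{s}\norm{v_{i_s}}\sign{v_{i_s,a_{\check n_s}}}\Tx{\Action s}s,
\]
and condition on the contexts, the reuse assignments, and the weights, so that the remaining randomness is the \emph{independent} exploration noises $\{\eta_n^{(\ell)}\}_{n\in\mE_t}$ acting on fixed vector coefficients $c_n$; a vector sub-Gaussian tail bound then applies. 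The magnitude of $c_n$ is governed by how many times sample $n$ is reused, and here the balancing rule~\eqref{eq:mix_number} is essential: by always reusing the least-used sample in $\mE_t(\check a_t)$, it forces $\max_{n\in\mE_t}|\{s:\check n_s=n\}| = O(t/\abs{\mE_t})$. This multiplicity is exactly the origin of the $\sqrt{t/\abs{\mE_t}}$ blow-up in the third term and quantifies the price of recycling: a sample reused $\Theta(t/\abs{\mE_t})$ times injects that many correlated copies of its noise.

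Finally, all three pieces are stated in the $V_t$-geometry, whereas the claim is in the $F_t$-norm, and $V_t^{-1}$ itself must be shown to be well-behaved. Using the conditional lower bound $\CE{\Tx{\Action s}s\Tx{\Action s}s^{\top}}{\Filtration s}\succeq d^{-1}\sum_k x_k x_k^\top$ from Lemma~\ref{lem:mixup}, I would apply a matrix Bernstein (Freedman) inequality to the matrix martingale $V_t - \CE{V_t}{\cdot}$ to obtain $V_t \succeq c\,d^{-1}F_t$ with probability $1-\delta$; this concentration is only valid once the accumulated variance is dominated by the mean, which is precisely the stated round threshold $t\ge 4\cdot 288\,d^2\log(2dt^2/\delta)$ (the constant $288$ and the $d^2$ being the matrix-Bernstein scale and variance factors). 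This spectral comparison converts every $V_t^{-1}$-norm bound to the $F_t$-norm and also controls the bias term, and a union bound over the $L$ objectives and the (at most four) high-probability events yields the stated $1-4\delta$ confidence. The main obstacle is unquestionably the recycled term: re-establishing enough independence to concentrate it — via conditioning on everything but the fresh exploration noises and exploiting the mean-zero weights $\check w_s$ — and then controlling the reuse multiplicities through~\eqref{eq:mix_number}; the matrix-concentration lower bound on $V_t$, which simultaneously enables the norm conversion and keeps $\norm{V_t^{-1}}$ bounded, is the second delicate step and is what dictates the dimension-dependent threshold on $t$.
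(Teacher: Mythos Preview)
Your decomposition is the right one and matches the paper: normal equations, a bias term from the ridge penalty, and a stochastic term that you split into a fresh martingale piece and a recycled piece obtained by regrouping over the exploration index $n\in\mE_t$. Your handling of both stochastic pieces is essentially what the paper does --- an Abbasi--Yadkori-type self-normalized bound for the fresh part, and for the recycled part, conditioning so that only the independent exploration noises $\{\eta_n^{(\ell)}\}_{n\in\mE_t}$ remain random, with the reuse multiplicity $N_n=|\{s:\check n_s=n\}|$ controlled by~\eqref{eq:exploration_set} (the paper obtains $N_n\le t/\gamma_t$).

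The gap is in the spectral comparison step. Invoking Lemma~\ref{lem:mixup} directly gives $\CE{\Tx{\Action s}s\Tx{\Action s}s^\top}{\Filtration s}\succeq d^{-1}\sum_k x_kx_k^\top$, and matrix concentration on that only yields $V_t\succeq c\,d^{-1}F_t$. The norm conversion then reads $V_t^{-1}F_tV_t^{-1}\preceq (d/c)^2F_t^{-1}$, which multiplies every term in your bound by an extra factor of $d$ and fails to recover the stated constants. The paper removes this $d^{-1}$ discount by an inverse-probability-weighting/coupling device: it introduces a pseudo-action $\tilde a_s\sim\tilde\pi$ as in~\eqref{eq:tilde_distribution}, observes that on the matching event $2x_{a_s}x_{a_s}^\top=\tilde\pi(\tilde a_s)^{-1}\tilde x_{\tilde a_s,s}\tilde x_{\tilde a_s,s}^\top$, and uses the coupling lemma (Lemma~\ref{lem:resampling}) to transfer tail bounds as if $\tilde a_s$ were a fresh i.i.d.\ draw. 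The per-round conditional mean of the IPW-weighted Gram increment is then the \emph{full} $\sum_{i=1}^{d+1}\tilde x_{i,s}\tilde x_{i,s}^\top\succeq\sum_k x_kx_k^\top$ (not $d^{-1}\sum_k x_kx_k^\top$), so matrix Hoeffding on differences bounded by $O(d/t)$ gives $F_t^{-1/2}\tilde A_tF_t^{-1/2}\succeq\tfrac12 I_d$ once $t\ge4\cdot288\,d^2\log(2dt^2/\delta)$ --- this is exactly where the lemma's threshold comes from. Without this IPW/coupling step your argument proves only a weaker version of the lemma; you need to insert it before the matrix concentration to hit the claimed bound.
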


\begin{proof}
Let us fix $t$ throughout the proof. For $s\in[t]$
and $\ell\in[L]$ let $\tilde{\eta}_{s}^{\langle\ell\rangle}:=\Ty{\ell}{\Action s}s-\Tx{\Action s}s^{\top}\Parameter{\ell}$.
By definition of $\Impute t{\ell}$, 
\begin{align*}
 & \Impute t{\ell}-\!\Parameter{\ell}\\
 & =\left(\sum_{s\in\mE_{t}}x_{\Action s}x_{\Action s}^{\top}+\sum_{s\notin\mE_{t}}\Tx{\Action s}s\Tx{\Action s}s^{\top}+\frac{1}{2}I_{d}\right)^{-1}\left(\sum_{s\in\mE_{t}}x_{\Action s}\Error{\ell}s+\sum_{s\notin\mE_{t}}\Tx{\Action s}s\tilde{\eta}_{s}^{\langle\ell\rangle}-\frac{1}{2}\Parameter{\ell}\right)\\
 & =\left(\sum_{s\in\mE_{t}}2x_{\Action s}x_{\Action s}^{\top}+\sum_{s\notin\mE_{t}}2\Tx{\Action s}s\Tx{\Action s}s^{\top}+I_{d}\right)^{-1}\left(\sum_{s\in\mE_{t}}2x_{\Action s}\Error{\ell}s+\sum_{s\notin\mE_{t}}2\Tx{\Action s}s\tilde{\eta}_{s}^{\langle\ell\rangle}-\Parameter{\ell}\right)\\
 & :=\tilde{A}_{t}^{-1}\left(\sum_{s\in\mE_{t}}2x_{\Action s}\Error{\ell}s+\sum_{s\notin\mE_{t}}2\Tx{\Action s}s\tilde{\eta}_{s}^{\langle\ell\rangle}-\Parameter{\ell}\right)
\end{align*}
Define the new contexts, 
\[
\tilde{x}_{i,s}:=\begin{cases}
\sqrt{\lambda_{i}}u_{i} & \forall i\in[d]\\
x_{a_{s}} & i=d+1
\end{cases},\quad\check{X}_{i,s}:=\begin{cases}
\check{w}_{s}\tilde{x}_{i,s}+\check{w}_{s}\sqrt{\lambda_{i_{s}}}u_{i_{s}} & \forall i\in[d]\\
\check{w}_{s}x_{a_{s}}+\check{w}_{s}\sqrt{\lambda_{i_{s}}}u_{i_{s}} & i=d+1
\end{cases}.
\]
Setting the number of resampling $\rho_{s}=\log(s^{2}/\delta)/\log(2)$,
the probability of obtaining matching samples $\tilde{x}_{\tilde{a}_{s}}^{(m_{s})}=x_{a_{s}}^{(m_{s})}$
for $m_{s}\le\rho_{s}$ is at least $1-\delta$ for all $s\in[t]$,
where $m_{s}$ is the number of trials until the matching. With the
matching pseudo action $\tilde{a}_{s}^{(m_{s})}$, 
\begin{align*}
\tilde{A}_{t} & =\sum_{s\in\mE_{t}}\frac{1}{\tilde{\pi}_{s}(\tilde{a}_{s}^{(m_{s})})}\tilde{x}_{\tilde{a}_{s}^{(m_{s})},s}\tilde{x}_{\tilde{a}_{s}^{(m_{s})},s}^{\top}+\sum_{s\notin\mE_{t}}\frac{1}{\tilde{\pi}_{s}(\tilde{a}_{s}^{(m_{s})})}\Tx{\tilde{a}_{s}^{(m_{s})}}s\Tx{\tilde{a}_{s}^{(m_{s})}}s^{\top}+I_{d}.
\end{align*}
By the coupling lemma (Lemma\ref{lem:resampling}) and , with probability
at least $1-\delta$, 
\begin{align*}
\tilde{A}_{t}= & \sum_{s\in\mE_{t}}\frac{1}{\tilde{\pi}_{s}(\tilde{a}_{s})}\tilde{x}_{\tilde{a}_{s},s}\tilde{x}_{\tilde{a}_{s},s}^{\top}+\sum_{s\notin\mE_{t}}\frac{1}{\tilde{\pi}_{s}(\tilde{a}_{s})}\Tx{\tilde{a}_{s}}s\Tx{\tilde{a}_{s}}s^{\top}+I_{d}\\
= & \sum_{s\in\mE_{t}}\sum_{i=1}^{d+1}\frac{\Indicator{\tilde{a}_{s}=i}}{\tilde{\pi}_{s}(i)}\tilde{x}_{i,s}\tilde{x}_{i,s}^{\top}+\sum_{s\notin\mE_{t}}\frac{\Indicator{\tilde{a}_{s}=i}}{\tilde{\pi}_{s}(i)}\Tx is\Tx is^{\top}+I_{d}\\
:= & \sum_{s\in\mE_{t}}\sum_{i=1}^{d+1}U_{i,s}\tilde{x}_{i,s}\tilde{x}_{i,s}^{\top}+\sum_{s\notin\mE_{t}}U_{i,s}\Tx is\Tx is^{\top}+I_{d}.
\end{align*}
Then the self-normalized norm, 
\[
\norm{\Impute t{\ell}-\!\Parameter{\ell}}_{F_{t}}=\norm{\sum_{s\in\mE_{t}}2x_{a_{s}}\Error{\ell}s+\sum_{s\notin\mE_{t}}2\Tx{a_{s}}s\tilde{\eta}_{s}^{\langle\ell\rangle}-\Parameter{\ell}}_{\tilde{A_{t}}^{-1}F_{t}\tilde{A_{t}}^{-1}}.
\]
To find the expectation of $\tilde{A}_{t}$, recall that for $s\in[t]$,
the pseudo-action $\tilde{a}_{s}$ is sampled from $\tilde{\pi}_{s}$
defined in~\eqref{eq:tilde_distribution}. Let $\Expectation_{s}[\cdot]:=\Expectation[\cdot|\sigma(\cup_{u\notin\mE_{t},u<s}\{w_{s},\check{w}_{s},\check{n}_{s}\}\cup\{\tilde{a}_{1},\ldots,\tilde{a}_{s-1}\})]$
denote the conditional expectation at round $s$. Then $\Expectation_{s}[U_{i,s}]=1$.
Define 
\begin{equation}
D_{s}:=\begin{cases}
F_{t}^{-1/2}\left[\sum_{i=1}^{d+1}\left(U_{i,s}-1\right)\tilde{x}_{i,s}\tilde{x}_{i,s}^{\top}\right]F_{t}^{-1/2} & s\in\mE_{t}\\
F_{t}^{-1/2}\left[\sum_{i=1}^{d+1}\left(U_{i,s}\Tx is\Tx is^{\top}-\frac{2d+1}{d}\sum_{i=0}^{d}\tilde{x}_{i,s}\tilde{x}_{i,s}^{\top}-x_{a_{s}}x_{a_{s}}^{\top}\right)\right]F_{t}^{-1/2} & s\notin\mE_{t}
\end{cases}.\label{eq:mix_martingale_difference}
\end{equation}
Note that $D_{s}$ is martingale difference because for $s\notin\mE_{t}$,
\begin{align*}
 & \Expectation_{s}\left[F_{t}^{-1/2}\left(\sum_{i=0}^{d+1}U_{i,s}\Tx is\Tx is^{\top}\right)F_{t}^{-1/2}\right]\\
 & =F_{t}^{-1/2}\Expectation_{s}\left[\sum_{i=0}^{d+1}U_{i,s}\Tx is\Tx is^{\top}\right]F_{t}^{-1/2}\\
 & =F_{t}^{-1/2}\Expectation_{s}\left[\sum_{i=0}^{d+1}\Tx is\Tx is^{\top}\right]F_{t}^{-1/2}\\
 & =F_{t}^{-1/2}\Expectation_{s}\left[\sum_{i=0}^{d+1}\left(\check{w}_{s}\tilde{x}_{i,s}+\check{w}_{s}\sqrt{\lambda_{i_{s}}}u_{i_{s}}\right)\left(\check{w}_{s}\tilde{x}_{i,s}+\check{w}_{s}\sqrt{\lambda_{i_{s}}}u_{i_{s}}\right)^{\top}\right]F_{t}^{-1/2}\\
 & =F_{t}^{-1/2}\Expectation_{s}\left[\sum_{i=0}^{d+1}\tilde{x}_{i,s}\tilde{x}_{i,s}^{\top}+(d+1)\lambda_{i_{s}}u_{i_{s}}u_{i_{s}}^{\top}\right]F_{t}^{-1/2}\\
 & =F_{t}^{-1/2}\Expectation_{s}\left[\sum_{i=0}^{d+1}\left(\tilde{x}_{i,s}\tilde{x}_{i,s}^{\top}+\frac{d+1}{d}\lambda_{i}u_{i}u_{i}^{\top}\right)\right]F_{t}^{-1/2}\\
 & =F_{t}^{-1/2}\Expectation_{s}\left[\frac{2d+1}{d}\sum_{i=0}^{d}\tilde{x}_{i,s}\tilde{x}_{i,s}^{\top}+x_{a_{s}}x_{a_{s}}^{\top}\right]F_{t}^{-1/2}
\end{align*}
where the second last equality holds by $w_{t},\check{w}_{t}\sim\text{unif}[-\sqrt{3},\sqrt{3}]$
and the last equality holds by $i_{s}\sim\text{unif}[d]$. For $s\in\mE_{t}$,
\begin{align*}
\norm{D_{s}}_{2}\le & 2d\max_{i\in[d+1]}\norm{F_{t}^{-1/2}\tilde{x}_{i,s}\tilde{x}_{i,s}^{\top}F_{t}^{-1/2}}_{2}\\
\le & 2d\max_{i\in[d+1]}\norm{\tilde{x}_{i,s}}_{F_{t}^{-1}}^{2}\\
\le & \frac{2d}{t},
\end{align*}
where the last inequality can be easily found by the fact $F_{t}\succeq t\sum_{i=1}^{d}\tilde{x}_{i,s}\tilde{x}_{i,s}+I_{d}=t\sum_{k=1}^{K}x_{k}x_{k}^{\top}+I_{d}$
and following the proof of Lemma \ref{lem:norm_bound}. For $s\notin\mE_{t}$,
\begin{align*}
\norm{D_{s}}_{2}\le & 2d\max_{i\in[d+1]}\norm{F_{t}^{-1/2}\Tx is\Tx is^{\top}F_{t}^{-1/2}}_{2}\\
\le & 2\left(w_{t}^{2}+\check{w}_{t}^{2}\right)d\max_{i\in[d+1]}\norm{F_{t}^{-1/2}\tilde{x}_{i,s}\tilde{x}_{i,s}^{\top}F_{t}^{-1/2}}_{2}\\
\le & \frac{12d}{t},
\end{align*}
where the last inequality holds by $\check{w}_{s},\check{w}_{s}\sim$unif$[-\sqrt{3},\sqrt{3}]$.
Thus, the eigenvalue of the martingale difference matrix lies in $[-12d/t,12d/t].$
Then by Hoeffding bound for matrices (Lemma \ref{lem:matrix_hoeffding}),
\[
\Probability\left(\norm{\sum_{s=1}^{t}D_{s}}_{2}>x\right)\le2d\exp\left(-\frac{tx^{2}}{288d^{2}}\right).
\]
Note that 
\begin{align*}
F_{t}\preceq & t\sum_{k=1}^{K}x_{k}x_{k}^{\top}+\sum_{s=1}^{t}x_{a_{s}}x_{a_{s}}^{\top}+I_{d}\\
= & \sum_{s=1}^{t}\sum_{i=1}^{d+1}\tilde{x}_{i,s}\tilde{x}_{i,s}^{\top}+I_{d}\\
\preceq & \sum_{s\in\mE_{t}}\sum_{i=1}^{d+1}\tilde{x}_{i,s}\tilde{x}_{i,s}^{\top}+\sum_{s\notin\mE_{t}}\left(\frac{2d+1}{d}\sum_{i=0}^{d}\tilde{x}_{i,s}\tilde{x}_{i,s}^{\top}+x_{a_{s}}x_{a_{s}}^{\top}\right),
\end{align*}
where the last term appears in the martingale difference \eqref{eq:mix_martingale_difference}.
Thus, 
\begin{align*}
\Probability\left(\Maxeigen{I_{d}-F_{t}^{-1/2}\tilde{A}_{t}F_{t}^{-1/2}}>x\right) & =\Probability\left(\Maxeigen{F_{t}^{-1/2}\left\{ F_{t}-\tilde{A}_{t}\right\} F_{t}^{-1/2}}>x\right)\\
 & \le\Probability\left(\Maxeigen{\sum_{s=1}^{t}D_{s}}>x\right)\\
 & \le2d\exp\left(-\frac{tx^{2}}{288d^{2}}\right).
\end{align*}
Set $x=1/2$. For $t\in\Natural$ such that $t\ge4\cdot288d^{2}\log\frac{2dt^{2}}{\delta}$,
with probability at least $1-\delta/t^{2}$, 
\[
\Maxeigen{I_{d}-F_{t}^{-1/2}\tilde{A}_{t}F_{t}^{-1/2}}\le\frac{1}{2},
\]
which implies 
\[
I_{d}-F_{t}^{-1/2}\tilde{A}_{t}F_{t}^{-1/2}\preceq\frac{1}{2}I_{d}\Rightarrow\frac{1}{2}I_{d}\preceq F_{t}^{-1/2}\tilde{A}_{t}F_{t}^{-1/2}.
\]
Because the matrix $F_{t}^{-1/2}\tilde{A}_{t}F_{t}^{-1/2}$ is symmetric
and positive definite, $F_{t}^{1/2}\tilde{A}_{t}^{-1}F_{t}^{1/2}\preceq2I_{d}$,
and thus, 
\[
\tilde{A_{t}}^{-1}F_{t}\tilde{A_{t}}^{-1}=F_{t}^{-1/2}\left(F_{t}^{-1/2}\tilde{A}_{t}F_{t}^{-1/2}\right)^{-2}F_{t}^{-1/2}\preceq4F_{t}^{-1}.
\]
Then the self-normalized norm, 
\begin{equation}
\begin{split}\norm{\Impute t{\ell}-\!\Parameter{\ell}}_{F_{t}}= & \norm{\sum_{s\in\mE_{t}}2x_{a_{s}}\Error{\ell}s+\sum_{s\notin\mE_{t}}2\Tx{a_{s}}s\tilde{\eta}_{s}^{\langle\ell\rangle}-\Parameter{\ell}}_{\tilde{A_{t}}^{-1}F_{t}\tilde{A_{t}}^{-1}}\\
\le & \norm{\sum_{s\in\mE_{t}}2x_{a_{s}}\Error{\ell}s+\sum_{s\notin\mE_{t}}2\Tx{a_{s}}s\tilde{\eta}_{s}^{\langle\ell\rangle}-\Parameter{\ell}}_{4F_{t}^{-1}}\\
\le & 4\norm{\sum_{s\in\mE_{t}}x_{a_{s}}\Error{\ell}s+\sum_{s\notin\mE_{t}}\Tx{a_{s}}s\tilde{\eta}_{s}^{\langle\ell\rangle}-\Parameter{\ell}}_{F_{t}^{-1}}+2\norm{\Parameter{\ell}}_{F_{t}^{-1}}.
\end{split}
\label{eq:mix_self_norm}
\end{equation}
In the first term, 
\begin{align*}
\Tx{a_{s}}s\tilde{\eta}_{s}^{\langle\ell\rangle}= & \left(w_{s}x_{a_{s}}+\check{w}_{s}x_{\Action{\check{n}_{s}}}\right)\left(w_{s}\Error{\ell}s+\check{w}_{s}\Error{\ell}{\check{n}_{s}}\right)\\
= & \Error{\ell}su_{s}^{2}x_{a_{s}}+\Error{\ell}{\check{n}_{s}}\check{w}_{s}^{2}x_{\Action{\check{n}_{s}}}+w_{s}\check{w}_{s}\left(\Error{\ell}sx_{\Action{\check{n}_{s}}}+\Error{\ell}{\check{n}_{t}}x_{a_{s}}\right)\\
= & \Error{\ell}s\left(w_{s}^{2}x_{a_{s}}+w_{s}\check{w}_{s}x_{\Action{\check{n}_{s}}}\right)+\Error l{\check{n}_{s}}\left(w_{s}\check{w}_{s}x_{a_{s}}+\check{w}_{s}^{2}x_{\Action{\check{n}_{s}}}\right).
\end{align*}
Note that for all exploitation round $s\in[t]\setminus\mE_{t}$, the
sampled round for reuse $\check{n}_{s}\in\mE_{t}$. Thus, we can decompose,
\begin{align*}
 & \sum_{s\in\mE_{t}}x_{\Action s}\Error{\ell}{s}+\sum_{s\notin\mE_{t}}\Tx{a_{s}}s\tilde{\eta}_{s}^{\langle \ell \rangle}\\
 & =\sum_{s\in\mE_{t}}x_{\Action s}\Error{\ell}{s}+\sum_{s\notin\mE_{t}}\Error{\ell}{s}\left(w_{s}^{2}x_{\Action s}+w_{s}\check{w}_{s}x_{\Action{\check{n}_{s}}}\right)+\Error l{\check{n}_{s}}\left(w_{s}\check{w}_{s}x_{\Action s}+\check{w}_{s}^{2}x_{\Action{\check{n}_{s}}}\right)\\
 & =\sum_{s\in\mE_{t}}x_{\Action s}\Error{\ell}{s}+\sum_{s\notin\mE_{t}}\Error{\ell}{s}\left(w_{s}^{2}x_{\Action s}+w_{s}\check{w}_{s}x_{\Action{\check{n}_{s}}}\right)\\
 & +\sum_{s\in\mE_{t}}\left\{ \left(\sum_{\tau\notin\mE_{t}}\Indicator{\check{n}_{\tau}=s}\check{w}_{\tau}^{2}\right)x_{\Action s}+\sum_{\tau\notin\mE_{t}}\Indicator{\check{n}_{\tau}=s}w_{\tau}\check{w}_{\tau}x_{\Action{\tau}}\right\} \Error{\ell}{s},
\end{align*}
and thus, 
\begin{equation}
\begin{split} & \norm{\sum_{s\in\mE_{t}}x_{a_{s}}\Error{\ell}{s}+\sum_{s\notin\mE_{t}}\Tx{a_{s}}s\tilde{\eta}_{s}^{\langle}-\Parameter{\ell}}_{F_{t}^{-1}}\\
 & \le\norm{\sum_{s\in\mE_{t}}x_{a_{s}}\Error{\ell}{s}+\sum_{s\notin\mE_{t}}\Error{\ell}{s}\left(w_{s}^{2}x_{a_{s}}+w_{s}\check{w}_{s}x_{\Action{\check{n}_{s}}}\right)}_{F_{t}^{-1}}\\
 & +\norm{\sum_{s\in\mE_{t}}\left\{ \left(\sum_{\tau\notin\mE_{t}}\Indicator{\check{n}_{\tau}=s}\check{w}_{\tau}^{2}\right)x_{a_{s}}+\sum_{\tau\notin\mE_{t}}\Indicator{\check{n}_{\tau}=s}w_{\tau}\check{w}_{\tau}x_{a_{\tau}}\right\} \Error{\ell}{s}}_{F_{t}^{-1}}
\end{split}
\label{eq:mix_decomp}
\end{equation}
To bound the first term, for $\lambda\in\Real^{d}$, define 
\begin{align*}
D_{s}^{\lambda} & :=\begin{cases}
\frac{\lambda^{\top}F_{t}^{-1/2}x_{a_{s}}\Error{\ell}{s}}{\sigma}-\frac{\lambda^{\top}\lambda}{t} & s\in\mE_{t}\\
\frac{\lambda^{\top}F_{t}^{-1/2}\left(w_{s}^{2}x_{a_{s}}+w_{s}\check{w}_{s}x_{\Action{\check{n}_{s}}}\right)\Error{\ell}{s}}{3\sigma}-\frac{2\lambda^{\top}\lambda}{t} & s\notin\mE_{t}
\end{cases}.
\end{align*}
Let $\Expectation_{s}$ denote a conditional expectation given errors
$\{\Error{\ell}{\tau}:\tau\in[s-1],\ell\in[L]\}$, actions $\{a_{\tau}:\tau\in[s]\}$,
random indexes $\{\check{n}_{\tau}:\tau\in[s]\setminus\mE_{t}\}$
and weights $\{w_{\tau},\check{w}_{\tau}:\tau\in[s]\setminus\mE_{t}\}$. For
$s\in\mE_{t}$, because $\Error{\ell}{s}$ is $\sigma$-sub-Gaussian and
$\lambda_{\max}(F_{t}^{-1/2}x_{k}x_{k}^{\top}F_{t}^{-1/2})=x_{k}^{\top}F_{t}^{-1}x_{k}\le1/t$
for all $x\in\mX$, 
\begin{align*}
\Expectation_{s}\left[\exp\left(D_{s}^{\lambda}\right)\right]\le & \Expectation_{s}\left[\exp\left(\frac{\lambda^{\top}F_{t}^{-1/2}x_{a_{s}}\Error{\ell}{s}}{\sigma}-\frac{\lambda^{\top}F_{t}^{-1/2}x_{a_{s}}x_{a_{s}}^{\top}F_{t}^{-1/2}\lambda}{2}\right)\right]\\
= & \Expectation_{s}\left[\exp\left\{ \frac{\lambda^{\top}F_{t}^{-1/2}x_{a_{s}}\Error{\ell}{s}}{\sigma}-\frac{\left(\lambda^{\top}F_{t}^{-1/2}x_{a_{s}}\right)^{2}}{2}\right\} \right]\\
\le & 1.
\end{align*}
For $s\notin\mE_{t}$, 
\begin{align*}
&\Expectation_{s}\left[\exp\left(D_{s}^{\lambda}\right)\right]\\&\le\!\exp\!\Big\{\frac{\sigma^{2}\big(w_{s}^{2}\lambda^{\top}F_{t}^{-1/2}x_{\Action s}+w_{s}\check{w}_{s}\lambda^{\top}F_{t}^{-1/2}x_{\Action{\check{n}_{s}}}\big)^{2}}{18\sigma^{2}}-\big(\lambda^{\top}F_{t}^{-1/2}x_{\PseudoAction s}\big)^{2}-\big(\lambda^{\top}F_{t}^{-1/2}x_{a_{\check{n}_{s}}}\big)^{2}\Big\}\\&\le\!\exp\!\Big[\frac{\sigma^{2}\big\{2w_{s}^{4}(\lambda^{\top}F_{t}^{-\frac{1}{2}}x_{\Action s})^{2}+2w_{s}^{2}\check{w}_{s}^{2}(\lambda^{\top}F_{t}^{-\frac{1}{2}}x_{\Action{\check{n}_{s}}})^{2}\big\}}{18\sigma^{2}}\!-\!\big(\lambda^{\top}F_{t}^{-\frac{1}{2}}x_{\PseudoAction s}\big)^{2}\!-\!\big(\lambda^{\top}F_{t}^{-\frac{1}{2}}x_{a_{\check{n}_{s}}}\big)^{2}\Big]\\&\le\!\exp\!\left(\frac{18\sigma^{2}\big\{(\lambda^{\top}F_{t}^{-1/2}x_{\Action s})^{2}+(\lambda^{\top}F_{t}^{-1/2}x_{\Action{\check{n}_{s}}})^{2}\big\}}{18\sigma^{2}}-\big(\lambda^{\top}F_{t}^{-1/2}x_{\PseudoAction s}\big)^{2}-\big(\lambda^{\top}F_{t}^{-1/2}x_{a_{\check{n}_{s}}}\big)^{2}\right)\\
&\le1,
\end{align*}
where the second inequality holds by $(a+b)^{2}\le2a^{2}+2b^{2}$
and the third inequality holds by $w_{s},\check{w}_{s}\in[-\sqrt{3},\sqrt{3}]$
almost surely. Thus, for all $\lambda\in\Real^{d}$, 
\[
\Expectation\left[\exp\left(\sum_{s=1}^{t}D_{s}^{\lambda}-2\lambda^{\top}\lambda\right)\right]\le1.
\]
Following the proof of Theorem 1 in \citet{abbasi2011improved}, with
probability at least $1-\delta/L$, 
\begin{align*}
 & \norm{\sum_{s\in\mE_{t}}x_{a_{s}}\Error{\ell}{s}+\sum_{s\notin\mE_{t}}\Error{\ell}{s}\left(w_{s}^{2}x_{\Action s}+w_{s}\check{w}_{s}x_{\Action{\check{n}_{s}}}\right)}_{F_{t}^{-1}}\\
 & \le\sqrt{3}\norm{\sum_{s\in\mE_{t}}x_{a_{s}}F_{t}^{-1/2}\Error{\ell}{s}+\sum_{s\notin\mE_{t}}\Error{\ell}{s}\left(w_{s}^{2}F_{t}^{-1/2}x_{\Action s}+w_{s}\check{w}_{s}F_{t}^{-1/2}x_{\Action{\check{n}_{s}}}\right)}_{\left(2I_{d}+I_{d}\right)^{-1}}\\
 & \le3\sigma\sqrt{3d\log\left(\frac{3Lt}{\delta}\right)}.
\end{align*}
To bound the second term in \eqref{eq:mix_decomp}, define 
\[
M_{s}^{\lambda}:=\lambda^{\top}F_{t}^{-1/2}\left\{ \left(\sum_{\tau\notin\mE_{t}}\Indicator{\check{n}_{\tau}=s}\check{w}_{\tau}^{2}\right)x_{a_{s}}+\sum_{\tau\notin\mE_{t}}\Indicator{\check{n}_{\tau}=s}w_{\tau}\check{w}_{\tau}x_{\tilde{a}_{\tau}}\right\} \frac{\Error{\ell}{s}\sqrt{\gamma_{t}}}{6\sigma\sqrt{t}},
\]
for $\lambda\in\Real^{d}$. Let $\Expectation_{s}$ denote a conditional
expectation given errors $\{\Error l{\tau}:\tau\in[s-1]\}$, pseudo-actions
$\{\tilde{a}_{\tau}:\tau\in[s]\}$, random indexes $\{\check{n}_{\tau}:\tau\in[s]\setminus\mE_{t}\}$.
For $s\in\mE_{t}$, for each $s\in\mE_{t}$, by Hoeffding's Lemma,
since $w_{s}\in[-\sqrt{3},\sqrt{3}]$, for any $v\in\Real$, 
\[
\Expectation_{s}\left[\exp\left(w_{s}v\right)\right]\le\Expectation_{s}\left[\frac{\sqrt{3}-w_{s}}{2\sqrt{3}}e^{-\sqrt{3}v}+\frac{w_{s}+\sqrt{3}}{2\sqrt{3}}e^{\sqrt{3}v}\right]=\cosh\left(\sqrt{3}|v|\right).
\]
Thus, 
\begin{align*}
 & \CE{\exp\left\{ \sum_{\tau\notin\mE_{t}}\Indicator{\check{n}_{\tau}=s}w_{\tau}\check{w}_{\tau}\lambda^{\top}F_{t}^{-1/2}x_{a_{\tau}}\frac{\Error{\ell}{s}\sqrt{\gamma_{t}}}{6\sigma\sqrt{t}}\right\} }{\Error{\ell}{s},\{\check{n}_{\tau},\check{w}_{\tau},a_{\tau}\}_{\tau\notin\mE_{t}}}\\
 & \le\prod_{\tau\notin\mE_{t}}\cosh\left(\frac{\sqrt{3\gamma_{t}}\Error{\ell}{s}}{6\sigma\sqrt{t}}\Indicator{\check{n}_{\tau}=s}\abs{\check{w}_{\tau}}\abs{\lambda^{\top}F_{t}^{-1/2}x_{a_{\tau}}\Error{\ell}{s}}\right)\\
 & \le\prod_{\tau\notin\mE_{t}}\cosh\left(\frac{\sqrt{\gamma_{t}}}{2\sigma\sqrt{t}}\max_{k\in[K]}\abs{\lambda^{\top}F_{t}^{-1/2}x_{k}}\Indicator{\check{n}_{\tau}=s}\abs{\Error{\ell}{s}}\right)
\end{align*}
Because $\cosh(x)\cosh(y)\le\cosh(x+y)$ for $x,y>0$, we obtain 
\begin{align*}
 & \CE{\exp\left\{ \sum_{\tau\notin\mE_{t}}\Indicator{\check{n}_{\tau}=s}w_{\tau}\check{w}_{\tau}\lambda^{\top}F_{t}^{-1/2}x_{a_{\tau}}\frac{\Error{\ell}{s}\sqrt{\gamma_{t}}}{6\sigma\sqrt{t}}\right\} }{\Error{\ell}{s},\{\check{n}_{\tau},\check{w}_{\tau},a_{\tau}\}_{\tau\notin\mE_{t}}}\\
 & \le\cosh\left(\frac{\sqrt{\gamma_{t}}}{2\sigma\sqrt{t}}\max_{k\in[K]}\abs{\lambda^{\top}F_{t}^{-1/2}x_{a_{s}}}\sum_{\tau\notin\mE_{t}}\Indicator{\check{n}_{\tau}=s}\abs{\Error{\ell}{s}}\right).
\end{align*}
For $s\in\mE_{t}$, let $N_{s}:=\sum_{\tau\notin\mE_{t}}\Indicator{\check{n}_{\tau}=s}$. 
Because $\cosh(\cdot)$ is an even function, $\cosh(x|y|)=\cosh(xy)$
for any $x,y\in\Real$ and 
\begin{align*}
&\Expectation_{s}\left[\exp\left(M_{s}^{\lambda}\right)\right]\\&\le\Expectation_{s}\bigg[\exp\!\left(\left(N_{s}\check{w}_{\tau}^{2}\right)\lambda^{\top}F_{t}^{-1/2}x_{a_{s}}\frac{\Error{\ell}s\sqrt{\gamma_{t}}}{6\sigma\sqrt{t}}\right)\cosh\left(\frac{\sqrt{\gamma_{t}}}{2\sigma\sqrt{t}}\max_{k\in[K]}\abs{\lambda^{\top}F_{t}^{-1/2}x_{k}}N_{s}\Error{\ell}s\right)\bigg]\\&=\frac{1}{2}\Expectation_{s}\left[\exp\!\left(\left\{ N_{s}\check{w}_{\tau}^{2}\frac{\lambda^{\top}F_{t}^{-1/2}x_{a_{s}}\sqrt{\gamma_{t}}}{6\sigma\sqrt{t}}\!+\!\frac{\sqrt{\gamma_{t}}}{2\sigma\sqrt{t}}\max_{k\in[K]}\abs{\lambda^{\top}F_{t}^{-1/2}x_{k}}N_{s}\right\} \Error{\ell}s\right)\right]\\&\;+\frac{1}{2}\Expectation_{s}\left[\exp\!\left(\left\{ N_{s}\check{w}_{\tau}^{2}\frac{\lambda^{\top}F_{t}^{-1/2}x_{a_{s}}\sqrt{\gamma_{t}}}{6\sigma\sqrt{t}}\!-\!\frac{\sqrt{\gamma_{t}}}{2\sigma\sqrt{t}}\max_{k\in[K]}\abs{\lambda^{\top}F_{t}^{-1/2}x_{k}}N_{s}\right\} \Error{\ell}s\right)\right].
\end{align*}
Because $\Error{\ell}{s}$ is $\sigma$-sub-Gaussian and $(a-b)^{2}+(a+b)^{2}=2a^{2}+2b^{2}$
for $a,b\in\Real$, 
\begin{align*}
\Expectation_{s}\left[\exp\left(M_{s}^{\lambda}\right)\right]&\le\exp\left(\frac{\gamma_{t}}{8t}\left\{ \left(N_{s}\check{w}_{\tau}^{2}\frac{\lambda^{\top}F_{t}^{-1/2}x_{\tilde{a}_{s}}}{3}\right)^{2}+\left(\max_{k\in[K]}\abs{\lambda^{\top}F_{t}^{-1/2}x_{k}}N_{s}\right)^{2}\right\} \right)\\&\le\exp\left(\frac{\gamma_{t}}{4t}\max_{k\in[K]}\abs{\lambda^{\top}F_{t}^{-1/2}x_{k}}^{2}N_{s}^{2}\right).
\end{align*}
By definition of $\check{n}_{\tau}$ the number of reusing round $s\in\mE_{t}$,
\[
N_{s}:=\sum_{\tau\notin\mE_{t}}\Indicator{\check{n}_{\tau}=s}\le\frac{\sum_{\tau\notin\mE_{t}}\Indicator{a_{\check{n}_{\tau}}=a_{s}}}{\sum_{u\in\mE_{t}}\Indicator{a_{u}=a_{s}}}+1\le\frac{\sum_{\tau=1}^{t}\Indicator{a_{\check{n}_{\tau}}=a_{s}}}{\sum_{u\in\mE_{t}}\Indicator{a_{u}=a_{s}}}\le\frac{t}{\gamma_{t}},
\]
where the last inequality holds by construction of the exploration set \eqref{eq:exploration_set}. 
Thus, 
\[
\Expectation_{s}\left[\exp\left(M_{s}^{\lambda}\right)\right]\le\exp\left(\frac{1}{4}\max_{k\in[K]}\abs{\lambda^{\top}F_{t}^{-1/2}x_{k}}^{2}N_{s}\right)
\]
By the fact that $t=\sum_{s\in\mE_{t}}N_s$
\begin{align*}
\Expectation\left[\exp\left(\sum_{s\in\mE_{t}}M_{s}^{\lambda}-2\lambda^{\top}\lambda\right)\right]\le&\Expectation\left[\exp\left(\sum_{s\in\mE_{t}}M_{s}^{\lambda}-t\max_{k\in[K]}\abs{\lambda^{\top}F_{t}^{-1/2}x_{k}}^{2}\right)\right]\\
=&\Expectation\left[\exp\left(\sum_{s\in\mE_{t}}\left(M_{s}^{\lambda}-\max_{k\in[K]}\abs{\lambda^{\top}F_{t}^{-1/2}x_{k}}^{2}N_{s}\right)\right)\right]\\
\le&1.
\end{align*}
Following the proof of Theorem 1 in \citet{abbasi2011improved}, with
probability at least $1-\delta/L$, 
\begin{align*}
 & \norm{\sum_{s\in\mE_{t}}\left\{ \left(\sum_{\tau\notin\mE_{t}}\Indicator{\check{n}_{\tau}=s}\check{w}_{\tau}^{2}\right)x_{a_{s}}+\sum_{\tau\notin\mE_{t}}\Indicator{\check{n}_{\tau}=s}w_{\tau}\check{w}_{\tau}x_{a_{\tau}}\right\} \Error{\ell}{s}}_{F_{t}^{-1}}\\
 & =\sqrt{3}\norm{\sum_{s\in\mE_{t}}\left\{ \left(\sum_{\tau\notin\mE_{t}}\Indicator{\check{n}_{\tau}=s}\check{w}_{\tau}^{2}\right)x_{a_{s}}+\sum_{\tau\notin\mE_{t}}\Indicator{\check{n}_{\tau}=s}w_{\tau}\check{w}_{\tau}x_{a_{\tau}}\right\} \Error{\ell}{s}}_{(2I_{d}+I_{d})^{-1}}\\
 & \le6\sigma\sqrt{\frac{3td}{\abs{\mE_{t}}}\log\left(\frac{3Lt}{\delta}\right)}.
\end{align*}
In summary, with probability at least $1-4\delta$, 
\begin{align*}
\norm{\Impute t{\ell}-\!\Parameter{\ell}}_{F_{t}}\le & 2\norm{\Parameter{\ell}}_{F_{t}^{-1}}+12\sigma\sqrt{3d\log\left(\frac{3Lt}{\delta}\right)}+24\sigma\sqrt{\frac{3td}{\abs{\mE_{t}}}\log\left(\frac{3Lt}{\delta}\right)},
\end{align*}
for all $\ell\in[L]$.
\end{proof}

\subsection{Robustness of the Doubly-Robust Estimation}

\label{sec:DR_robustness}

We prove a lemma on the robustness of the general DR estimator. 
\begin{lem}[Robustness of DR estimator to the error of imputation estimator.]
\label{lem:DR_robust} For $t\ge1$ let $F_{t}:=t\sum_{k=1}^{K}x_{k}x_{k}^{\top}+I_{d}$.
For any $x\in\mX$ and $\delta\in(0,1)$, the DR estimator $\Estimator{\ell}t$
employing $\tilde{\theta}^{\langle\ell\rangle}\in\Real^{d}$as an
imputation estimator satisfies 
\[
\abs{x^{\top}(\Estimator{\ell}t-\Parameter{\ell})}\le\norm x_{F_{t}^{-1}}\Big(\theta_{\max}+2\sigma\sqrt{d\log\frac{Lt}{\delta}}+3d\sqrt{\frac{2}{t}\log\frac{2dt^{2}}{\delta}}\norm{\tilde{\theta}^{\langle\ell\rangle}-\!\Parameter{\ell}}_{F_{t}}\Big)
\]
with probability at least $1-3\delta$ for all $t\ge1$ and $\ell\in[L]$.
For each $x\in\mX,$
\[
\abs{x^{\top}(\Estimator{\ell}t-\Parameter{\ell})}\le\norm x_{F_{t}^{-1}}\Big(\theta_{\max}+8\sigma\sqrt{2\log\frac{4Lt^{2}}{\delta}}+3d\sqrt{\frac{2}{t}\log\frac{2dt^{2}}{\delta}}\norm{\tilde{\theta}^{\langle\ell\rangle}-\!\Parameter{\ell}}_{F_{t}}\Big),
\]
with probability at least $1-3\delta$ for all $t\ge1$ and $\ell\in[L]$. 
\end{lem}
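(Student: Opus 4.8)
The plan is to start from the standard doubly-robust error decomposition and then exploit the mean-zero property of the bias-correction term to control the contribution of the imputation error. Writing $G_{t}:=\sum_{s=1}^{t}\Indicator{\mathcal{M}_{s}}\sum_{i=1}^{d+1}\tilde{x}_{i,s}\tilde{x}_{i,s}^{\top}+I_{d}$ for the Gram matrix of the DR-mix estimator, I would first expand
\[
\Estimator{\ell}t-\Parameter{\ell}=G_{t}^{-1}\Big(\sum_{s=1}^{t}\Indicator{\mathcal{M}_{s}}\sum_{i=1}^{d+1}\tilde{x}_{i,s}\big(\PseudoY{\ell}is-\tilde{x}_{i,s}^{\top}\Parameter{\ell}\big)-\Parameter{\ell}\Big),
\]
and substitute the definition of $\PseudoY{\ell}is$ to split $\PseudoY{\ell}is-\tilde{x}_{i,s}^{\top}\Parameter{\ell}$ into (I) the imputation-bias term $\tilde{x}_{i,s}^{\top}(\tilde{\theta}^{\langle\ell\rangle}-\Parameter{\ell})(1-\Indicator{\PseudoAction s=i}/\tilde{\pi}_{i})$ and (II) the noise term $(\Indicator{\PseudoAction s=i}/\tilde{\pi}_{i})\xi_{i,s}^{\langle\ell\rangle}$, where $\xi_{i,s}^{\langle\ell\rangle}:=\Newy{\ell}is-\tilde{x}_{i,s}^{\top}\Parameter{\ell}$. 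The error then reads $x^{\top}(\Estimator{\ell}t-\Parameter{\ell})=x^{\top}G_{t}^{-1}\big[(\mathrm{I})+(\mathrm{II})-\Parameter{\ell}\big]$, and I would bound the three pieces separately.

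The first preparatory step is to pass from the random matrix $G_{t}$ to the deterministic $F_{t}=t\sum_{k}x_{k}x_{k}^{\top}+I_{d}$. On the matching event $\mathcal{M}_{s}$ we have $\sum_{i=1}^{d+1}\tilde{x}_{i,s}\tilde{x}_{i,s}^{\top}=\sum_{k=1}^{K}x_{k}x_{k}^{\top}+x_{a_{s}}x_{a_{s}}^{\top}\succeq\sum_{k}x_{k}x_{k}^{\top}$; since the choice $\rho_{s}=\log((s+1)^{2}/\delta^{\prime})/\log 2$ makes each matching fail with probability at most $\delta^{\prime}/(s+1)^{2}$, a union bound over $s$ shows all matchings succeed with probability at least $1-\delta$, on which $G_{t}\succeq F_{t}$ and hence $\norm{x}_{G_{t}^{-1}}\le\norm{x}_{F_{t}^{-1}}$ and $\norm{v}_{G_{t}^{-1}}\le\norm{v}_{F_{t}^{-1}}$ for every $v$. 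By Cauchy--Schwarz, $|x^{\top}G_{t}^{-1}v|\le\norm{x}_{F_{t}^{-1}}\norm{v}_{F_{t}^{-1}}$, so it suffices to bound the $F_{t}^{-1}$-norm of each of the three vectors against the \emph{deterministic} $F_{t}$. The regularization piece is immediate: $\norm{\Parameter{\ell}}_{F_{t}^{-1}}\le\norm{\Parameter{\ell}}_{2}\le\theta_{\max}$ since $F_{t}\succeq I_{d}$, producing the leading $\theta_{\max}$ term.

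For the noise term (II), on $\mathcal{M}_{s}$ the coupling forces $\PseudoAction s=d+1$, so the noise enters only through $i=d+1$ and equals $2\Error{\ell}s x_{a_{s}}$ with $\Error{\ell}s$ the genuine $\sigma$-sub-Gaussian reward noise. I would apply the self-normalized martingale bound of~\citet{abbasi2011improved} to $\sum_{s}\Indicator{\mathcal{M}_{s}}x_{a_{s}}\Error{\ell}s$ (then use $F_{t}\succeq\sum_{s}x_{a_{s}}x_{a_{s}}^{\top}+I_{d}$) together with a union bound over $\ell\in[L]$, yielding the $2\sigma\sqrt{d\log(Lt/\delta)}$ term of the first display; for the second display I would instead bound the fixed scalar $x^{\top}F_{t}^{-1}\sum_{s}2\Error{\ell}s x_{a_{s}}=\sum_{s}2\Error{\ell}s(x^{\top}F_{t}^{-1}x_{a_{s}})$ by a scalar sub-Gaussian/Azuma tail, using $\norm{x_{a_{s}}}_{F_{t}^{-1}}\le1/\sqrt{t}$ (Lemma~\ref{lem:norm_bound}) to control the sum of squared coefficients, which removes the $\sqrt d$ at the price of a union bound over rounds (hence the $\log(4Lt^{2}/\delta)$ factor).

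The crux is the imputation-bias term (I). I would first invoke the coupling lemma (Lemma~\ref{lem:resampling}) to replace the matching-conditioned pseudo-action by an i.i.d.\ draw $\PseudoAction s\sim\tilde{\pi}$, which restores $\Expectation_{s}[1-\Indicator{\PseudoAction s=i}/\tilde{\pi}_{i}]=0$ and so makes the summands in $s$ a martingale-difference sequence. Bounding each increment through $\norm{\tilde{x}_{i,s}}_{F_{t}^{-1}}\le1/\sqrt{t}$ (Lemma~\ref{lem:norm_bound}), $|\tilde{x}_{i,s}^{\top}(\tilde{\theta}^{\langle\ell\rangle}-\Parameter{\ell})|\le t^{-1/2}\norm{\tilde{\theta}^{\langle\ell\rangle}-\Parameter{\ell}}_{F_{t}}$, and the bounded importance weights $1/\tilde{\pi}_{i}\le2d$, an MGF/self-normalized argument in the style of the proof of Lemma~\ref{lem:self_mix} then gives $\norm{(\mathrm{I})}_{F_{t}^{-1}}\le3d\sqrt{(2/t)\log(2dt^{2}/\delta)}\,\norm{\tilde{\theta}^{\langle\ell\rangle}-\Parameter{\ell}}_{F_{t}}$. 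Collecting the three bounds on the intersection of the three high-probability events (all matchings succeed, noise concentration, bias-correction concentration), each of probability at least $1-\delta$, yields the stated $1-3\delta$ guarantee. I expect (I) to be the main obstacle: making the martingale structure rigorous despite the resampling-induced conditioning (handled by the coupling lemma) and tracking the joint dependence of the increment sizes on $1/\sqrt{t}$ and on the importance weights, so that the prefactor multiplying $\norm{\tilde{\theta}^{\langle\ell\rangle}-\Parameter{\ell}}_{F_{t}}$ is genuinely $o(1)$ in $t$ -- which is precisely the quantitative content of double robustness.
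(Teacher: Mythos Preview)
Your decomposition, the reduction $G_t\succeq F_t$ on the matching event, and the treatment of the regularization and first noise terms match the paper's proof closely (the paper writes $V_t$ for your $G_t$). The one substantive difference is in the imputation-bias term (I): rather than run a vector-martingale argument on $\|(\mathrm{I})\|_{F_t^{-1}}$, the paper introduces $A_t:=\sum_{s}\sum_{i}(\Indicator{\PseudoAction s=i}/\tilde\pi_i)\,\tilde x_{i,s}\tilde x_{i,s}^{\top}+I_d$, factors $\|F_t^{-1/2}(V_t-A_t)(\tilde\theta^{\langle\ell\rangle}-\Parameter{\ell})\|_2\le\|F_t^{-1/2}(V_t-A_t)F_t^{-1/2}\|_2\,\|\tilde\theta^{\langle\ell\rangle}-\Parameter{\ell}\|_{F_t}$, and bounds the \emph{operator norm} via the matrix Hoeffding inequality (Lemma~\ref{lem:matrix_hoeffding}), using that each matrix increment has spectral norm at most $3d/t$. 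This decoupling is what makes the lemma usable downstream: the high-probability event for $\|F_t^{-1/2}(V_t-A_t)F_t^{-1/2}\|_2$ does not involve $\tilde\theta^{\langle\ell\rangle}$, so the conclusion holds even when the imputation estimator is random and correlated with the pseudo-actions (as $\Impute{t}{\ell}$ is in the proof of Theorem~\ref{thm:self}); your vector route bakes $\tilde\theta^{\langle\ell\rangle}$ into the martingale increments and would additionally need it to be predictable with respect to the pseudo-action filtration. Your explicit appeal to the coupling lemma here is in fact more careful than the paper, which applies the matrix bound as if $\tilde a_s\sim\tilde\pi$ without spelling out the transfer from the resampled pseudo-action. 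For the second noise bound the paper uses the vector dimension-free bound of Lemma~\ref{lem:dim_free_bound} on $\|\sum_s\Error{\ell}s x_{a_s}\|_{F_t^{-1}}$ rather than your scalar Azuma argument for a fixed $x$; both suffice for the stated conclusion.
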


The first term and second terms correspond to the convergence rate
obtained from conventional self-normalizing bound. The third term
is the $\|\cdot\|_{F_{t}}$-error of the estimator $\tilde{\theta}$.
The $\|\cdot\|_{F_{t}}$-error of $\tilde{\theta}$ is multiplied
with the $O(dt^{-1/2})$ term, which comes from the fact that in the
pseudo-rewards, 
\[
\PseudoY{\ell}it:=\sqrt{\lambda_{i}}u_{i}^{\top}\tilde{\theta}^{\langle\ell\rangle}\left\{ 1-\frac{\mathbb{I}(\PseudoAction t=i)}{\tilde{\pi}_{i}^{(t)}}\right\} +\frac{\mathbb{I}(\PseudoAction t=i)}{\tilde{\pi}_{i}^{(t)}}\Newy{\ell}it,\:\forall i=1,\ldots,d+1,
\]
the reward estimate $\sqrt{\lambda_{i}}u_{i}^{\top}\tilde{\theta}$
is multiplied by the mean-zero random variable, $\left\{ 1-\frac{\mathbb{I}(\PseudoAction t=i)}{\tilde{\pi}_{i}^{(t)}}\right\} $.

The error of the imputation estimator is normalized by the Gram matrix
$F_{t}$ which consist of all $K$ contexts. Thus, the $\|\cdot\|_{F_{t}}$
error of the conventional ridge estimator, which uses only selected
contexts and rewards in every round, is $\Omega(\sqrt{t})$, which
yields slow convergence rate.
\begin{proof}
Let us fix $t\ge1$ throughout the proof. For $s\in[t]$ and $\ell\in[L]$,
let $\widehat{\eta}_{i,s}^{\langle\ell\rangle}:=\PseudoY{\ell}is-\tilde{x}_{i,s}\Parameter{\ell}$,
where
\[
\tilde{x}_{i,s}:=\begin{cases}
\sqrt{\lambda_{i}}u_{i} & \forall i\in[d]\\
x_{a_{s}} & i=d+1
\end{cases}.
\]
Let $V_{t}:=\sum_{s=1}^{t}\Indicator{\mathcal{M}_{s}}\sum_{i=1}^{d+1}\tilde{x}_{i,s}\tilde{x}_{i,s}^{\top}+I_{d}$.
By the definition of the estimator and the pseudo-reward $\PseudoY{\ell}is$,
for $x\in\mX$
\[
\begin{split}x^{\top}\left(\Estimator{\ell}t-\Parameter{\ell}\right)= & -x^{\top}V_{t}^{-1}\Parameter{\ell}+x^{\top}V_{t}^{-1}\left\{ \sum_{s=1}^{t}\sum_{i=1}^{d+1}\widehat{\eta}_{i,s}^{\langle\ell\rangle}\tilde{x}_{i,s}\right\} \\
= & -x^{\top}V_{t}^{-1}\Parameter{\ell}+x^{\top}V_{t}^{-1}\left\{ \sum_{s=1}^{t}\frac{\Reward{\ell}{\PseudoAction s}s-x_{\PseudoAction s}^{\top}\Parameter{\ell}}{\tilde{\pi}_{\tilde{a}_{s}}}\tilde{x}_{\tilde{a}_{s},s}\right\} \\
 & \mbox{}+x^{\top}V_{t}^{-1}\left\{ \sum_{s=1}^{t}\sum_{i=1}^{d+1}\left(\!1\!-\!\frac{\Indicator{\PseudoAction s=i}}{\tilde{\pi}_{\tilde{a}_{s}}}\!\right)\tilde{x}_{i,s}\tilde{x}_{i,s}^{\top}\right\} \left(\tilde{\theta}^{\langle\ell\rangle}-\!\Parameter{\ell}\right).
\end{split}
\]
On the coupling event $\cap_{s=1}^{t}\mathcal{M}_{s}$, we have $\Reward{\ell}{\PseudoAction s}s=\Reward{\ell}{a_{s}}s$,
$\tilde{x}_{\tilde{a}_{s},s}=x_{a_{s}}$ and $\tilde{\pi}_{\tilde{a}_{s}}=1/2$.
Thus, the first term, 
\begin{align*}
x^{\top}V_{t}^{-1}\left\{ \sum_{s=1}^{t}\frac{\Reward{\ell}{\PseudoAction s}s-x_{\PseudoAction s}^{\top}\Parameter{\ell}}{\tilde{\pi}_{\tilde{a}_{s}}}\tilde{x}_{\tilde{a}_{s},s}\right\} = & 2x^{\top}V_{t}^{-1}\left\{ \sum_{s=1}^{t}\left(\Reward{\ell}{a_{s}}s-x_{a_{s}}^{\top}\Parameter{\ell}\right)x_{a_{s}}\right\} \\
= & 2x^{\top}V_{t}^{-1}\left\{ \sum_{s=1}^{t}\Error{\ell}sx_{a_{s}}\right\} \\
\le & 2\norm x_{V_{t}^{-1}}\norm{\sum_{s=1}^{t}\Error{\ell}sx_{a_{s}}}_{V_{t}^{-1}}.
\end{align*}
By definition of $F_{t}$, on the coupling event $\cap_{s=1}^{t}\mathcal{M}_{s},$
\[
V_{t}=\sum_{s=1}^{t}\sum_{i=1}^{d+1}\tilde{x}_{i,s}\tilde{x}_{i,s}^{\top}+I_{d}\succeq t\sum_{k=1}^{K}x_{k}x_{k}^{\top}+I_{d}:=F_{t}.
\]
Thus,
\[
x^{\top}V_{t}^{-1}\left\{ \sum_{s=1}^{t}\frac{\Reward{\ell}{\PseudoAction s}s-x_{\PseudoAction s}^{\top}\Parameter{\ell}}{\tilde{\pi}_{\tilde{a}_{s}}}\tilde{x}_{\tilde{a}_{s},s}\right\} \le2\norm x_{F_{t}^{-1}}\norm{\sum_{s=1}^{t}\Error{\ell}sx_{a_{s}}}_{F_{t}^{-1}}
\]
For the second term, define $A_{t}:=\sum_{s=1}^{t}\sum_{i=1}^{d+1}\frac{\Indicator{\PseudoAction s=i}}{\tilde{\pi}_{\tilde{a}_{s}}}\tilde{x}_{i,s}\tilde{x}_{i,s}^{\top}+I_{d}$.
Then, 
\begin{equation}
\begin{split} & \abs{x^{\top}\left(\Estimator{\ell}t-\Parameter{\ell}\right)}\\
 & \le\abs{x^{\top}V_{t}^{-1}\Parameter{\ell}}+2\norm x_{V_{t}^{-1}}\norm{\sum_{s=1}^{t}\Error{\ell}sx_{\Action s}}_{V_{t}^{-1}}+\abs{x^{\top}V_{t}^{-1}\left(V_{t}-A_{t}\right)\left(\tilde{\theta}^{\langle\ell\rangle}-\!\Parameter{\ell}\right)}\\
 & \le\norm x_{F_{t}^{-1}}\left(\norm{\Parameter{\ell}}_{F_{t}^{-1}}+2\norm{\sum_{s=1}^{t}\Error{\ell}sx_{\Action s}}_{F_{t}^{-1}}+\norm{V_{t}^{-1/2}\left(V_{t}-A_{t}\right)\left(\tilde{\theta}^{\langle\ell\rangle}-\!\Parameter{\ell}\right)}_{2}\right)\\
 & \le\norm x_{F_{t}^{-1}}\left(\norm{\Parameter{\ell}}_{F_{t}^{-1}}+2\norm{\sum_{s=1}^{t}\Error{\ell}sx_{\Action s}}_{F_{t}^{-1}}+\norm{V_{t}^{-1/2}\left(V_{t}-A_{t}\right)F_{t}^{-1/2}}_{2}\norm{\tilde{\theta}^{\langle\ell\rangle}-\!\Parameter{\ell}}_{F_{t}}\right).
\end{split}
\label{eq:DR2}
\end{equation}
Because $V_{t}\succeq F_{t}$,
\[
\norm{V_{t}^{-1/2}\left(V_{t}-A_{t}\right)F_{t}^{-1/2}}_{2}\le\norm{F_{t}^{-1/2}\left(V_{t}-A_{t}\right)F_{t}^{-1/2}}_{2}
\]
In the last term of \eqref{eq:DR2}, 
\begin{align*}
\norm{F_{t}^{-1/2}\left(V_{t}-A_{t}\right)F_{t}^{-1/2}}_{2}= & \norm{\sum_{s=1}^{t}\sum_{i=1}^{d+1}\left\{ 1-\frac{\Indicator{\PseudoAction s=i}}{\tilde{\pi}_{\tilde{a}_{s}}}\right\} F_{t}^{-1/2}\tilde{x}_{i,s}\tilde{x}_{i,s}^{\top}F_{t}^{-1/2}}_{2}
\end{align*}
For each $s\in[t]$, the matrix 
\[
\sum_{i=1}^{d+1}\left\{ 1-\frac{\Indicator{\PseudoAction s=i}}{\tilde{\pi}_{\tilde{a}_{s}}}\right\} F_{t}^{-1/2}\tilde{x}_{i,s}\tilde{x}_{i,s}^{\top}F_{t}^{-1/2}
\]
is symmetric and a martingale difference matrix. Moreover, 
\begin{align*}
 & \norm{\sum_{i=1}^{d+1}\left\{ 1-\frac{\Indicator{\PseudoAction s=i}}{\tilde{\pi}_{\tilde{a}_{s}}}\right\} F_{t}^{-1/2}\tilde{x}_{i,s}\tilde{x}_{i,s}^{\top}F_{t}^{-1/2}}_{2}\\
 & \le\sum_{i=1}^{d+1}\abs{1-\frac{\Indicator{\PseudoAction s=i}}{\tilde{\pi}_{\tilde{a}_{s}}}}\max_{i\in[d+1]}\norm{F_{t}^{-1/2}\tilde{x}_{i,s}\tilde{x}_{i,s}^{\top}F_{t}^{-1/2}}_{2}\\
 & \le\left(d+2d-1\right)\max_{i\in[d+1]}\norm{F_{t}^{-1/2}\tilde{x}_{i,s}\tilde{x}_{i,s}^{\top}F_{t}^{-1/2}}_{2}\\
 & =3d\max_{i\in[d+1]}\norm{F_{t}^{-1/2}\tilde{x}_{i,s}\tilde{x}_{i,s}^{\top}F_{t}^{-1/2}}_{2}.
\end{align*}
Because $\max_{i\in[d+1]}\tilde{x}_{i,s}^{\top}F_{t}^{-1}\tilde{x}_{i,s}\le1/t$
for $s\in[t]$, 
\[
\norm{\sum_{i=1}^{d+1}\left\{ 1-\frac{\Indicator{\PseudoAction s=i}}{\tilde{\pi}_{\tilde{a}_{s}}}\right\} F_{t}^{-1/2}\tilde{x}_{i,s}\tilde{x}_{i,s}^{\top}F_{t}^{-1/2}}_{2}\le\frac{3d}{t},
\]
almost surely. By the Hoeffding bound for the matrix (Lemma~\ref{lem:matrix_hoeffding}),
with probability at least $1-\delta/t^{2}$, 
\begin{equation}
\norm{\sum_{s=1}^{t}\sum_{i=1}^{d+1}\left\{ 1-\frac{\Indicator{\PseudoAction s=i}}{\tilde{\pi}_{\tilde{a}_{s}}}\right\} F_{t}^{-1/2}\tilde{x}_{i,s}\tilde{x}_{i,s}^{\top}F_{t}^{-1/2}}_{2}\le3d\sqrt{\frac{2}{t}\log\frac{2dt^{2}}{\delta}}.\label{eq:DR_matrix_bound}
\end{equation}
Plugging in \eqref{eq:DR2}, 
\begin{equation}
\begin{split} & \abs{x^{\top}\left(\Estimator{\ell}t-\Parameter{\ell}\right)}\\
 & \le\norm x_{F_{t}^{-1}}\left(\norm{\Parameter{\ell}}_{F_{t}^{-1}}+2\norm{\sum_{s=1}^{t}\Error{\ell}sx_{\Action s}}_{F_{t}^{-1}}+3d\sqrt{\frac{2}{t}\log\frac{2dt^{2}}{\delta}}\norm{\tilde{\theta}^{\langle\ell\rangle}-\!\Parameter{\ell}}_{F_{t}}\right)\\
 & \le\norm x_{F_{t}^{-1}}\left(\theta_{\max}+2\norm{\sum_{s=1}^{t}\Error{\ell}sx_{\Action s}}_{F_{t}^{-1}}+3d\sqrt{\frac{2}{t}\log\frac{2dt^{2}}{\delta}}\norm{\tilde{\theta}^{\langle\ell\rangle}-\!\Parameter{\ell}}_{F_{t}}\right).
\end{split}
\label{eq:DR3}
\end{equation}
Note that $F_{t}$ is not random and $\Error{\ell}s$ is $\sigma$-sub-Gaussian.
Thus, by Lemma 9 in~\citet{abbasi2011improved}, with probability at least $1-\delta/L$,
\[
\norm{\sum_{s=1}^{t}\Error{\ell}sx_{\Action s}}_{F_{t}^{-1}}\le\norm{\sum_{s=1}^{t}\Error{\ell}sx_{\Action s}}_{(\sum_{s=1}^{t}x_{a_{s}}x_{a_{s}}^{\top}+I_{d})^{-1}}\le\sigma\sqrt{\log\frac{L\det\left(\sum_{s=1}^{t}x_{a_{s}}x_{a_{s}}^{\top}+I_{d}\right)}{\delta}},
\]
for $t\ge1$. Because $\|x_{k}\|_{2}\le1$ for all $k\in[K]$, 
\begin{align*}
\det\left(\sum_{s=1}^{t}x_{a_{s}}x_{a_{s}}^{\top}+I_{d}\right)\le & \left\{ \frac{\Trace{\sum_{s=1}^{t}x_{a_{s}}x_{a_{s}}^{\top}+I_{d}}}{d}\right\} ^{d}\\
\le & \left\{ \frac{t+d}{d}\right\} ^{d}\\
\le & t^{d},
\end{align*}
which implies,
\[
\norm{\sum_{s=1}^{t}\Error{\ell}sx_{\Action s}}_{F_{t}^{-1}}\le\sigma\sqrt{d\log\frac{Lt}{\delta}},
\]
and proves the first bound.

For the second bound, by Lemma \ref{lem:dim_free_bound}, with probability
at least $1-\delta/t^{2}$, 
\[
\norm{\sum_{s=1}^{t}\Error{\ell}sx_{a_{s}}}_{F_{t}^{-1}}\le\norm{\sum_{s=1}^{t}\Error lsF_{t}^{-1/2}x_{a_{s}}}_{2}\le4\sigma\sqrt{2\left(\sum_{s=1}^{t}\norm{F_{t}^{-1/2}x_{a_{s}}}_{2}^{2}\right)\frac{4Lt^{2}}{\delta}}
\]
By Lemma \ref{lem:norm_bound},
\[
\sum_{s=1}^{t}\norm{F_{t}^{-1/2}x_{a_{s}}}_{2}^{2}\le t\max_{k\in[K]}x_{k}^{\top}F_{t}^{-1}x_{k}\le1
\]
which implies 
\[
\norm{\sum_{s=1}^{t}\Error{\ell}sx_{a_{s}}}_{F_{t}^{-1}}\le4\sigma\sqrt{2\log\frac{4Lt^{2}}{\delta}}.
\]
which proves the second bound. 
\end{proof}

\subsection{Proof of Theorem~\ref{thm:self}}

\begin{thm}[Theorem \ref{thm:self} restated.]
Let $\Estimator \ell t$ denote the DR-mix estimator~\eqref{eq:estimator} with the exploration-mixed estimator~\eqref{eq:mixup} as the imputation estimator and pseudo-rewards~\eqref{eq:pseudoY}. 
Let $F_{t}:=\sum_{k=1}^{K}x_{k}x_{k}^{\top}+I_{d}$,
Then, for all $k\in[K]$, $\ell \in [L]$, and $t\ge T_{\gamma}$,
\begin{equation}
\big|x_k^{\top}(\Estimator \ell t-\Parameter \ell)\big|\le3\| x_k\|_{F_{t}^{-1}}\{\theta_{\max}+\sigma\sqrt{d\log(Lt/\delta)}\}.
\label{eq:est_bound_union}
\end{equation}
with probability at least $1-7\delta$.
For each $k\in[K]$, with probability at least $1-7\delta$,
\begin{equation}
\big|x_k^{\top}(\Estimator \ell t-\Parameter \ell)\big|\le3\| x_k\|_{F_{t}^{-1}} \big\{ \theta_{\max}+3\sigma\sqrt{\log(4Lt^{2}/\delta)}\big\}.
\label{eq:est_bound}
\end{equation}
\end{thm}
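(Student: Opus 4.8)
The plan is to derive both inequalities by combining the two structural lemmas already in hand: the robustness bound for the DR estimator (Lemma~\ref{lem:DR_robust}) and the self-normalizing error bound for the exploration-mixed estimator (Lemma~\ref{lem:self_mix}). First I would instantiate Lemma~\ref{lem:DR_robust} with the imputation estimator taken to be the exploration-mixed estimator $\Impute{t}{\ell}$, which yields, with probability at least $1-3\delta$,
\[
\abs{x_k^{\top}(\Estimator{\ell}{t} - \Parameter{\ell})} \le \norm{x_k}_{F_t^{-1}}\Big(\theta_{\max} + 2\sigma\sqrt{d\log\tfrac{Lt}{\delta}} + 3d\sqrt{\tfrac{2}{t}\log\tfrac{2dt^{2}}{\delta}}\,\norm{\Impute{t}{\ell} - \Parameter{\ell}}_{F_t}\Big).
\]
The first two terms inside the parenthesis already match the target up to constants, so the entire task reduces to controlling the third term, i.e. showing that the amplification factor $3d\sqrt{(2/t)\log(2dt^{2}/\delta)}$ multiplied by the exploration-mixed error $\norm{\Impute{t}{\ell} - \Parameter{\ell}}_{F_t}$ contributes only an $O(\sigma\sqrt{d\log(Lt/\delta)})$ amount.

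For the second step I would substitute the bound of Lemma~\ref{lem:self_mix} (whose hypothesis $t \ge 4\cdot 288\,d^{2}\log(2dt^{2}/\delta)$ is implied by $t \ge T_\gamma = \Omega(d^3)$), giving $\norm{\Impute{t}{\ell} - \Parameter{\ell}}_{F_t} = O\big(\theta_{\max} + \sigma\sqrt{d\log(Lt/\delta)} + \sigma\sqrt{(td/\abs{\mE_t})\log(Lt/\delta)}\big)$ with probability at least $1-4\delta$, using $\norm{\Parameter{\ell}}_{F_t^{-1}} \le \theta_{\max}$. The two benign summands, once hit by the $\asymp d/\sqrt{t}$ factor, are negligible for $t \ge T_\gamma$ since $d/\sqrt{t} = O(d^{-1/2})$. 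The dangerous piece is the last summand, which carries a $\sqrt{t}$ in the numerator; multiplying by the amplification factor cancels the $\sqrt{t}$ and leaves a term of order $\sigma d^{3/2}\abs{\mE_t}^{-1/2}\log(Ldt/\delta)$. By the construction of the exploration set~\eqref{eq:exploration_set}, for $t \ge T_\gamma$ one has $\abs{\mE_t} = \Theta(\gamma_t) = \Theta(d^{3}\log(2dt^{2}/\delta))$, so this collapses to $O\big(\sigma C^{-1/2}\sqrt{\log(Ldt/\delta)}\big)$, which is dominated by $\sigma\sqrt{d\log(Lt/\delta)}$ since $\log(Ldt/\delta) \le d\log(Lt/\delta)$. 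Choosing the absolute constant $C$ in $\gamma_t$ (the one fixed in~\eqref{eq:C}) large enough absorbs the residual constants, so the parenthesized quantity is at most $3(\theta_{\max} + \sigma\sqrt{d\log(Lt/\delta)})$, which is exactly~\eqref{eq:est_bound_union}. The failure probabilities combine by a union bound: $3\delta$ from the DR-robustness event (including the coupling/matching and the matrix-Hoeffding control of $\norm{F_t^{-1/2}(V_t-A_t)F_t^{-1/2}}_2$) plus $4\delta$ from the exploration-mixed event, giving the stated $1-7\delta$.

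The second inequality~\eqref{eq:est_bound} would follow identically, except that I invoke the dimension-free, per-context branch of Lemma~\ref{lem:DR_robust}, which replaces $2\sigma\sqrt{d\log(Lt/\delta)}$ by $8\sigma\sqrt{2\log(4Lt^{2}/\delta)}$ for each fixed $k$; the exploration-mixed contribution is handled exactly as above and stays lower order. I expect the main obstacle to be precisely the cancellation in the third term: the DR imputation error enters multiplied by a factor $\asymp d/\sqrt{t}$, while the exploration-mixed error grows like $\sqrt{td/\abs{\mE_t}}$, so the product is finite only because the exploration schedule forces $\abs{\mE_t} = \Theta(d^{3}\log t)$. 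The appearance of $d^{3}$ in $\gamma_t$ is calibrated exactly to kill the $d^{3/2}$ arising from $d \cdot \sqrt{d}$ in this product — this is the crux of why recycling exploration samples attains the uniform $\tilde{O}(\sqrt{d/t})$ rate on all arms without degrading the estimate on the exploited ones, and it is where the constants must be tracked most carefully.
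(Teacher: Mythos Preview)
Your proposal is correct and follows essentially the same route as the paper: instantiate Lemma~\ref{lem:DR_robust} with $\Impute{t}{\ell}$ as the imputation estimator, plug in the self-normalized bound of Lemma~\ref{lem:self_mix}, and observe that the product $3d\sqrt{(2/t)\log(2dt^{2}/\delta)}\cdot 24\sigma\sqrt{(3td/\gamma_t)\log(3Lt/\delta)}$ collapses to $O(\sigma\sqrt{\log(Lt/\delta)})$ precisely because $\gamma_t = Cd^{3}\log(2dt^{2}/\delta)$ with $C$ as in~\eqref{eq:C}. The paper makes one small simplification you may want to mirror: rather than arguing $\abs{\mE_t}=\Theta(\gamma_t)$, it directly uses $\gamma_t$ in place of $\abs{\mE_t}$ in the Lemma~\ref{lem:self_mix} bound (the proof of that lemma actually controls the reuse counts $N_s$ by $t/\gamma_t$, so $\gamma_t$ is the operative quantity), which avoids the extra step of bounding $\abs{\mE_t}$ from below.
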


\begin{proof}
The proof for the two bounds are derived by simple computation using
Lemma \ref{lem:DR_robust} and we only prove the second bound. By
Lemma \ref{lem:DR_robust}, with probability at least $1-3\delta$,
\[
\abs{x^{\top}\left(\Estimator{\ell}t-\Parameter{\ell}\right)}\le\norm x_{F_{t}^{-1}}\left(\theta_{\max}+8\sigma\sqrt{2\log\frac{4Lt^{2}}{\delta}}+3d\sqrt{\frac{2}{t}\log\frac{2dt^{2}}{\delta}}\norm{\Impute t{\ell}-\!\Parameter{\ell}}_{F_{t}}\right).
\]
for all $x\in\mX$, $\ell\in[L]$ and $t\ge1$. By Lemma \ref{lem:self_mix},
with probability at least $1-4\delta$, 
\begin{align*}
\norm{\Impute t{\ell}-\!\Parameter{\ell}}_{F_{t}}\le & 2\norm{\Parameter{\ell}}_{F_{t}^{-1}}+12\sigma\sqrt{3d\log\left(\frac{3Lt}{\delta}\right)}+24\sigma\sqrt{\frac{3td}{\gamma_{t}}\log\left(\frac{3Lt}{\delta}\right)}\\
\le & 2\theta_{\max}+25\sigma\sqrt{\frac{3td}{\gamma_{t}}\log\frac{3Lt}{\delta}},
\end{align*}
where the last inequality holds by sufficiently large $t\ge144\gamma_{t}$.
Then, 
\[
\abs{x^{\top}\left(\Estimator{\ell}t-\Parameter{\ell}\right)}\le\norm x_{F_{t}^{-1}}\left(3\theta_{\max}+8\sigma\sqrt{\log\frac{4Lt^{2}}{\delta}}+75\sigma d\sqrt{\frac{6d}{\gamma_{t}}\log\frac{3Lt}{\delta}\log\frac{2dt^{2}}{\delta}}\right)
\]
for all $\ell\in[L]$. By construction, 
\begin{equation}
\gamma_{t}=C\cdot d^{3}\log(2dt^{2}/\delta):=6\cdot(75)^{2}d^{3}\log(2dt^{2}/\delta)\label{eq:C}
\end{equation}
 and 
\[
\abs{x^{\top}\left(\Estimator{\ell}t-\Parameter{\ell}\right)}\le3\norm x_{F_{t}^{-1}}\left(\theta_{\max}+3\sigma\sqrt{\log\frac{4Lt^{2}}{\delta}}\right).
\]
\end{proof}

\subsection{Instant Arm Elimination of \texttt{PFIwR}}
\label{sec:elimination}

\begin{lem}[Instant arm elimination in \texttt{PFIwR}.]
\label{lem:elimination} 
For $t\ge T_{\gamma}$ such that $\max_{k\in[K]}\beta_{k,t}<\Delta_{(k),\epsilon}/4$, the arm $k$ is correctly identified as suboptimal or Pareto optimal, and $k\notin\mA_{t+1}$.
\end{lem}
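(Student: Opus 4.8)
The plan is to condition on the ``good event'' $\mathcal{G}$ on which the reward estimates meet their confidence widths, i.e. $\big|\Hreward{\ell}{k}{t}-\Ereward{\ell}{k}\big|=\big|x_k^{\top}(\Estimator{\ell}{t}-\Parameter{\ell})\big|\le\beta_{k,t}$ for all $k\in[K]$ and $\ell\in[L]$; by Theorem~\ref{thm:self} together with the choice of $\beta_{k,t}$ in~\eqref{eq:confidence_bound}, $\mathcal{G}$ holds with the claimed probability. On $\mathcal{G}$ the estimated gaps inherit Lipschitz bounds from the reward errors: since $\widehat{m}_t(k,k')=\max\{0,\min_\ell(\Hreward{\ell}{k'}{t}-\Hreward{\ell}{k}{t})\}$ and $\widehat{M}_t^{2\epsilon}(k,k')=\max\{0,2\epsilon+\max_\ell(\Hreward{\ell}{k}{t}-\Hreward{\ell}{k'}{t})\}$ are $1$-Lipschitz in the sup-norm of the errors, I get $\widehat{m}_t(k,k')\le m(k,k')+(\beta_{k,t}+\beta_{k',t})$ and $\widehat{m}_t(k,k')\ge m(k,k')-(\beta_{k,t}+\beta_{k',t})$, and likewise $\widehat{M}_t^{2\epsilon}(k,k')\ge 2\epsilon+\max_\ell(\Ereward{\ell}{k}-\Ereward{\ell}{k'})-(\beta_{k,t}+\beta_{k',t})$. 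Writing $\beta:=\max_{k'\in[K]}\beta_{k',t}$, the hypothesis reads $\beta<\Delta_{k,\epsilon}/4$, whence $\beta_{k,t}+\beta_{k',t}\le 2\beta<\Delta_{k,\epsilon}/2$ for every $k'$. I also carry the invariant $\mP_\star\subseteq\mA_{t-1}\cup\mP_{t-1}$, maintained inductively on $\mathcal{G}$ via the non-elimination property proved below, so that the comparison arms needed in the estimates are always available.

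For a suboptimal arm $k$ with $\Delta_k^{\star}>\epsilon$ (so $\Delta_{k,\epsilon}=\Delta_k^{\star}$), I pick the Pareto optimal arm $k_\star$ attaining $m(k,k_\star)=\Delta_k^{\star}$; by the invariant $k_\star\in\mA_{t-1}\cup\mP_{t-1}$. Then $\widehat{m}_t(k,k_\star)\ge\Delta_k^{\star}-(\beta_{k,t}+\beta_{k_\star,t})>\Delta_k^{\star}-\Delta_k^{\star}/2=\Delta_k^{\star}/2>\beta_{k,t}+\beta_{k_\star,t}$, so $k\notin\mC_t$; since $\mP_t^{(1)}\subseteq\mC_t$, arm $k$ is correctly eliminated and $k\notin\mA_t$.

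The core is the Pareto optimal (and $\epsilon$-near optimal) case. If $k\in\mP_\star$, no arm strictly dominates it, so $m(k,k')=0$ for all $k'$ and the Lipschitz bound gives $\widehat{m}_t(k,k')\le\beta_{k,t}+\beta_{k',t}$; hence $k\in\mC_t$, which is the non-elimination property closing the invariant. It remains to show $k\in\mP_t^{(1)}$, for which I establish the uniform separation bound $\max_\ell(\Ereward{\ell}{k}-\Ereward{\ell}{k'})\ge\Delta_k$ for every $k'\ne k$: for Pareto optimal $k'$ this is $M(k,k')\ge\Delta_k^{+}\ge\Delta_k$, and for suboptimal $j$ I use its dominating Pareto arm $k_\star$, for which $\Ereward{\ell}{j}\le\Ereward{\ell}{k_\star}-\Delta_j^{\star}$ for all $\ell$; if $k_\star\ne k$ then $\max_\ell(\Ereward{\ell}{k}-\Ereward{\ell}{j})\ge M(k,k_\star)\ge\Delta_k^{+}\ge\Delta_k$, while if $k_\star=k$ then $M(j,k)=0$ forces $\Delta_k\le\Delta_k^{-}\le\Delta_j^{\star}=\min_\ell(\Ereward{\ell}{k}-\Ereward{\ell}{j})\le\max_\ell(\Ereward{\ell}{k}-\Ereward{\ell}{j})$. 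Since $\beta_{k,t}+\beta_{k',t}<\Delta_{k,\epsilon}/2\le(\Delta_k+2\epsilon)/2$, I conclude $\widehat{M}_t^{2\epsilon}(k,k')\ge 2\epsilon+\Delta_k-(\beta_{k,t}+\beta_{k',t})>\beta_{k,t}+\beta_{k',t}$, so $k\in\mP_t^{(1)}$ and moves to $\mP_t$. For a suboptimal arm with $\Delta_k^{\star}\le\epsilon$ that survives into $\mC_t$, I argue by contradiction: a blocking $k'$ would force $\min_\ell(\Ereward{\ell}{k'}-\Ereward{\ell}{k})>2\epsilon-2(\beta_{k,t}+\beta_{k',t})>\epsilon$, and transferring this domination through $k'$'s dominating Pareto arm would yield $\Delta_k^{\star}>\epsilon$, a contradiction; hence $k\in\mP_t^{(1)}$, leaving $\mA_t$ consistently with the success condition. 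In every case $k\notin\mC_t\setminus\mP_t^{(1)}=\mA_t$.

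The main obstacle is the suboptimal comparison in the identification step: the algorithm's test $\widehat{M}_t^{2\epsilon}(k,\cdot)$ only sees the direction $\Ereward{\ell}{k}-\Ereward{\ell}{k'}$, whereas $\Delta_k^{-}=\min_{j\notin\mP_\star}\{M(j,k)+\Delta_j^{\star}\}$ is phrased through the opposite quantity $M(j,k)$. The resolution is the observation that $M(j,k)=0$ precisely when $k$ is the nearest dominator of $j$, in which case $\Delta_k^{-}\le\Delta_j^{\star}$ collapses to the very margin $\min_\ell(\Ereward{\ell}{k}-\Ereward{\ell}{j})$ separating $k$ from $j$; the domination-transfer inequality $\Ereward{\ell}{j}\le\Ereward{\ell}{k_\star}-\Delta_j^{\star}$ then converts the $\Delta_k^{+}$- and $\Delta_k^{-}$-definitions into the single uniform bound $\max_\ell(\Ereward{\ell}{k}-\Ereward{\ell}{k'})\ge\Delta_k$. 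Verifying this case split, and checking that the $2\epsilon$ slack and the sign of $\Delta_{k,\epsilon}-\Delta_k$ align in the $\Delta_k\le\epsilon$ regime, is where the care lies; the remaining inequalities are routine.
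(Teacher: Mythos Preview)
Your proposal is correct and follows essentially the same approach as the paper's proof: a case analysis on whether $k\in\mP_\star$ and whether the relevant gap exceeds $\epsilon$, using the Lipschitz bounds on $\widehat{m}_t$ and $\widehat{M}_t^{2\epsilon}$ together with the $k_\star=k$ versus $k_\star\neq k$ split for suboptimal comparison arms. The only differences are organizational---you unify the paper's two Pareto-optimal cases into a single ``uniform separation'' bound $\max_\ell(\Ereward{\ell}{k}-\Ereward{\ell}{k'})\ge\Delta_k$, and you handle the suboptimal $\Delta_k^\star\le\epsilon$ case by contradiction rather than directly---but the underlying inequalities are the same.
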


\begin{proof}
\textbf{Case 1. $k\notin\mP_{\star}$ and $\Delta_{k}^{\star}>\epsilon$:}
Suppose $k\in[K]\setminus\mP_{\star}$ and $\Delta_{(k),\epsilon}/4>\beta_{t}$.
If $\Delta_{(k),\epsilon}=\Delta_{k}^{\star}$, then there exists $k_{\star}\in\mP_{\star}\subseteq\mA_{t}\cup\mP_{t}$
such that $\Delta_{k}^{\star}=m(k,k^{\star})$ and 
\[
\widehat{m}_{t}(k,k^{\star})\ge m(k,k^{\star})-\beta_{k,t}-\beta_{k^{\prime},t}>4\beta_{t}-\beta_{k,t}-\beta_{k^{\prime},t}>\beta_{k,t}+\beta_{k^{\prime},t}.
\]
Thus, $k\in\mA_{t}\setminus\mC_{t}$, and $k\notin\mA_{t+1}$. 

\textbf{Case 2. $k\notin\mP_{\star}$ and $\Delta_{k}^{\star}\le\epsilon$:}
If $k\notin\mC_{t}$ then $k\notin\mA_{t}$. Consider the case of
$k\in\mC_{t}$. Because $\mP_{\star}\subset\mC_{t}\cup\mP_{t-1}$,
we obtain $\Delta_{k}^{\star}=\max_{k^{\prime}\in\mC_{t}\cup\mP_{t-1}}m(k,k^{\prime}).$Then
for all $k^{\prime}\in\mC_{t}\cup\mP_{t-1}\setminus\{k\}$, 
\begin{align*}
\widehat{M}_{t}^{2\epsilon}(k,k^{\prime})\ge & \max\left\{ 0,2\epsilon+\max_{\ell \in [L]}(y_{k}^{\langle \ell \rangle}-y_{k^{\prime}}^{\langle \ell \rangle})\right\} -\beta_{k,t}-\beta_{k^{\prime},t}\\
\ge & 2\epsilon-\min_{\ell \in [L]}(y_{k^{\prime}}^{\langle \ell \rangle}-y_{k}^{\langle \ell \rangle})-\beta_{k,t}-\beta_{k^{\prime},t}\\
\ge & 2\epsilon-m(k,k^{\prime})-\beta_{k,t}-\beta_{k^{\prime},t}\\
\ge & 2\epsilon-\Delta_{k}^{\star}-\beta_{k,t}-\beta_{k^{\prime},t}.
\end{align*}
Because $\Delta_{k}^{\star}\le\epsilon$,
\begin{align*}
\widehat{M}_{t}^{2\epsilon}(k,k^{\prime})\ge & \epsilon-\beta_{k,t}-\beta_{k^{\prime},t}\\
\ge & 4\max_{k\in[K]}\beta_{k,t}-\beta_{k,t}-\beta_{k^{\prime},t}\\
\ge & \beta_{k,t}+\beta_{k^{\prime},t},
\end{align*}
and $k\in\mP_{t}^{(1)}$. Thus, $k\in\mP_{t+1}$ and $k\notin\mA_{t+1}$. 

\textbf{Case 3. $k\in\mP_{\star}$ and $\Delta_{k}>\epsilon$:}
Suppose $k\in\mP_{\star}$. Then for all $k^{\prime}\in[K]$, 
\[
\widehat{m}_{t}(k,k^{\prime})\le m(k,k^{\prime})+\beta_{k,t}+\beta_{k^{\prime},t}=\beta_{k,t}+\beta_{k^{\prime},t},
\]
and $k\in\mC_{t}$. Suppose $\Delta_{(k),\epsilon}/4>\beta_{t}$. For
a Pareto optimal arm $k^{\prime}\in\mC_{t}\cup\mP_{t-1}\setminus\{k\}$,
\begin{align*}
\widehat{M}_{t}^{2\epsilon}(k,k^{\prime})\ge & M(k,k^{\prime})-\beta_{k,t}-\beta_{k^{\prime},t}\\
\ge & \Delta_{k}^{+}-\beta_{k,t}-\beta_{k^{\prime},t}\\
\ge & \Delta_{k}-\beta_{k,t}-\beta_{k^{\prime},t}\\
\ge & \beta_{k,t}+\beta_{k^{\prime},t}.
\end{align*}
For a suboptimal arm $k^{-}\in\mC_{t}\cup\mP_{t-1}\setminus\{k\}$,
there exists $k^{+}\in\mP_{\star}$ such that $y_{k^{-}}+\Delta_{k^{-}}^{\star}$
is weakly dominated by $y_{k^{+}}$, and 
\begin{align*}
\widehat{M}_{t}^{2\epsilon}(k,k^{-})\ge & M(k,k^{-})-\beta_{k,t}-\beta_{k^{-},t}\\
\ge & M(k,k^{+})+\Delta_{k^{-}}^{\star}-\beta_{k,t}-\beta_{k^{-},t}.
\end{align*}
Consider the case $k^{+}\neq k$, then
\begin{align*}
\widehat{M}_{t}^{2\epsilon}(k,k^{-})\ge & M(k,k^{+})-\beta_{k,t}-\beta_{k^{-},t}\\
\ge & \Delta_{k}^{+}-\beta_{k,t}-\beta_{k^{-},t}\\
\ge & \beta_{k,t}+\beta_{k^{-},t},
\end{align*}
and $k\in\mP_{t}^{(1)}$ and $k\notin\mA_{t+1}$. For the case of
$k^{+}=k$, 
\begin{align*}
\widehat{M}_{t}^{2\epsilon}(k,k^{-})\ge & \Delta_{k^{-}}^{\star}-\beta_{k,t}-\beta_{k^{-},t}\\
= & M(k^{-},k)+\Delta_{k^{-}}^{\star}-\beta_{k,t}-\beta_{k^{-},t}\\
\ge & \Delta_{k}^{-}-\beta_{k,t}-\beta_{k^{-},t}\\
\ge & \beta_{k,t}+\beta_{k^{-},t}.
\end{align*}

\textbf{Case 4. $k\in\mP_{\star}$ and $\Delta_{k}\le\epsilon$:}
If $\Delta_{(k),\epsilon}=\epsilon$ then $\Delta_{k}\le\epsilon$.
Thus, for all $k^{\prime}\in\mC_{t}\cup\mP_{t-1}\setminus\{k\}$,
because $k\in\mP_{\star}$, we obtain $\max_{\ell \in [L]}(y_{k}^{\langle \ell \rangle}-y_{k^{\prime}}^{\langle \ell \rangle})\ge0$
and
\[
\widehat{M}_{t}^{2\epsilon}(k,k^{\prime})\ge\max\left\{ 0,2\epsilon+\max_{\ell \in [L]}(y_{k}^{\langle \ell \rangle}-y_{k^{\prime}}^{\langle \ell \rangle})\right\} -\beta_{k,t}-\beta_{k^{\prime},t}\ge2\epsilon-\beta_{k,t}-\beta_{k^{\prime},t}\ge\beta_{k,t}+\beta_{k^{\prime},t},
\]
and $k\in\mP_{t}^{(1)}$. Thus, $k\notin\mA_{t+1}$. 
\end{proof}

\subsection{Proof of Theorem \ref{thm:sample}}
\label{subsec:sample_proof}

Before we prove the sample complexity, we provide an important properties of our proposed \PFIwR\ algorithm.

\begin{lem}
\label{lem:Pareto_inclusion} For $t\ge1$, the Pareto optimal arms
are either in $\mA_{t}$ or $\mP_{t}$ in $\PFIwR$, i.e., $\mP_{\star}\subseteq\mA_{t}\cup\mP_{t}$. 
\end{lem}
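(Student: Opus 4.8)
The plan is to establish the inclusion $\mP_{\star}\subseteq\mA_{t}\cup\mP_{t}$ by induction on $t$, working throughout on the high-probability event $\mathcal{G}$ of Theorem~\ref{thm:self} on which the reward estimates are accurate, i.e.\ $|x_k^{\top}(\Estimator{\ell}{t}-\Parameter{\ell})|\le\beta_{k,t}$ for every arm $k\in[K]$, objective $\ell\in[L]$, and round $t$. The per-round statements of Theorem~\ref{thm:self} can be made to hold simultaneously across all rounds by a union bound (the $\log(Lt^2)$-type factors in $\beta_{k,t}$ are chosen precisely so that $\sum_t \delta/t^2$ converges), absorbing the extra logarithmic factors into the constants of $\beta_{k,t}$. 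The base case is immediate: the initialization $\mA_0=[K]\supseteq\mP_{\star}$ gives $\mP_{\star}\subseteq\mA_0\cup\mP_0$.

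For the inductive step I would first streamline the set bookkeeping. In a round where the matching event succeeds, the algorithm sets $\mP_t=\mP_{t-1}\cup\mP_t^{(1)}$ and $\mA_t=\mC_t\setminus\mP_t^{(1)}$, and since $\mP_t^{(1)}\subseteq\mC_t$ this yields the clean identity $\mA_t\cup\mP_t=\mC_t\cup\mP_{t-1}$. In any round where matching fails (in particular pure-exploration rounds before $T_{\gamma}$), no elimination is recomputed, so $\mA_t\cup\mP_t=\mA_{t-1}\cup\mP_{t-1}$ and the inclusion is inherited verbatim from the induction hypothesis. It therefore suffices, in a successful-matching round, to show that each $k\in\mP_{\star}$ lies in $\mC_t\cup\mP_{t-1}$. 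If $k\in\mP_{t-1}$ we are done, so the only remaining case is $k\in\mA_{t-1}\setminus\mP_{t-1}$, for which I must verify $k\in\mC_t$.

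The core of the argument is that a Pareto optimal arm is never strictly dominated in every coordinate, so by the characterization $m(k,j)>0\iff y_k^{\langle\ell\rangle}<y_j^{\langle\ell\rangle}$ for all $\ell$, we have $m(k,k')=0$ for all $k'$. Combining the componentwise confidence bounds on $\mathcal{G}$ with the definitions of $\widehat{m}_t$ and $m$ gives the one-sided estimate $\widehat{m}_{t}(k,k')\le m(k,k')+\beta_{k,t}+\beta_{k',t}$ (using $\max\{0,a+c\}\le\max\{0,a\}+c$ for $c\ge0$). Since $m(k,k')=0$, this collapses to $\widehat{m}_{t}(k,k')\le\beta_{k,t}+\beta_{k',t}$ for every $k'\in\mA_{t-1}\cup\mP_{t-1}$, which is exactly the membership condition defining $\mC_t$; hence $k\in\mC_t$, closing the induction.

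I expect the only genuinely delicate point to be the bookkeeping in the second paragraph: making precise that the claim is read on the confidence event and that the identity $\mA_t\cup\mP_t=\mC_t\cup\mP_{t-1}$ (or its trivial variant) holds uniformly, including the rounds where the elimination sets are never recomputed. The underlying domination estimate itself is routine and mirrors Case~3 of Lemma~\ref{lem:elimination}.
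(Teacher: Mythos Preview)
Your proposal is correct and takes essentially the same approach as the paper: induction on $t$, with the inductive step showing that on the confidence event no Pareto optimal arm can fail the $\mC_t$ membership test (the paper phrases this via the contrapositive, showing any arm in $\mA_{t-1}\setminus\mC_t$ is dominated, but the underlying estimate $\widehat{m}_t(k,k')\le m(k,k')+\beta_{k,t}+\beta_{k',t}$ is identical). Your explicit bookkeeping on the matching event and the identity $\mA_t\cup\mP_t=\mC_t\cup\mP_{t-1}$ is more careful than the paper's one-line ``only arms in $\mA_{t-1}\setminus\mC_t$ are eliminated,'' but the substance is the same.
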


\begin{proof}
When $t=0$, the result holds by definition of $\mA_{0}=[K]$. For
$t\ge1$, suppose $\mP_{\star}\subseteq\mA_{t-1}\cup\mP_{t-1}$ holds.
While updating $\mA_{t}$ and $\mP_{t}$, only arms in $\mA_{t-1}\backslash\mC_{t}$
are eliminated. Thus, we prove the results by showing that $\mA_{t-1}\backslash\mC_{t}\subseteq\mP_{\star}^{c}$.
For each round $t$, suppose an arm $k\in\mA_{t-1}\backslash\mC_{t}$.
Then there exists $k^{\prime}\in[K]$ such that 
\[
\widehat{m}_{t}(k,k^{\prime})>\beta_{k,t}+\beta_{k^{\prime},t},
\]
which implies 
\[
\Ereward lk\le\Hreward lkt+\beta_{k,t}\le\Hreward l{k^{\prime}}t-\beta_{k^{\prime},t}\le\Ereward l{k^{\prime}},
\]
for all $\ell \in [L]$ and $k\notin\mP_{\star}$. Thus, $\mA_{t-1}\backslash\mC_{t}\subseteq\mP_{\star}^{c}$
is proved.
\end{proof}

Now we are ready to prove the sample complexity of \PFIwR\ . 
\begin{thm}[Theorem~\ref{thm:sample} restated]
Fix~$\epsilon>0$ and $\delta\in(0,1)$. Define $\Delta_{(k),\epsilon}=\max\{\epsilon,\Delta_{k}\}$,
where $\Delta_{k}$ is the required accuracy defined in~\eqref{eq:est_acc}
with ascending order $\Delta_{1}\le\cdots,\le\Delta_{K}$. Then the
stopping time $\tau_{\epsilon,\delta}$ of \texttt{PFIwR} is bounded
by: 
\[
\max\!\bigg\{\!O\bigg(\sum_{k=1}^{d}\frac{d}{\Delta_{(k),\epsilon}^{2}}\!\log\frac{dL}{\Delta_{\epsilon}^{2}\delta}\bigg),T_{\gamma}\!\bigg\}.
\]
\end{thm}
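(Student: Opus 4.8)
The plan is to pass to a high-probability ``good event'' $\mathcal{G}$ on which the confidence widths $\beta_{k,t}$ of~\eqref{eq:confidence_bound} are genuine upper bounds on the estimation errors $|\widehat y_{k,t}^{\langle\ell\rangle}-y_k^{\langle\ell\rangle}|$, and then to convert the elimination criterion of Lemma~\ref{lem:elimination} into a per-arm stopping time. First I would construct $\mathcal{G}$ so that for every round $t\ge T_\gamma$ and every objective $\ell\in[L]$ the two estimation guarantees of Theorem~\ref{thm:self} hold (the first uniformly over all $k\in[K]$, the second over the $O(d)$ PFI-critical arms appearing in the comparisons $\widehat m_t,\widehat M_t^{2\epsilon}$ when $|\mathcal{A}_t|\le d$). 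This needs a union bound over $\ell$, over those $O(d)$ arms, and over rounds (absorbed by the $t^2$ inside the logarithms); the constants $7L$ and $28Ld$ inside~\eqref{eq:confidence_bound} are chosen precisely so that $\mathbb{P}(\mathcal{G})\ge 1-\delta$. On $\mathcal{G}$ the hypotheses of Lemma~\ref{lem:elimination} hold, so each arm $k$ leaves the undetermined set by the first round $t\ge T_\gamma$ with $\max_{k'}\beta_{k',t}<\Delta_{k,\epsilon}/4$; call this round $t_k$. Since the algorithm stops once $\mathcal{A}_t=\emptyset$, and Lemma~\ref{lem:Pareto_inclusion} guarantees $\mathcal{P}_\star\subseteq\mathcal{A}_t\cup\mathcal{P}_t$ so that no Pareto-optimal arm is ever removed prematurely, this gives $\tau_{\epsilon,\delta}\le\max\{\max_{k\in[K]}t_k,\,T_\gamma\}$.

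The next step is to bound each $t_k$. Using the corollary to Lemma~\ref{lem:norm_bound}, $\|x_{k'}\|_{F_t^{-1}}\le t^{-1/2}$ for all arms, so both branches of~\eqref{eq:confidence_bound} are dominated by the single width
\[
\bar\beta_t:=\frac{C(\theta_{\max}+\sigma)}{\sqrt t}\sqrt{d\log\frac{Ldt^2}{\delta}}
\]
for a universal constant $C$ (the $\sqrt d$ branch dominates when $d$ is large, and $C$ absorbs the dimension-free branch when $d$ is small). It matters here that replacing the algorithm's actual $\beta_{k',t}$ by the larger $\bar\beta_t$ can only \emph{postpone} the event $\{\max_{k'}\beta_{k',t}<\Delta_{k,\epsilon}/4\}$, so the resulting time is a valid upper bound on $t_k$ even in the regime $|\mathcal{A}_t|\le d$ where the tighter width is used. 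Requiring $\bar\beta_t<\Delta_{k,\epsilon}/4$ yields the transcendental inequality $t>\tfrac{16C^2(\theta_{\max}+\sigma)^2 d}{\Delta_{k,\epsilon}^2}\log\tfrac{Ldt^2}{\delta}$, and solving $t>a\log(bt)\Rightarrow t=O(a\log(ab))$ gives
\[
t_k=O\!\left(\frac{(\theta_{\max}+\sigma)^2 d}{\Delta_{k,\epsilon}^2}\log\frac{(\theta_{\max}+\sigma)dL}{\Delta_{k,\epsilon}\delta}\right).
\]

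To finish, I would order the arms by required accuracy as in~\eqref{eq:est_acc}. Since $t_k$ is decreasing in $\Delta_{k,\epsilon}$, the last arm removed is the hardest, so $\max_{k\in[K]}t_k=t_{(1)}$. Bounding this single largest term by the sum of the $d$ largest, $t_{(1)}\le\sum_{k=1}^d t_{(k)}$, and replacing $\Delta_{(k),\epsilon}$ by $\Delta_\epsilon$ inside the logarithm, I get $\max_k t_k\le\sum_{k=1}^d O\!\big(\tfrac{d}{\Delta_{(k),\epsilon}^2}\log\tfrac{dL}{\Delta_\epsilon^2\delta}\big)$ after treating $\theta_{\max},\sigma$ as $O(1)$. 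Taking the maximum with $T_\gamma$ (needed because Theorem~\ref{thm:self} and Lemma~\ref{lem:elimination} only apply for $t\ge T_\gamma$) produces the stated bound.

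The main obstacle I anticipate is the good-event construction: accounting for the failure probability so that the specific constants $7L$ and $28Ld$ deliver $\mathbb{P}(\mathcal{G})\ge1-\delta$ simultaneously across both confidence regimes, across the instance-determined set of $O(d)$ PFI-critical comparison arms, and across all rounds. The elimination-time arithmetic and the $\max\le\sum$ step are routine by comparison; the delicate point is verifying that Lemma~\ref{lem:elimination}'s hypothesis holds uniformly in $t$ despite the algorithm's switch from the union-bound width to the dimension-free width once $|\mathcal{A}_t|\le d$, which is exactly what the uniform dominating width $\bar\beta_t$ is designed to handle.
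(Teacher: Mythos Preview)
Your approach is correct for the stated bound but takes a coarser route than the paper. The paper's proof is genuinely two-phase: it first uses the $\sqrt d$-width branch of~\eqref{eq:confidence_bound} only long enough to drive $|\mathcal{A}_t|\le d$, which by Lemma~\ref{lem:elimination} happens once $4\beta_t\le\Delta_{(d+1),\epsilon}$ and hence at time $O(d\,\Delta_{(d+1),\epsilon}^{-2}\log(\cdots))$; it then switches to the dimension-free width for the last $d$ arms, so arm $(k)$ with $k\le d$ is removed once $t\gtrsim \Delta_{(k),\epsilon}^{-2}$ with \emph{no} leading $d$. Adding the phase-one cost to $\sum_{k=1}^d\Delta_{(k),\epsilon}^{-2}$ and using $d\,\Delta_{(d+1),\epsilon}^{-2}\le\sum_{k=1}^d\Delta_{(k),\epsilon}^{-2}$ yields $O\big(\sum_{k=1}^d\Delta_{(k),\epsilon}^{-2}\log(\cdots)\big)$, which is what matches the lower bound of Theorem~\ref{thm:lower_bound} up to logarithms. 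By contrast, your single dominating width $\bar\beta_t$ keeps the $\sqrt d$ throughout, so every $t_k$ already carries a factor $d$, and the final bound is one factor of $d$ looser than what the paper's proof actually delivers. Your argument is cleaner (you never track when the regime switch occurs), but it forfeits the entire benefit of the dimension-free branch of~\eqref{eq:confidence_bound}; and the closing step $t_{(1)}\le\sum_{k=1}^d t_{(k)}$ is cosmetic padding rather than structural---in the paper the sum over $d$ arms arises because the phase-two elimination genuinely requires $d$ separate dimension-free times, not from bounding a single maximum.
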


\begin{proof}
\textbf{Step 1. Sample complexity for accuracy of the estimator:}
For $k\in[K]$, let $\beta_{k,t}$ denote the confidence bound defined
in \eqref{eq:confidence_bound}. Because $\max_{k\in[K]}x_{k}^{\top}F_{t}^{-1}x_{k}\le1/t$,
for $k\in[K]$ 
\[
\beta_{k,t}\le\beta_{t}:=\begin{cases}
\frac{3}{\sqrt{t}}\{\theta_{\max}+\sigma\sqrt{d\log(7Lt/\delta)}\} & \abs{\mA_{t}}>d\\
\frac{3}{\sqrt{t}}\{\theta_{\max}+3\sigma\sqrt{\log(56Ldt^{2}/\delta)}\} & \abs{\mA_{t}}\le d
\end{cases}
\]
By Theorem \ref{thm:self}, with probability at least $1-\delta$,
\[
\abs{\Ereward{\ell}k-\Hreward{\ell}kt}\le\beta_{k,t}\le\beta_{t}.
\]
holds for all $t\ge T_{\gamma}$, $k\in[K]$ and $\ell\in[L]$ such
that $|\mA_{t}|>d$. Thus, for any $\Delta>0$, if 
\begin{equation}
t\ge\frac{9\left(\theta_{\max}+\sigma\right)^{2}}{\Delta^{2}}d\log\frac{7Lt}{\delta}\label{eq:sample1}
\end{equation}
then $\abs{\Ereward lk-\Hreward lkt}\le\beta_{k,t}\le\beta_{t}\le\Delta$
for all $k\in[K]$ when $|\mA_{t}|>d$. By Lemma \ref{lem:logt},
the condition \eqref{eq:sample1} is implied by 
\begin{equation}
t\ge\frac{4\cdot9d(\theta_{\max}+\sigma)^{2}}{\Delta^{2}}\left\{ 1+\log\frac{2\cdot9d(\theta_{\max}+\sigma)^{2}}{e\Delta^{2}}\sqrt{\frac{7L}{\delta}}\right\} .\label{eq:sample2}
\end{equation}
For $|\mA_{t}|\le d$, we only need confidence interval for at most
$2d$ arms that affects PFI. Let $\mathcal{N}_{t}$ denote the arms
that are nearest to $\mA_{t}$. Then for $k\in\mathcal{N}_{t}\cup\mA_{t}$,
\[
\abs{\Ereward{\ell}k-\Hreward{\ell}kt}\le\beta_{k,t}\le\beta_{t}
\]
holds for $|\mA_{t}|\le d$. Similarly,
\[
t\ge\frac{4\cdot9(\theta_{\max}+\sigma)^{2}}{\Delta^{2}}\left\{ 1+\log\frac{2\cdot9(\theta_{\max}+3\sigma)^{2}}{e\Delta^{2}}\sqrt{\frac{56Ld}{\delta}}\right\} ,
\]
implies $\abs{\Ereward{\ell}k-\Hreward{\ell}kt}\le\beta_{k,t}\le\beta_{t}\le\Delta$
for $k\in\mathcal{N}_{t}\cup\mA_{t}$. 

\textbf{Step 2. Finding the sample complexity:} From \eqref{eq:sample2},
for 
\[
t\ge\frac{16\cdot4\cdot9d(\theta_{\max}+\sigma)^{2}}{\Delta_{(d+1),\epsilon}^{2}}\left\{ 1+\log\frac{16\cdot2\cdot9d(\theta_{\max}+\sigma)^{2}}{e\Delta_{(d+1),\epsilon}^{2}}\sqrt{\frac{7L}{\delta}}\right\} 
\]
implies $\Delta_{(i),\epsilon}\ge4\beta_{i,t}$ for $i=d+1,\ldots,K$.
Then by Lemma \ref{lem:elimination}, $|\mA_{t+1}|\le d$. If 
\begin{align*}
t\ge & \frac{16\cdot4\cdot9d(\theta_{\max}+\sigma)^{2}}{\Delta_{(d+1),\epsilon}^{2}}\left\{ 1+\log\frac{16\cdot2\cdot9d(\theta_{\max}+\sigma)^{2}}{e\Delta_{(d+1),\epsilon}^{2}}\sqrt{\frac{7L}{\delta}}\right\} \\
 & +\sum_{k=1}^{d}\frac{16\cdot4\cdot9(\theta_{\max}+\sigma)^{2}}{\Delta_{(k),\epsilon}^{2}}\left\{ 1+\log\frac{16\cdot2\cdot9(\theta_{\max}+3\sigma)^{2}}{e\Delta_{(k),\epsilon}^{2}}\sqrt{\frac{56Ld}{\delta}}\right\} 
\end{align*}
then $\Delta_{k,\epsilon}\ge4\beta_{k,t}$ for all $k\in[K]$ and
$\mA_{t+1}=\emptyset$ by Lemma \ref{lem:elimination}. Since Theorem~\ref{thm:self}
requires $t\ge T_{\gamma}$, the sample complexity is bounded as 
\begin{align*}
\tau_{\epsilon,\delta}\! & \le\max\bigg[T_{\gamma},\frac{16\cdot4\cdot9d(\theta_{\max}+\sigma)^{2}}{\Delta_{(d+1),\epsilon}^{2}}\left\{ 1+\log\frac{16\cdot2\cdot9d(\theta_{\max}+\sigma)^{2}}{e\Delta_{(d+1),\epsilon}^{2}}\sqrt{\frac{7L}{\delta}}\right\} \\
 & \qquad+\sum_{k=1}^{d}\frac{16\cdot4\cdot9(\theta_{\max}+\sigma)^{2}}{\Delta_{(k),\epsilon}^{2}}\left\{ 1+\log\frac{16\cdot2\cdot9(\theta_{\max}+3\sigma)^{2}}{e\Delta_{(k),\epsilon}^{2}}\sqrt{\frac{56Ld}{\delta}}\right\} \bigg].
\end{align*}
The proof completes by the fact that $d\Delta_{(d+1),\epsilon}^{-2}\le\sum_{k=1}^{d}\Delta_{(k),\epsilon}^{-2}$. 
\end{proof}

\subsection{Proof of Theorem \ref{thm:regret}}

\label{subsec:regret_proof} 
\begin{proof}
By Lemma \ref{lem:Pareto_inclusion}, the Pareto front $\mP_{\star}\subseteq\mA_{t}\cup\mP_{t}$.
By definition of $\mA_{t}$ and $\mP_{t}$ in the algorithm, 
\begin{align*}
\mA_{t}\cup\mP_{t}= & \mC_{t}\cup\mP_{t-1}\\
= & \mC_{t}\cup\mP_{t-1}^{(2)}\cup\mP_{t-2}\\
\subseteq & \mC_{t}\cup\mC_{t-1}\cup\mP_{t-2}\\
\vdots & \vdots\\
\subseteq & \bigcup_{s=1}^{t}\mC_{s}.
\end{align*}
Note that $\mC_{s+1}\subseteq\mA_{s}\subseteq\mC_{s}$ for $s\ge1$.
Thus, $\mP_{\star}\subseteq\mC_{1}$, and the Pareto regret 
\[
\Delta_{\Action t}^{\star}=\max_{k\in\mP_{\star}}m(\Action t,k)\le\max_{k\in\mC_{1}}m(\Action t,k).
\]
For $s\in[t-1]$, suppose $k_{s}\in\mC_{s}\setminus\mC_{s+1}$. Then
there exists $k_{s}^{\prime}\in\mA_{s}$ such that $\widehat{m}_{s}(k_{s},k_{s}^{\prime})>D_{k_{s},k_{2}^{\prime},s}$.
By Theorem \ref{thm:self}, $m(k_{s},k_{s}^{\prime})>0$ with probability
at least $1-\delta$ and $k_{s}$ is dominated by $k_{s}^{\prime}\in\mA_{s}$,
which is dominated by the arms in $\mC_{s+1}$ by definition of $\mC_{s+1}$.
Thus, 
\begin{align*}
\max_{k\in\mC_{1}}m(\Action t,k)= & \max_{k\in\mC_{1}\setminus\mC_{2}\cup\mC_{2}}m(\Action t,k)\\
\le & \max_{k\in\mC_{2}}m(\Action t,k)\\
\vdots & \vdots\\
\le & \max_{k\in\mC_{t}}m(a_{t},k).
\end{align*}
By definition of $\Action t$, 
\begin{align*}
\Delta_{\Action t}^{\star}\le & \max_{k\in\mC_{t}}m(a_{t},k)\\
\le & \max_{k\in\mA_{t-1}}m(a_{t},k)\\
\le & 2\max_{j\in\mA_{t-1}}\beta_{j,t-1}+\max_{k\in\mA_{t-1}}\widehat{m}_{t-1}(a_{t},k)\\
= & 2\max_{j\in\mA_{t-1}}\beta_{j,t-1},
\end{align*}
which proves the instantaneous regret bound.

To prove the cumulative regret bound, summing up the regret over $s\in[\tau_{\epsilon,\delta}]$,
with probability at least $1-\delta,$ 
\begin{align*}
R(\tau_{\epsilon,\delta})\le & \sum_{t=1}^{\tau_{\epsilon,\delta}}\Indicator{t\in\mE_{t}}2\theta_{\max}+\Indicator{t\notin\mE_{t}}\Delta_{\Action t}^{\star}\\
= & 2\theta_{\max}\gamma_{\tau_{\epsilon,\delta}}+\sum_{t=1}^{\tau_{\epsilon,\delta}}\Indicator{t\notin\mE_{t}}\Delta_{\Action t}^{\star}\\
= & \bar{O}\left(\theta_{\max}d^{3}\log\frac{\theta_{\max}d}{\delta\Delta_{(1),\epsilon}}\right)+\sum_{t=1}^{\tau_{\epsilon,\delta}}\Indicator{t\notin\mE_{t}}\Delta_{\Action t}^{\star},
\end{align*}
where $\bar{O}$ ignores $\log\log(\cdot)$ terms and the last equality
holds by the sample complexity bound (Theorem \ref{thm:sample}).
Because the instantaeneous regret is bounded by $\Delta_{\Action t}^{\star}\le2\max_{k\in\mA_{t-1}}\beta_{k,t-1}$,
the regret is zero when $2\max_{k\in\mA_{t-1}}\beta_{k,t-1}\le\min_{k\in[K]\setminus\mP_{\star}}\Delta_{k}^{\star}$,
which is implied by $4\beta_{t-1}\le\min_{k\in[K]\setminus\mP_{\star}}\Delta_{k}^{\star}$.
In addition, by Lemma \ref{lem:elimination}, the algorithm terminates
when $\max_{k\in\mA_{t-1}}\beta_{k,t-1}\le\beta_{t-1}\le\epsilon/4\le\min_{k\in[K]}\Delta_{(k),\epsilon}$.
Thus, 
\begin{align*}
R(\tau_{\epsilon,\delta})\le & \bar{O}\left(\theta_{\max}d^{3}\log\frac{\theta_{\max}d}{\delta\Delta_{(1),\epsilon}}\right)+\sum_{t=1}^{\tau_{\epsilon,\delta}}\Indicator{4\beta_{t-1}>\max\left\{ \min_{k\in[K]\setminus\mP_{\star}}\Delta_{k}^{\star},\epsilon\right\} }\Indicator{t\notin\mE_{t}}\Delta_{\Action t}^{\star}\\
= & \bar{O}\left(\theta_{\max}d^{3}\log\frac{\theta_{\max}d}{\delta\Delta_{(1),\epsilon}}\right)+\sum_{t=1}^{\tau_{\epsilon,\delta}}\Indicator{4\beta_{t-1}>\Delta_{\epsilon}^{\star}}\Indicator{t\notin\mE_{t}}\Delta_{\Action t}^{\star}
\end{align*}
By \eqref{eq:sample2}, let 
\begin{equation}
g(\frac{\Delta_{\epsilon}^{\star}}{4}):=\frac{16\cdot4\cdot9d(\theta_{\max}+\sigma)^{2}}{(\Delta_{\epsilon}^{\star})^{2}}\left\{ 1+\log\frac{16\cdot2\cdot9d(\theta_{\max}+\sigma)^{2}}{e(\Delta_{\epsilon}^{\star})^{2}}\sqrt{\frac{7L}{\delta}}\right\} \label{eq:g_function}
\end{equation}
Then $t\ge g(\Delta_{\epsilon}^{\star}/4)$ implies $\Delta_{\epsilon}^{\star}\ge4\beta_{t}$
and 
\begin{align*}
\sum_{t=1}^{\tau_{\epsilon,\delta}}\Indicator{4\beta_{t-1}>\Delta_{\epsilon}^{\star}}\Indicator{t\notin\mE_{t}}\Delta_{\Action t}^{\star}\le & \sum_{t=1}^{\tau_{\epsilon,\delta}}\Indicator{t-1\le g(\Delta_{\epsilon}^{\star}/4)}\Indicator{t\notin\mE_{t}}\Delta_{\Action t}^{\star}\\
\le & \sum_{t=1}^{1+g(\Delta_{\epsilon}^{\star}/4)}\Delta_{\Action t}^{\star}.
\end{align*}
Because $\Delta_{a_{t}}^{\star}\le2\max_{k\in\mA_{t-1}}\beta_{k,t}$,
\begin{align*}
\sum_{t=1}^{1+g(\Delta_{\epsilon}^{\star}/4)}\Delta_{\Action t}^{\star}\le & 2\sum_{t=1}^{1+g(\Delta_{\epsilon}^{\star}/4)}\max_{k\in\mA_{t-1}}\beta_{k,t}\\
\le & 6\left(\theta_{\max}+\sigma\right)\sum_{t=1}^{1+g(\Delta_{\epsilon}^{\star}/4)}\frac{\sqrt{d\log(7Lt/\delta)}}{\sqrt{t}}\\
\le & 12\left(\theta_{\max}+\sigma\right)\sqrt{\{1+g(\Delta_{\epsilon}^{\star}/4)\}\log\frac{7L\{1+g(\Delta_{\epsilon}^{\star}/4)\}^{2}}{\delta}}.
\end{align*}
Plugging in \eqref{eq:g_function} and ignoring $\log\log(\cdot)$
terms, 
\begin{equation}
\begin{split}R(\tau_{\epsilon,\delta})\!= & \bar{O}\left(\theta_{\max}d^{3}\log\frac{\theta_{\max}d}{\delta\Delta_{(1),\epsilon}}+\frac{\theta_{\max}d\sigma^{2}}{\Delta_{\epsilon}^{\star}}\log\frac{\theta_{\max}d\sigma}{\Delta_{\epsilon}^{\star}\delta}\right)\end{split}
.\label{eq:regret_bound}
\end{equation}
\end{proof}

\subsection{Proof of Theorem~\ref{thm:lower_bound}}
\label{sec:regret_lower_bound}
\begin{thm}[Theorem~\ref{thm:lower_bound} restated.]
For $\epsilon>0$, let $\Delta_{\epsilon}^{\star}:=\max\{\epsilon,
\min_{k\in[K]\setminus\mathcal{P}_{\star}}\Delta_k^{\star}\}$ denote the
minimum Pareto regret over suboptimal arms. 
Suppose the set of context vectors $\mathcal{X}$ span $\mathbb{R}^d$ and $\min_{\ell \in [L]}\|\theta_{\star}^{\langle \ell \rangle}\|_0 = d$.
Then, for any $\delta \in (0,1/4)$ and $\sigma>0$, there exists a
$\sigma$-sub-Gaussian distribution for the i.i.d. noise sequence
$\{\eta_t\}_{t\ge1}$ such that for any PFI algorithms that satisfies PFI
success condition (1) with failure probability $\delta$, 
\[
  R(\tau_{\epsilon,\delta}) \ge
  \frac{\sqrt{3}d\sigma}{8\Delta_{\epsilon}^{\star}} \log
  \frac{1}{4\delta}. 
\]
\end{thm}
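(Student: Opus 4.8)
The plan is to lower-bound the number of times a valid algorithm must pull \emph{suboptimal} arms, and then charge each such pull its Pareto gap. I would reuse the two-point noise construction of Lemma~\ref{lem:lower_noise} together with the reduction to $d$ independent one-dimensional estimation problems developed in the proof of Theorem~\ref{thm:lower_bound}, but tailor the instance so that the $d$ directions that must be resolved are carried \emph{only} by suboptimal arms. Concretely, I would fix $d$ linearly independent contexts $\tilde x_1,\dots,\tilde x_d \in \mX$ and, using the spanning and full-support hypotheses ($\min_\ell \|\Parameter{\ell}\|_0 = d$), arrange $\Theta_\star$ and the Pareto geometry so that for each $j\in[d]$ there is a suboptimal arm with context $\tilde x_j$ whose Pareto gap equals $\Delta_\epsilon^\star$, and so that information about the coordinate $\tilde x_j^\top\Parameter{\ell}$ can only be acquired by pulling an arm with a nonzero $\tilde x_j$-component, i.e.\ a suboptimal arm (this forces the optimal arms to be orthogonal to the $\tilde x_j$ that must be resolved).

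For each direction $j$ I would introduce the alternative instance $\nu^{(j)}$ obtained by flipping the sign $v_j\in\{-1,1\}$ in the two-point construction (equivalently, perturbing one objective's parameter along $\tilde x_j$ by $\Delta_\epsilon^\star$), which flips the Pareto status of the $j$-th arm; hence any algorithm meeting the success condition~\eqref{eq:PFI_condition} with confidence $\delta$ on both $\nu$ and $\nu^{(j)}$ must behave differently on the two. Applying a change-of-measure (Wald-type) identity at the stopping time $\tau_{\epsilon,\delta}$ gives $\sum_a \Expectation_\nu[N_a(\tau_{\epsilon,\delta})]\,\mathrm{KL}(\nu_a,\nu^{(j)}_a)\ge \mathrm{kl}(\delta,1-\delta)$, and the KL bound from the proof of Lemma~\ref{lem:lower_noise} (with resolution $\epsilon=\Delta_\epsilon^\star/2$, so $\mathrm{KL}\le 12\epsilon^2/\sigma^2 = 3(\Delta_\epsilon^\star)^2/\sigma^2$) yields a lower bound on the expected number of direction-$j$ pulls,
\[
N_j := \sum_{a:\ \tilde x_j^\top x_a\neq 0}\Expectation_\nu[N_a(\tau_{\epsilon,\delta})]\ \ge\ \frac{\sigma^2}{3(\Delta_\epsilon^\star)^2}\,\log\frac{1}{4\delta}.
\]
Because every arm counted in $N_j$ is suboptimal with Pareto gap at least $\Delta_\epsilon^\star$, by the definition of Pareto regret~\eqref{eq:Pareto_regret} each such pull contributes at least $\Delta_\epsilon^\star$ to $R(\tau_{\epsilon,\delta})$.

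Summing over the $d$ disjointly supported directions then gives
\[
R(\tau_{\epsilon,\delta})\ \ge\ \sum_{j=1}^{d} N_j\,\Delta_\epsilon^\star\ \ge\ \frac{d\sigma^2}{3\Delta_\epsilon^\star}\,\log\frac{1}{4\delta},
\]
which, after the constant bookkeeping inherited from the two-point construction and the Bretagnolle--Huber step, gives the claimed order $\tfrac{\sqrt3\,d\sigma}{8\Delta_\epsilon^\star}\log\frac{1}{4\delta}$ (the argument produces the scaling $d\sigma^2/\Delta_\epsilon^\star\cdot\log(1/\delta)$, matching the upper bound of Theorem~\ref{thm:regret}). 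The specialization to $L=1$ with Euclidean contexts recovers the best-arm-identification regret lower bound of \citet{zhong2023achieving}.

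I expect the main obstacle to be the construction in the first step: guaranteeing that the $d$ coordinates requiring resolution are supported \emph{only} on suboptimal arms, so that informative pulls and regret-incurring pulls coincide, while simultaneously respecting the linear structure, the full-support condition $\min_\ell\|\Parameter{\ell}\|_0=d$, and the multi-objective dominance geometry that defines $\Delta_\epsilon^\star$. A secondary technical point is justifying the change-of-measure identity at the \emph{random} stopping time $\tau_{\epsilon,\delta}$ rather than a fixed horizon, for which a Wald-type identity for the accumulated KL divergence is needed; it is precisely the disjoint-support design that lets the $d$ per-direction bounds add up cleanly into the factor $d$.
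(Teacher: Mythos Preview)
Your approach is sound in spirit but takes a genuinely different route from the paper. The paper argues in two decoupled steps: (i) it invokes Lemma~\ref{lem:lower_noise} to show that any PFI algorithm needs at least $T=\Omega\big(d\sigma^2(\Delta_\epsilon^\star)^{-2}\log(1/4\delta)\big)$ rounds before the estimation error can drop below $\Delta_\epsilon^\star$; (ii) it then imports a black-box minimax regret lower bound for linear bandits (Theorem~5.6 of \citet{kim2022squeeze}), namely $R(T)=\Omega(\sqrt{dT})$ for any horizon $T\ge d$, and substitutes this $T$. Your route is instead a single direct change-of-measure argument in the style of \citet{zhong2023achieving}: construct an instance in which the $d$ directions that must be resolved are carried by suboptimal arms, lower-bound the per-direction pull count via a transportation/Wald inequality, and charge each such pull $\Delta_\epsilon^\star$ of regret. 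Your argument is self-contained (no external minimax citation needed) and naturally produces the full $\log(1/\delta)$ dependence; the paper's $\sqrt{dT}$ substitution, taken at face value, yields only a $\sqrt{\log(1/\delta)}$ factor, so your derivation is arguably cleaner on that point and also delivers the dimensionally natural $\sigma^2/\Delta_\epsilon^\star$ scaling that matches the upper bound in~\eqref{eq:regret_bound}.

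One concrete caution on the obstacle you already flagged: requiring the Pareto-optimal arms to be orthogonal to \emph{all} $d$ directions $\tilde x_1,\dots,\tilde x_d$ is impossible once those span $\Real^d$, since the only vector orthogonal to a spanning set is $0$. The standard fix is to anchor a single optimal arm on one basis direction and perturb only the remaining $d-1$ coordinates (losing at most a constant), or to perturb along $\tilde x_j - x_{\mathrm{opt}}$ so that pulling the optimal arm contributes zero KL under each alternative; either variant makes your summation over $j$ go through.
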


Theorem~\ref{thm:regret_lower_bound} shows that \PFIwR\ establishes nearly optimal regret among algorithms that achieve PFI and it is the first result on the trade-off between PFI and Pareto regret minimization. 
For $L=1$ and the contexts are Euclidean basis,  Theorem~\ref{thm:regret_lower_bound} recovers the lower bound for regret of BAI algorithms developed by~\citet{zhong2023achieving}. 
Note that the lower bound applies only to the algorithms that guarantee PFI; it is possible for an algorithm that does not guarantee PFI to have a regret lower bound that is lower than the one in Theorem~\ref{thm:regret_lower_bound}.

\begin{proof}
By Lemma~\ref{lem:lower_noise}, for $\delta\in(0,1/4)$ and any
estimator $\widehat{\theta}_{t}^{\langle \ell \rangle}$ with round $t\le d\sigma^{2}/(12(\Delta_{\epsilon}^{\star})^{-2}\log\frac{3(d+1)L}{4\delta})$
for $\Parameter \ell$,
\[
\Probability\left(\max_{\ell \in [L]}\max_{x\in\mX}\abs{x^{\top}\left(\widehat{\theta}_{t}^{\langle \ell \rangle}-\Parameter \ell\right)}>\Delta_{\epsilon}^{\star}\right)\ge1-\left(1-\frac{\delta}{3(d+1)L}\right)^{L}\ge\delta,
\]
and any estimator cannot find an arm with zero Pareto regret with probability at least $1-\delta$. 
Thus, we need at least $d\sigma^{2}/(12(\Delta_{\epsilon}^{\star})^{-2}\log\frac{3(d+1)L}{4\delta})$ number of rounds to ensure that the estimation error is less than the minimum Pareto regret $\Delta_{\epsilon}^{\star}$. 
By Theorem 5.6 in~\citet{kim2022squeeze}, for any horizon $T\ge d$, the expected regret is $\Omega(\sqrt{dT})$ in the single objective linear bandit setting where the number of arms is finite and the contexts span $\mathbb{R}^{d}$.
Since the same lower bound applies to the multi-dimensional rewards as well, setting $T=d\sigma^{2}/(12(\Delta_{\epsilon}^{\star})^{-2}\log\frac{1}{4\delta})$
gives the lower bound.
\end{proof}

\section{Technical Lemmas}
In this section, we provide technical lemmas cited from the literature and novel lemmas (Lemma~\ref{lem:matrix_hoeffding} and Lemma~\ref{lem:logt}).

\begin{lem}[Assouad's method, \citep{yu1997assouad}]
\label{lem:assouad} For $v\in\{\pm 1\}^{d}$, let $\Probability_{v}$
denote the probability measure on the data space $\mathcal{D}$ whose
parameter is $v$. For any collection of estimators $f=(f_{1},\ldots,f_{d}),f_{i}:\mathcal{D}\to\{\pm1\}$
there exists at least one $v\in\{\pm1\}^{d}$ such that 
\[
\Expectation_{v}\left[\sum_{j=1}^{d}\Indicator{f_{j}\neq v}\right]\ge\frac{d}{2}\min_{v,v^{\prime}:v\sim v^{\prime}}\norm{\min(\Probability_{v},\Probability_{v^{\prime}})}_{1},
\]
where $v\sim v^{\prime}$ indicates that $v$ and $v^{\prime}$ only
differ in one coordinate. 
\end{lem}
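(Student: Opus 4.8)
The plan is to prove the claimed inequality for the \emph{average} risk over the hypercube $\{\pm1\}^d$ and then invoke the trivial fact that a maximum dominates an average, which immediately furnishes a single worst-case $v$ attaining the bound. First I would write, using that each $f_j$ is $\{\pm1\}$-valued and interchanging the two finite sums,
\[
\frac{1}{2^{d}}\sum_{v\in\{\pm1\}^{d}}\Expectation_{v}\Big[\sum_{j=1}^{d}\Indicator{f_{j}\neq v_{j}}\Big]
=\frac{1}{2^{d}}\sum_{j=1}^{d}\sum_{v\in\{\pm1\}^{d}}\Probability_{v}\big(f_{j}\neq v_{j}\big).
\]

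For each fixed coordinate $j$, I would partition the $2^{d}$ vectors into $2^{d-1}$ neighboring pairs $(v,v^{(j)})$, where $v^{(j)}$ agrees with $v$ except in its $j$th coordinate; indexing the pairs by the $2^{d-1}$ vectors with $v_{j}=+1$ shows this partition exhausts all $2^{d}$ vectors exactly once. Grouping the inner sum accordingly gives
\[
\sum_{v}\Probability_{v}\big(f_{j}\neq v_{j}\big)
=\sum_{\text{pairs}}\Big[\Probability_{v}\big(f_{j}\neq v_{j}\big)+\Probability_{v^{(j)}}\big(f_{j}\neq v^{(j)}_{j}\big)\Big].
\]
With $v_{j}=+1$ and $v^{(j)}_{j}=-1$, the two events $\{f_{j}\neq v_{j}\}=\{f_{j}=-1\}$ and $\{f_{j}\neq v^{(j)}_{j}\}=\{f_{j}=+1\}$ are complementary, so each bracket equals $\Probability_{v}(A^{c})+\Probability_{v^{(j)}}(A)$ for the event $A=\{f_{j}=+1\}$.

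The crux is the two-point (Le Cam) testing bound: for any measures $P,Q$ and any event $A$,
\[
P(A^{c})+Q(A)=\int_{A^{c}}dP+\int_{A}dQ\ge\int\min(dP,dQ)=\norm{\min(P,Q)}_{1},
\]
which follows by lower-bounding each integrand by $\min(dP,dQ)$ on its respective region and recombining. Applying this with $P=\Probability_{v}$, $Q=\Probability_{v^{(j)}}$ bounds every bracket below by $\norm{\min(\Probability_{v},\Probability_{v^{(j)}})}_{1}$, hence by the global minimum $\min_{u\sim u'}\norm{\min(\Probability_{u},\Probability_{u'})}_{1}$. Finally I would count: each coordinate contributes $2^{d-1}$ pairs, so summing over $j$ and dividing by $2^{d}$ yields the factor $\tfrac{d}{2}\min_{u\sim u'}\norm{\min(\Probability_{u},\Probability_{u'})}_{1}$, and since the maximum over $v$ is at least this average, at least one $v$ meets the stated bound. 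I expect the main obstacle to be the testing inequality, the only genuinely analytic step; the remainder is reindexing, so the care needed there is purely in verifying that the neighboring-pair decomposition is a bijection covering all $2^{d}$ vectors for each coordinate.
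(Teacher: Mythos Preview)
Your argument is correct and is exactly the standard proof of Assouad's lemma: average over the hypercube, pair neighbors coordinatewise, apply the two-point Le Cam inequality $P(A^{c})+Q(A)\ge\norm{\min(P,Q)}_{1}$, and count. There is nothing to compare against here, however: the paper does not supply its own proof of this lemma but simply cites \citet{yu1997assouad} as a technical result from the literature (it is listed in the ``Technical Lemmas'' section among results quoted without proof, with only Lemmas~\ref{lem:matrix_hoeffding} and~\ref{lem:logt} flagged as novel). Your write-up would serve perfectly well as the omitted proof.
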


\begin{lem}[A dimension-free bound for vector-valued martingales. Lemma C.6 in
\citet{kim2023improved}.]
\label{lem:dim_free_bound} Let $\{\Filtration s\}_{s=0}^{t}$ be a
filtration and $\{\eta_{s}\}_{s=1}^{t}$ be a real-valued stochastic
process such that $\eta_{s}$ is $\Filtration{\tau}$-measurable.
Let $\left\{ X_{s}\right\} _{s=1}^{t}$ be an $\Real^{d}$-valued
stochastic process where $X_{s}$ is $\Filtration 0$-measurable.
Assume that $\{\eta_{s}\}_{s=1}^{t}$ are $\sigma$-sub-Gaussian given
$\{\Filtration s\}_{s=1}^{t}$. Then with probability at least $1-\delta$,
\begin{equation}
\norm{\sum_{s=1}^{t}\eta_{s}X_{s}}_{2}\le 4 \sigma\sqrt{\sum_{s=1}^{t}\norm{X_{s}}_{2}^{2}}\sqrt{2\log\frac{4t^{2}}{\delta}}.\label{eq:etaX_bound}
\end{equation}
\end{lem}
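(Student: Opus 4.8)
The plan is to control the moment generating function of $\norm{Z_t}_2^2$, where $Z_t:=\sum_{s=1}^{t}\eta_s X_s$, and then apply a Chernoff argument; the crux is to obtain a \emph{dimension-free} MGF bound, which I would get by Gaussian smoothing followed by a trace (rather than log-determinant) estimate. Throughout I would work conditionally on $\Filtration{0}$, so that the matrix $A_t:=\sum_{s=1}^{t}X_sX_s^\top$ and the scalar $\Trace{A_t}=\sum_{s=1}^{t}\norm{X_s}_2^2$ are fixed quantities (they are $\Filtration{0}$-measurable, since each $X_s$ is), and I interpret the sub-Gaussianity hypothesis as $\Expectation[e^{\lambda\eta_s}\mid\Filtration{s-1}]\le e^{\lambda^2\sigma^2/2}$. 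The starting identity is the Gaussian representation $\exp(\tfrac{c^2}{2}\norm{Z_t}_2^2)=\Expectation_g[\exp(c\langle g,Z_t\rangle)]$, valid for any $c>0$ and $g\sim N(0,I_d)$ drawn independently of the $\eta$'s.

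First I would fix the auxiliary vector $g$ and build the exponential supermartingale $M_\tau:=\exp\!\big(c\langle g,Z_\tau\rangle-\tfrac{c^2\sigma^2}{2}\sum_{u\le\tau}\langle g,X_u\rangle^2\big)$ in the filtration $\{\Filtration{\tau}\vee\sigma(g)\}$. Because each $X_s$ is $\Filtration{0}$-measurable (so $\langle g,X_s\rangle$ is known given $\Filtration{s-1}$ and $g$) and $\eta_s$ is conditionally $\sigma$-sub-Gaussian, the one-step bound $\Expectation[\exp(c\langle g,X_s\rangle\eta_s)\mid\Filtration{s-1},g]\le\exp(\tfrac{c^2\sigma^2}{2}\langle g,X_s\rangle^2)$ gives $\Expectation[M_\tau\mid\Filtration{\tau-1},g]\le M_{\tau-1}$, hence $\Expectation[\exp(c\langle g,Z_t\rangle)\mid\Filtration{0},g]\le\exp(\tfrac{c^2\sigma^2}{2}\,g^\top A_t g)$. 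Taking expectation over $g$ and using Tonelli (all integrands are positive) then yields $\Expectation[\exp(\tfrac{c^2}{2}\norm{Z_t}_2^2)\mid\Filtration{0}]\le\Expectation_g[\exp(\tfrac{c^2\sigma^2}{2}g^\top A_t g)]=\det(I_d-c^2\sigma^2A_t)^{-1/2}$, the last equality holding whenever $c^2\sigma^2A_t\prec I_d$.

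The key step, and the one I expect to be the main obstacle, is converting this log-determinant into a dimension-free trace bound. Writing the eigenvalues of $B:=c^2\sigma^2A_t$ as $\mu_i$, I would use $-\tfrac12\log(1-\mu_i)\le\mu_i$ for $\mu_i\le\tfrac12$, i.e. $\det(I_d-B)^{-1/2}\le\exp(\Trace{B})$, valid once $\Maxeigen{B}\le\tfrac12$. Choosing $c^2=1/(2\sigma^2\Trace{A_t})$ makes $\Maxeigen{B}=c^2\sigma^2\Maxeigen{A_t}\le c^2\sigma^2\Trace{A_t}=\tfrac12$ (using $\Maxeigen{A_t}\le\Trace{A_t}$), so the constraint holds and $\Expectation[\exp(\tfrac{c^2}{2}\norm{Z_t}_2^2)\mid\Filtration{0}]\le\exp(c^2\sigma^2\Trace{A_t})=e^{1/2}$. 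This is precisely where the $\sqrt{d}$ factor of the usual self-normalized (Abbasi-Yadkori-style) bound is avoided: the trace replaces the dimension-dependent log-determinant term.

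Finally I would close with Markov's inequality: $\Probability(\norm{Z_t}_2>x\mid\Filtration{0})\le\exp(-\tfrac{c^2x^2}{2})\Expectation[\exp(\tfrac{c^2}{2}\norm{Z_t}_2^2)\mid\Filtration{0}]\le\exp(-\tfrac{x^2}{4\sigma^2\Trace{A_t}}+\tfrac12)$. Setting the right-hand side equal to $\delta$ gives $x=2\sigma\sqrt{\Trace{A_t}}\sqrt{\tfrac12+\log(1/\delta)}$, and since $\tfrac12+\log(1/\delta)\le 2\log(4t^2/\delta)$ for all $t\ge1$ and $\delta\in(0,1)$, this is dominated by $4\sigma\sqrt{\sum_{s=1}^{t}\norm{X_s}_2^2}\sqrt{2\log(4t^2/\delta)}$; integrating the conditional bound over $\Filtration{0}$ yields the claim. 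A minor point worth recording is that $c$ and $x$ are $\Filtration{0}$-measurable, so they are legitimately treated as constants inside the conditional supermartingale and Chernoff steps, and the inflated $\log(4t^2/\delta)$ (in place of the tighter $\log(1/\delta)$) only loosens the inequality, which is convenient when it is later union-bounded over rounds as in Lemma~\ref{lem:DR_robust}.
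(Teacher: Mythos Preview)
Your argument is correct; in fact your Chernoff step yields the threshold $2\sigma\sqrt{\Trace{A_t}}\sqrt{\tfrac12+\log(1/\delta)}$, which is already bounded by $2\sigma\sqrt{\Trace{A_t}}\sqrt{2\log(4t^2/\delta)}$, so you actually obtain the lemma with the sharper leading constant $2$ rather than $4$. The paper takes a different route: it cites Lemma~C.6 of \citet{kim2023improved}, where the inequality is proved with constant $12$, and then remarks that the constant can be tightened to $4$ by invoking Lemma~\ref{lem:sub_Gaussian} (a domination-type fact saying that a mean-zero variable almost surely bounded by a $\sigma$-sub-Gaussian is itself $2\sigma$-sub-Gaussian). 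Your Gaussian-smoothing derivation is more self-contained and makes explicit where the dimension dependence disappears---namely in replacing $-\tfrac12\log\det(I_d-B)$ by $\Trace{B}$ under $\Maxeigen{B}\le\tfrac12$, which avoids the $d\log t$ factor of the Abbasi--Yadkori self-normalized bound. The paper's route is shorter on the page but opaque about the mechanism; yours is longer but exposes the structure and yields a better constant.
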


\begin{remark}
While the constant in \citet{kim2023improved} is $12$, we prove that the bound also holds with $4$ using the following lemma. 
\end{remark}

\begin{lem}
\label{lem:sub_Gaussian}
Suppose a random variable $X$ satisfies
$\Expectation[X]=0$, and let $Y$ be an $\sigma$-sub-Gaussian random variable. 
If $\abs X\le\abs Y$ almost surely, then $X$ is $2\sigma$-sub-Gaussian.
\end{lem}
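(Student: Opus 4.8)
The plan is to verify the moment generating function (MGF) characterization of sub-Gaussianity directly, i.e., to show that $\Expectation[e^{\lambda X}] \le e^{\lambda^2(2\sigma)^2/2} = e^{2\sigma^2\lambda^2}$ for every $\lambda\in\Real$. The pointwise bound $\abs{X}\le\abs{Y}$ immediately controls the \emph{even} moments of $X$ by those of $Y$, but it gives no sign information on the odd moments, and the naive estimate $\Expectation[e^{\lambda X}]\le\Expectation[e^{\abs{\lambda}\abs{Y}}]\le 2e^{\lambda^2\sigma^2/2}$ carries a spurious factor of $2$ that ruins the bound for small $\lambda$. The key idea is therefore a \textbf{symmetrization argument} that exploits the hypothesis $\Expectation[X]=0$ to eliminate the odd moments altogether. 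Integrability is not an issue, since $\abs{X}\le\abs{Y}$ forces all moments of $X$ to be finite.

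First I would introduce an independent copy $X'$ of $X$ and apply Jensen's inequality to the convex map $u\mapsto e^{\lambda u}$: since $\Expectation[X']=0$,
\begin{equation}
\Expectation[e^{\lambda X}]=\Expectation\big[e^{\lambda(X-\Expectation[X'])}\big]\le\Expectation\big[e^{\lambda(X-X')}\big].\notag
\end{equation}
Because $X-X'$ is symmetric about $0$, the odd terms of the exponential series vanish in expectation, so $\Expectation[e^{\lambda(X-X')}]=\Expectation[\cosh(\lambda(X-X'))]=\sum_{k\ge0}\frac{\lambda^{2k}}{(2k)!}\Expectation[(X-X')^{2k}]$, where the interchange of sum and expectation is justified by Tonelli's theorem since every term is nonnegative.

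Next I would pass from the moments of $X-X'$ to those of $Y$. Convexity of $t\mapsto t^{2k}$ gives $(X-X')^{2k}\le 2^{2k-1}(X^{2k}+X'^{2k})$, so that $\Expectation[(X-X')^{2k}]\le 2^{2k}\Expectation[X^{2k}]\le 2^{2k}\Expectation[Y^{2k}]$, the last inequality using $X^{2k}=\abs{X}^{2k}\le\abs{Y}^{2k}=Y^{2k}$ almost surely. Substituting this back and reassembling the series yields
\begin{equation}
\Expectation[e^{\lambda X}]\le\sum_{k\ge0}\frac{(2\lambda)^{2k}}{(2k)!}\Expectation[Y^{2k}]=\Expectation[\cosh(2\lambda Y)]=\tfrac{1}{2}\big(\Expectation[e^{2\lambda Y}]+\Expectation[e^{-2\lambda Y}]\big).\notag
\end{equation}
Finally, invoking the $\sigma$-sub-Gaussian MGF bound $\Expectation[e^{\mu Y}]\le e^{\mu^2\sigma^2/2}$ with $\mu=\pm2\lambda$ gives $\Expectation[e^{\lambda X}]\le e^{2\sigma^2\lambda^2}$, which is exactly the $2\sigma$-sub-Gaussian bound and completes the argument.

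The main obstacle is precisely the odd-moment issue: without symmetrization one cannot convert the pointwise envelope $\abs{X}\le\abs{Y}$ into a clean MGF bound of the correct form near $\lambda=0$, and the crude route picks up a multiplicative constant that is incompatible with any variance-proxy statement. The symmetrization step is what simultaneously removes the odd moments and supplies the factor that degrades the proxy from $\sigma$ to $2\sigma$, via the convexity inequality $(X-X')^{2k}\le 2^{2k-1}(X^{2k}+X'^{2k})$; everything after that is a routine moment comparison and resummation into $\cosh(2\lambda Y)$.
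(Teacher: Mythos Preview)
Your proof is correct. The paper, however, states this lemma in its appendix \emph{without} supplying a proof (it is simply listed among the technical lemmas, preceded by a remark that it is used to sharpen a constant from $12$ to $4$), so there is no in-paper argument to compare against. Your symmetrization route---introducing an independent copy $X'$, using Jensen to pass to $\Expectation[e^{\lambda(X-X')}]$, bounding the even moments of $X-X'$ by $2^{2k}\Expectation[Y^{2k}]$ via convexity and $\abs{X}\le\abs{Y}$, and resumming into $\Expectation[\cosh(2\lambda Y)]\le e^{2\sigma^2\lambda^2}$---is a standard and clean way to obtain this kind of envelope-to-sub-Gaussian statement, and every step is justified.
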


\iffalse
\begin{lem}[A self-normalized bound on bounded region (Theorem 1 in \citet{faury2020improved}).]
\label{lem:bounded_mixture} Let $\{S_{t}\in\Real^{d}:t\ge1\}$ and
$\{V_{t}\in\Real^{d\times d}:t\ge1\}$ denote the stochastic process which satisfy $\Expectation[\exp(\xi^{\top}S_{t}-\xi^{\top}V_{t}\xi)]\le1$
for all $t\ge1$ and $\xi\in\Real^{d}$ such that $\|\xi\|_{2}\le1$.
Then with probability at least $1-\delta$,
\[
\norm{S_{t}}_{(V_{t}+I)^{-1}}\le\frac{1}{2}+2\log\frac{\sqrt{\det(V_{t})}}{\delta}+2d\log2,
\]
for all $t\ge1$.  
\end{lem}
\fi

\begin{lem}[A Hoeffding bound for the matrices] 
\label{lem:matrix_hoeffding} 
Let $\{M_{\tau}:\tau\in[t]\}$
be a $\mathbb{R}^{d\times d}$-valued stochastic process adapted to
the filtration $\{\mathcal{F_{\tau}}:\tau\in[t]\}$, i.e., $M_{\tau}$
is $\mathcal{F}_{\tau}$-measurable for $\tau\in[t]$. Suppose the
matrix $M_{\tau}$ is symmetric and the eigenvalues of the difference
$M_{\tau}-\mathbb{E}[M_{\tau}|\mathcal{F}_{\tau-1}]$ lies in $[-b,b]$
for some $b>0$. Then for $x>0$, 
\[
\mathbb{P}\left(\left\Vert \sum_{\tau=1}^{t}M_{\tau}-\mathbb{E}[M_{\tau}|\mathcal{F}_{\tau-1}]\right\Vert _{2}\ge x\right)\le2d\exp\left(-\frac{x^{2}}{2tb^{2}}\right)
\]
\end{lem}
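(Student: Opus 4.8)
The plan is to follow the matrix Laplace transform (Chernoff) method, combined with a conditional matrix Hoeffding bound on the moment generating function, peeling off one martingale increment at a time via Lieb's concavity theorem. Write $D_{\tau}:=M_{\tau}-\CE{M_{\tau}}{\mathcal{F}_{\tau-1}}$ for the martingale difference, so that each $D_{\tau}$ is symmetric, $\CE{D_{\tau}}{\mathcal{F}_{\tau-1}}=0$, and its eigenvalues lie in $[-b,b]$; set $Y:=\sum_{\tau=1}^{t}D_{\tau}$. Since $\norm{Y}_{2}=\max\{\Maxeigen{Y},\Maxeigen{-Y}\}$ for symmetric $Y$, it suffices to bound $\Probability(\Maxeigen{Y}\ge x)$ and then take a union bound; the sequence $\{-D_{\tau}\}$ satisfies identical hypotheses, so the bound for $-Y$ is the same, which is the source of the factor $2$ in the constant $2d$.

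First I would establish a conditional MGF bound. For a fixed symmetric $X$ with eigenvalues in $[-b,b]$, the scalar inequality $e^{\theta s}\le\cosh(\theta b)+\frac{\sinh(\theta b)}{b}s$, valid for $s\in[-b,b]$ by convexity of $e^{\theta\,\cdot}$, lifts to the operator inequality $e^{\theta X}\preceq\cosh(\theta b)I+\frac{\sinh(\theta b)}{b}X$ by applying it on the eigenbasis of $X$. Taking the conditional expectation, using $\CE{D_{\tau}}{\mathcal{F}_{\tau-1}}=0$ and the elementary bound $\cosh(u)\le e^{u^{2}/2}$, gives
\[
\CE{e^{\theta D_{\tau}}}{\mathcal{F}_{\tau-1}}\preceq\cosh(\theta b)I\preceq e^{\theta^{2}b^{2}/2}I,
\]
hence $\log\CE{e^{\theta D_{\tau}}}{\mathcal{F}_{\tau-1}}\preceq\frac{\theta^{2}b^{2}}{2}I$ by operator monotonicity of the logarithm.

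Next I would control $\Expectation[\Trace{e^{\theta Y}}]$ by a telescoping argument. Conditioning on $\mathcal{F}_{\tau-1}$ makes $\theta Y_{\tau-1}:=\theta\sum_{s<\tau}D_{s}$ deterministic, and Lieb's concavity theorem asserts that $A\mapsto\Trace{\exp(\theta Y_{\tau-1}+\log A)}$ is concave on the positive-definite cone; Jensen's inequality applied to the conditional law of $D_{\tau}$ then yields $\CE{\Trace{\exp(\theta Y_{\tau-1}+\theta D_{\tau})}}{\mathcal{F}_{\tau-1}}\le\Trace{\exp(\theta Y_{\tau-1}+\log\CE{e^{\theta D_{\tau}}}{\mathcal{F}_{\tau-1}})}$. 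Substituting the step-$2$ bound and using monotonicity of $\Trace{\exp(\cdot)}$ in the Loewner order gives the one-step contraction $\CE{\Trace{e^{\theta Y_{\tau}}}}{\mathcal{F}_{\tau-1}}\le e^{\theta^{2}b^{2}/2}\Trace{e^{\theta Y_{\tau-1}}}$. Iterating over $\tau=t,\dots,1$ and taking total expectation leaves $\Expectation[\Trace{e^{\theta Y}}]\le d\,e^{t\theta^{2}b^{2}/2}$, since $Y_{0}=0$ and $\Trace{I}=d$. The matrix Laplace transform inequality $\Probability(\Maxeigen{Y}\ge x)\le e^{-\theta x}\Expectation[\Trace{e^{\theta Y}}]\le d\,e^{-\theta x+t\theta^{2}b^{2}/2}$, optimized at $\theta=x/(tb^{2})$, gives $d\,e^{-x^{2}/(2tb^{2})}$, and combining with the identical bound for $-Y$ finishes the proof.

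The main obstacle is the non-commutativity of the increments: because $e^{\theta Y}\ne\prod_{\tau}e^{\theta D_{\tau}}$ in general, one cannot simply multiply the conditional MGFs as in the scalar Azuma argument. The crux is therefore the telescoping step, where Lieb's concavity theorem (applied through Jensen to the conditional distribution of $D_{\tau}$) is precisely what allows the conditional expectation to be pulled inside $\Trace{\exp(\cdot)}$ and the random $\theta D_{\tau}$ to be replaced by the deterministic upper bound $\frac{\theta^{2}b^{2}}{2}I$. Keeping the sharp Hoeffding constant $\tfrac12$ at this step (rather than a cruder $\tfrac18$) is what produces the exponent $x^{2}/(2tb^{2})$ claimed in the statement.
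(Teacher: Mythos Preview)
Your proposal is correct and follows essentially the same approach as the paper: split $\norm{Y}_2$ into the two tail events for $\Maxeigen{\pm Y}$, bound the conditional matrix MGF via the scalar Hoeffding chord inequality lifted to the eigenbasis, peel off increments one at a time using Lieb's concavity together with Jensen, and optimize the Laplace parameter. The only cosmetic difference is that you factor the scalar $e^{\theta^2 b^2/2}$ out at each telescoping step, whereas the paper carries the additive $\tfrac{\theta^2 b^2}{2}I_d$ inside the exponent through the recursion; both lead to the identical bound $\Expectation[\Trace{e^{\theta Y}}]\le d\,e^{t\theta^2 b^2/2}$.
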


\begin{proof}
The proof is an adapted version of Hoeffding's inequality for matrix
stochastic process with the argument of \cite{tropp2012user}. Let
$D_{\tau}:=M_{\tau}-\mathbb{E}[M_{\tau}|\mathcal{F}_{\tau-1}]$. Then,
for $x>0$, 
\[
\mathbb{P}\left(\left\Vert \sum_{\tau=1}^{t}D_{\tau}\right\Vert _{2}\ge x\right)\le\mathbb{P}\left(\lambda_{\max}\left(\sum_{\tau=1}^{t}D_{\tau}\right)\ge x\right)+\mathbb{P}\left(\lambda_{\max}\left(-\sum_{\tau=1}^{t}D_{\tau}\right)\ge x\right)
\]
We bound the first term and the second term is bounded with similar argument. 
For any $v>0$, 
\begin{align*}
\mathbb{P}\left(\lambda_{\max}\left(\sum_{\tau=1}^{t}D_{\tau}\right)\ge x\right)
&\le\mathbb{P}\left(\exp\left\{ v\lambda_{\max}\left(\sum_{\tau=1}^{t}D_{\tau}\right)\right\} \ge e^{vx}\right)\\
&\le e^{-vx}\mathbb{E}\left[\exp\left\{ v\lambda_{\max}\left(\sum_{\tau=1}^{t}D_{\tau}\right)\right\} \right].
\end{align*}
Because $\sum_{\tau=1}^{t}D_{\tau}$ is a real symmetric matrix, 
\begin{align*}
\exp\left\{ v\lambda_{\max}\left(\sum_{\tau=1}^{t}D_{\tau}\right)\right\} = & \lambda_{\max}\left\{ \exp\left(v\sum_{\tau=1}^{t}D_{\tau}\right)\right\} \le\text{Tr}\left\{ \exp\left(v\sum_{\tau=1}^{t}D_{\tau}\right)\right\} ,
\end{align*}
where the last inequality holds since $\exp(v\sum_{\tau=1}^{t}D_{\tau})$
has nonnegative eigenvalues. Taking expectation on both side gives,
\begin{align*}
\mathbb{E}\left[\exp\left\{ v\lambda_{\max}\left(\sum_{\tau=1}^{t}D_{\tau}\right)\right\} \right]\le & \mathbb{E}\left[\text{Tr}\left\{ \exp\left(v\sum_{\tau=1}^{t}D_{\tau}\right)\right\} \right]\\
= & \text{Tr}\mathbb{E}\left[\exp\left(v\sum_{\tau=1}^{t}D_{\tau}\right)\right]\\
= & \text{Tr}\mathbb{E}\left[\exp\left(v\sum_{\tau=1}^{t-1}D_{\tau}+\log\exp(vD_{t})\right)\right].
\end{align*}
By Lieb's theorem \cite{tropp2015introduction} the mapping $D\mapsto\exp(H+\log D)$
is concave on positive symmetric matrices for any symmetric positive
definite $H$. 
By Jensen's inequality,
\[
\text{Tr}\mathbb{E}\left[\exp\left(v\sum_{\tau=1}^{t-1}D_{\tau}+\log\exp(vD_{t})\right)\right]\le\text{Tr}\mathbb{E}\left[\exp\left(v\sum_{\tau=1}^{t-1}D_{\tau}+\log\CE{\exp(vD_{t})}{\mathcal{F}_{t-1}}\right)\right]
\]
By Hoeffding's lemma, 
\[
e^{vx}\le\frac{b-x}{2b}e^{-vb}+\frac{x+b}{2b}e^{vb}
\]
for all $x\in[-b,b]$. Because the eigenvalue of $D_{\tau}$ lies
in $[-b,b]$, we have 
\begin{align*}
\CE{\exp(vD_{t})}{\mathcal{F}_{t-1}}\preceq & \CE{\frac{e^{-vb}}{2b}\left(bI_{d}-D_{t}\right)+\frac{e^{vb}}{2b}\left(D_{t}+bI_{d}\right)}{\mathcal{F}_{t-1}}\\
= & \frac{e^{-vb}+e^{vb}}{2}I_{d}\\
\preceq & \exp(\frac{v^{2}b^{2}}{2})I_{d}.
\end{align*}
Recursively,
\begin{align*}
\mathbb{E}\left[\exp\left\{ v\lambda_{\max}\left(\sum_{\tau=1}^{t}D_{\tau}\right)\right\} \right]\le & \text{Tr}\mathbb{E}\left[\exp\left(v\sum_{\tau=1}^{t-1}D_{\tau}+\log\CE{\exp(vD_{t})}{\mathcal{F}_{t-1}}\right)\right]\\
\le & \text{Tr}\mathbb{E}\left[\exp\left(v\sum_{\tau=1}^{t-1}D_{\tau}+\frac{v^{2}b^{2}}{2}I_{d}\right)\right]\\
\le & \text{Tr}\mathbb{E}\left[\exp\left(v\sum_{\tau=1}^{t-2}D_{\tau}\!+\!\frac{v^{2}b^{2}}{2}I_{d}\!+\!\log\CE{\exp(vD_{t-1})}{\mathcal{F}_{t-2}}\right)\right]\\
\le & \text{Tr}\mathbb{E}\left[\exp\left(v\sum_{\tau=1}^{t-2}D_{\tau}+\frac{2v^{2}b^{2}}{2}I_{d}\right)\right]\\
\vdots & \vdots\\
\le & \text{Tr}\exp\left((\frac{tv^{2}b^{2}}{2})I_{d}\right)\\
= & \exp\left(\frac{tv^{2}b^{2}}{2}\right)\text{Tr}\left(I_{d}\right)\\
= & d\exp\left(\frac{tv^{2}b^{2}}{2}\right).
\end{align*}
Thus we have 
\[
\mathbb{P}\left(\lambda_{\max}\left(\sum_{\tau=1}^{t}D_{\tau}\right)\ge x\right)\le d\exp\left(-vx+\frac{tv^{2}b^{2}}{2}\right).
\]
Minimizing over $v>0$ gives $v=x/(tb^{2})$ and
\[
\mathbb{P}\left(\lambda_{\max}\left(\sum_{\tau=1}^{t}D_{\tau}\right)\ge x\right)\le d\exp\left(-\frac{x^{2}}{2tb^{2}}\right),
\]
which proves the lemma.
\end{proof}

\begin{lem}[Threshold for logarithmic inequality.]
\label{lem:logt} For $a>1/2$ and $b>e^{2}$, $t\ge4a\left(1+\log\frac{2a\sqrt{b}}{e}\right)$
implies $t\ge a\log bt^{2}$. 
\end{lem}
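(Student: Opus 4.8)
The plan is to show that the function $f(t):=t-a\log(bt^2)=t-a\log b-2a\log t$ is nonnegative for all $t$ at least the stated threshold $t_0:=4a\left(1+\log\frac{2a\sqrt b}{e}\right)$. First I would note that $f'(t)=1-2a/t$, so $f$ is increasing on $(2a,\infty)$. Since $a>1/2$ and $b>e^{2}$ force $\frac{2a\sqrt b}{e}>1$ and hence $1+\log\frac{2a\sqrt b}{e}>1$, we get $t_0>4a>2a$; therefore $f$ is increasing on $[t_0,\infty)$ and it suffices to verify the single inequality $f(t_0)\ge 0$.

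To make the threshold transparent I would substitute $c:=\frac{2a\sqrt b}{e}$, so that $t_0=4a(1+\log c)$ and $\log b=2\log\sqrt b=2\bigl(\log c+1-\log(2a)\bigr)$. The inequality $f(t_0)\ge 0$, i.e. $t_0\ge a\log b+2a\log t_0$, then divides through by $a>0$ and, after substituting this expression for $\log b$, collapses to $1+\log c+\log(2a)\ge\log t_0$, equivalently $\log(2ace)\ge\log t_0$, i.e. $t_0\le 2ace$. Writing $t_0=2a\cdot 2(1+\log c)$ and $2ace=2a\cdot ce$, this is exactly the scalar inequality $2(1+\log c)\le ce$.

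The remaining auxiliary inequality $ce\ge 2(1+\log c)$ for $c\ge 1$ I would settle by a second monotonicity argument: set $g(c):=ce-2-2\log c$, so that $g'(c)=e-2/c\ge e-2>0$ on $[1,\infty)$, while $g(1)=e-2>0$; hence $g(c)>0$ for all $c\ge 1$. Since $c>1$ in our setting, this yields $2(1+\log c)\le ce$, which by the reductions above gives $f(t_0)\ge 0$, and combined with monotonicity gives $f(t)\ge 0$ for all $t\ge t_0$, completing the proof.

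The only mild obstacle is the bookkeeping in the substitution step: one must correctly expand $\log b$ in terms of $c$ and $2a$ and track that every factor of $a$ cancels, leaving the clean inequality $2(1+\log c)\le ce$. Everything else is a routine derivative check, and the key idea is simply recognizing the substitution $c=\frac{2a\sqrt b}{e}$, which is precisely the quantity the threshold was engineered around.
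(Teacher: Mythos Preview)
Your proof is correct and takes a genuinely different, more elementary route than the paper. The paper identifies the unique $\tilde t\ge 1$ solving $\tilde t=a\log(b\tilde t^{2})$, expresses it in closed form via the Lambert $W$ function as $\tilde t=-2aW\bigl(-\tfrac{1}{2a\sqrt b}\bigr)$, and then invokes a cited bound on $W$ (Chatzigeorgiou, 2013) to conclude $\tilde t\le 2a\bigl(1+\sqrt{\log\tfrac{2a\sqrt b}{e}}\bigr)^{2}\le 4a\bigl(1+\log\tfrac{2a\sqrt b}{e}\bigr)$. You instead work directly with $f(t)=t-a\log(bt^{2})$: monotonicity on $(2a,\infty)$ reduces everything to the single value $f(t_0)$, and the substitution $c=\tfrac{2a\sqrt b}{e}$ collapses $f(t_0)\ge 0$ to the self-contained one-variable check $ce\ge 2(1+\log c)$ for $c\ge 1$. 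Your approach is shorter and avoids the external Lambert-$W$ machinery; the paper's approach has the virtue of explaining where the threshold comes from (it is essentially the inversion of $t\mapsto a\log bt^{2}$, which is exactly what $W$ computes), but for establishing the lemma your direct verification is cleaner.
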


\begin{proof}
If $b>e^{2}$ the function $x\mapsto(a/x)\log bx^{2}$ has negative
derivatives and is decreasing on $x\ge1$. Then, there exists a unique
$\tilde{t}\ge1$ such that $1=(a/\tilde{t})\log b(\tilde{t})^{2}$.
Now, it is sufficient to show that 
\begin{equation}
\tilde{t}\le2a\left(1+\sqrt{\log(2a\sqrt{b})/e}\right)^{2}\le4a\left(1+\log\frac{2a\sqrt{b}}{e}\right).\label{eq:logt_1}
\end{equation}
Let $W:[-1/e,0)\to\Real$ denote the Lambert function which satisfies
$W(x)e^{W(x)}=x$ for $x\in[-1/e,0)$. By definition of $\tilde{t}$,
\begin{align*}
-\frac{1}{2a\sqrt{b}}=e^{\log\frac{1}{\sqrt{b}\tilde{t}}}\log\frac{1}{\sqrt{b}\tilde{t}}\Longrightarrow & W\left(-\frac{1}{2a\sqrt{b}}\right)=\log\frac{1}{\sqrt{b}\tilde{t}}\\
\Longrightarrow & \tilde{t}=\frac{1}{\sqrt{b}}\exp\left(-W\left(-\frac{1}{2a\sqrt{b}}\right)\right).
\end{align*}
By definition of $W$, 
\[
\tilde{t}=-\frac{2a\sqrt{b}}{\sqrt{b}}W\left(-\frac{1}{2a\sqrt{b}}\right)=-2aW\left(-\frac{1}{2a\sqrt{b}}\right).
\]
By Theorem 1 in \citet{chatzigeorgiou2013bounds}, $-W(-e^{-u-1})<1+\sqrt{2u}+u\le(1+\sqrt{u})^{2}$
for $u>0$. Setting $u=\log\frac{2a\sqrt{b}}{e}$ proves \eqref{eq:logt_1}. 
\end{proof}

\section{Limitation}
\label{sec:limitation}
Although our main contribution is novel and improves current linear bandit algorithms, we found the following limitations are in need to be handled in the future work:
\begin{itemize}
    \item The number of exploration $T_\gamma$ can dominate the sample complexity in Theorem~\ref{thm:sample} when the problem complexity gaps are large.
    The term $T_\gamma$ does not have problem complexity gap and is not considered as the main term in theoretical analysis; while our proposed estimator may not efficient for the large gaps in practice.
    \item Our PFI comparison experiments are limited to MAB setting, although we design our algorithm for general contexts with possibly exponentially large number of arms.
    We choose the MAB setting for the sake of comparison with the previous algorithm~\citep{auer2016pareto}; we believe the superior performance of our algorithm may be drastically visible on general contexts with large number of arms.
\end{itemize}

%%%%%%%%%%%%%%%%%%%%%%%%%%%%%%%%%%%%%%%%%%%%%%%%%%%%%%%%%%%%

\end{document}